\renewcommand*{\eqref}[1]{%
  \hyperref[{#1}]{\textup{\tagform@{\ref*{#1}}}}%
}
\newtheorem{definition}{Definition}[section]
\newtheorem{lemma}[definition]{Lemma}
\newtheorem{thm}[definition]{Theorem}
\newtheorem{coro}[definition]{Corollary}
\newtheorem{remark}[definition]{Remark}
\newtheorem{example}[definition]{Example}
\newtheorem{thm*}{Theorem}[section]
\numberwithin{equation}{section}
\newcommand{\pr}{\partial}
\newcommand{\veps}{\varepsilon}
\newcommand{\lm}[2]{\lim\limits_{#1\to #2}}
\newcommand{\ovr}[1]{\overline{#1}}
\newcommand{\ddt}[2]{\dfrac{d #1}{d #2}}
\def \l{\ell}
\def\L{\mathcal{L}}
\def\R{\mathbb{R}}
\def\la{\langle}
\def\ra{\rangle}
\def\({\left(}
\def\){\right)}
\def\a{\alpha}
\def\b{\beta}
\def\To{\longrightarrow}
\def\graph{\textnormal{graph}}
\def\s{\sigma}
\def\g{\gamma}
\def\V{\mathcal{V}}
\def\bS{\mathbb{S}}
\def\Y{\mathcal{Y}}
\def\spr{\textnormal{spr}}
\def\k{\kappa}
\def\tell{\widetilde{\ell}}
\def\tL{\widetilde{L}}
\def\condL{\underline{L}}
\def\tcondL{\widetilde{\underline{L}}}
\def\bn{\mathbf{n}}
\def\heta{\widehat{\eta}}
\def\std{\textnormal{std}}
\def\tvarrho{\widetilde{\varrho}}
\def\tlambda{\widetilde{\lambda}}
\def\defeq{\coloneqq}
\def\rec{\textnormal{rec}}
\def\interior{\textnormal{int}}
\def\u{\mathbf{u}}
\def\dom{\textnormal{dom}}
\def\lb{[\hspace{-.35ex}[}
\def\rb{]\hspace{-.35ex}]}
\def\rint{\textnormal{relint}}
\begin{document}
\title[The Geometry of Mixability]{The Geometry of Mixability}

\author[{Cabrera~Pacheco}]{Armando J. {Cabrera~Pacheco}}
\email{a.cabrera@uni-tuebingen.de}
\address{Universt\"{a}t T\"{u}bingen, T\"{u}bingen AI Center}

\author[Williamson]{Robert C. Williamson}
\email{bob.williamson@uni-tuebingen.de }
\address{Universt\"{a}t T\"{u}bingen, T\"{u}bingen AI Center}

\begin{abstract}
Mixable loss functions are of fundamental importance in the context of prediction with expert advice in the online setting since they characterize fast learning rates. By re-interpreting properness from the point of view of differential geometry, we provide a simple geometric characterization of mixability for the binary and multi-class cases: a proper loss function $\ell$ is $\eta$-mixable if and only if the superpredition set $\spr(\eta \ell)$ of the scaled loss function $\eta \ell$ slides freely inside the superprediction set $\spr(\ell_{\log})$ of the log loss $\ell_{\log}$, under fairly general assumptions on the differentiability of $\ell$. Our approach provides a way to treat some concepts concerning loss functions (like properness) in a ``coordinate-free'' manner and reconciles previous results obtained for mixable loss functions for the binary and the multi-class cases. 
\end{abstract}

\maketitle

\thispagestyle{empty}

\section{Introduction}\label{section-intro}

In the context of \emph{prediction with expert advice} as described by Vovk in~\cite{Vovk_game} and~\cite{Vovk_competitive}, an information game is considered between three players: the \emph{learner}, $n \in \mathbb{N}$ \emph{experts} and \emph{nature}. At each step $t \in \mathbb{N}$,
\begin{itemize} 
\item each expert makes a prediction which the learner is allowed to see,
\item the learner makes a prediction,
\item nature chooses an outcome,
\item for a fixed \emph{loss function} $\ell$, the cumulative loss is calculated for the learner and each of the experts.
\end{itemize} 
The goal is to minimize the difference between the learner's loss and the best expert's loss, which is often called the \emph{regret}.

\subsection{Mixable games and characterizations of mixable and fundamental loss functions} 

For a wide class of games, called \emph{$\eta$-mixable games} for $\eta>0$, the \emph{Aggregating algorithm} (see for example~\cite{Vovk_competitive}) ensures an optimal bound for the regret ($\eta^{-1} \ln n$) independent of the trial $t$. Since the mixability of a game depends on the loss function $\ell$, a loss function $\ell$ is \emph{$\eta$-mixable} if the corresponding game is mixable. Since arguably the aggregating algorithm is one of the most well founded and studied prediction algorithms, there is a natural interest in understanding properties and characterizations of mixable loss functions.

Examples of mixable loss functions include the log loss, relative entropy for binary outcomes~\cite{H-K-W_seq} and the Brier score~\cite{V-Z_proj,E-R-W_curvature}. Mixability of a loss function $\ell$ is characterized by a ``stronger convexity'' of the superprediction set of $\ell$, which can be described as the convexity of the superprediction set of $\ell$ after an ``exponential projection'' (see~\eqref{eq-eta-exp-projection} below and~\cite{Vovk-logFund} and~\cite{E-R-W_curvature}). 
Unfortunately, this characterization of mixability lacks a transparent geometric interpretation.

The main goal of this work is to provide such geometric interpretation. The motivation stems from an observation made by Vovk in~\cite{Vovk-logFund}: a $\eta$-mixable loss can be characterized as the positiveness of the infimum of the quotient of the curvatures of the a strictly proper loss function $\ell$ and the log loss $\ell_{\log}$ for binary outcomes. Here as usual, loss functions are defined on the 2-simplex $\Delta^2$ (see \eqref{eq-simplex}). Moreover, he then proves that \emph{fundamentality} (see Vovk~\cite{Vovk-logFund}) of a loss can be characterized as the finiteness of the supremum of the same quotient of curvatures. These two results suggest that these properties are \emph{geometric}, meaning that they can be studied using differential geometry tools, and in this regard, mixability and fundamentality should not depend on the coordinates chosen to express them. 

Loosely speaking, in convex geometry a convex set $L$ is said to \emph{slide freely inside a convex set $K$}, if for any point $x$ in the boundary of $K$, there is a translation vector $y$ such that the translation of $L$ by $y$ (i.e., the Minkowski sum $L+y$, see \eqref{eq-min-sum}), intersects $K$ at $x$, and $L+y \subset K$. We provide the following geometric characterization of mixability and fundamentality, as a geometric comparison to the log loss (see Figure~\ref{fig-slinding-1dim}). Let $\spr(\ell)$ denote the superprediction set of a loss function $\ell$ (see \eqref{eq-def-spr}).

\begin{thm}[Informal statement] \label{thm-informal-main}
A continuously twice differentiable proper loss function is $\eta$-mixable if and only there is $\eta > 0$ such that $\spr(\eta\ell)$ slides freely inside $\spr(\ell_{\log})$. In addition, the same $\ell$ is fundamental if and only if there exists $\g>0$ such that $\spr(\ell_{\log})$ slides freely inside $\spr(\g \ell)$. 
\end{thm}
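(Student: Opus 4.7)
The plan is to recast the ``slides freely'' relation as a pointwise comparison of curvatures of the boundaries $\pr\,\spr(\eta\ell)$ and $\pr\,\spr(\ell_{\log})$, and then to combine that comparison with the curvature-quotient characterization of mixability and fundamentality recalled in the introduction.

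First, I would use properness to parameterize both superprediction boundaries by the relative interior of the simplex. For a smooth strictly proper loss $\ell$, the point $\ell(p)$ lies on $\pr\,\spr(\ell)$ and the outward unit normal there is, up to normalization, the probability vector $p$ itself. This yields a canonical Gauss-map identification: $\ell(p) \in \pr\,\spr(\ell)$ and $\ell_{\log}(p) \in \pr\,\spr(\ell_{\log})$ are declared corresponding precisely because they share the outward normal direction $p$. This identification is the ``coordinate-free'' viewpoint advertised in the abstract and is what allows two distinct loss surfaces to be compared intrinsically.

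Second, I would invoke the classical convex-geometric characterization of sliding freely (see, e.g., Schneider's Brunn--Minkowski monograph): for closed convex sets $L, K \subset \R^n$ with smooth strictly convex boundary and a common recession cone, $L$ slides freely inside $K$ if and only if, at every pair of boundary points related by the Gauss map, the second fundamental form of $\pr L$ dominates that of $\pr K$ as a symmetric bilinear form. In the binary case this reduces to the scalar inequality $\k_L \geq \k_K$, matching Vovk's setting. The unboundedness of the superprediction sets is harmless because both have recession cone equal to the positive orthant, so only the loss-surface portion of the boundary is tested by the sliding condition.

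Third, I would combine the scaling $\spr(\eta\ell) = \eta\,\spr(\ell)$, which multiplies principal curvatures by $1/\eta$, with Vovk's characterization. Strict positivity of the infimum of the curvature quotient of $\ell$ and $\ell_{\log}$ is then equivalent, via the Gauss-map identification, to the existence of $\eta>0$ for which the second fundamental form of $\pr\,\spr(\eta\ell)$ dominates that of $\pr\,\spr(\ell_{\log})$ everywhere; by the second step this is exactly the sliding-freely condition. The fundamentality statement is symmetric: finiteness of the supremum of the same curvature quotient delivers a $\g>0$ making the reversed curvature inequality hold, i.e.\ $\spr(\ell_{\log})$ slides freely inside $\spr(\g\ell)$.

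The principal obstacle I anticipate is the multi-class generalization of Vovk's scalar curvature-quotient characterization. Promoting ``quotient of curvatures'' to the matrix setting requires replacing it with the domination of second fundamental forms as symmetric bilinear forms, and then showing that the exponential-projection definition of $\eta$-mixability is equivalent to such a form-level comparison via the link function supplied by properness. Once that equivalence is in place, the classical sliding-freely theorem applies verbatim, with the Gauss-map parameterization furnished by properness serving as the conceptual ingredient that makes the entire reduction go through.
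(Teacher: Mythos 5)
Your outline follows the same architecture as the paper: identify corresponding boundary points of $\spr(\eta\ell)$ and $\spr(\ell_{\log})$ through the normal direction $p$ supplied by properness, translate ``slides freely'' into a pointwise domination of second fundamental forms via support-function arguments in the spirit of Schneider, and use the scaling law $h^{\eta\ell}=\eta^{-1}h^{\ell}$ to absorb the constant. The difficulty is that you have left the central step as an acknowledged ``obstacle'' rather than proving it: the equivalence between the exponential-projection definition of $\eta$-mixability (convexity of $E_{\eta}(\spr(\ell))$) and the form-level inequality $h^{\eta\ell}(\eta\ell(p))\geq h^{\lambda}(\lambda(p))$ at Gauss-corresponding points. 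In the binary case you can lean on Vovk's curvature-quotient characterization, but in the multi-class case there is no such off-the-shelf result; this equivalence is precisely the paper's Lemma~\ref{lemma-mix-comparison-h}, which requires writing $M_{\ell}$ locally as a graph (Lemma~\ref{lemma-graph-ell}), computing the second fundamental form of the projected surface $E_{\eta}(M_{\ell})$ explicitly, and using the fact that properness forces the induced metrics of $\ell$ and the translated log loss to coincide at the compared points (Remark~\ref{rmk-metric-coincides}) so that the matrix comparison is even meaningful. Without that computation the ``if and only if'' in the multi-class statement is not established, so the proposal as written proves the binary case modulo Vovk and only sketches the general one.

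Two smaller points. First, the ``classical sliding-freely theorem'' you invoke is stated in Schneider for convex bodies; your sets are unbounded with recession cone $\R^n_{\geq 0}$, so the chain slides freely $\Leftrightarrow$ summand $\Leftrightarrow$ form domination needs the adaptation the paper carries out (the class $\mathcal{K}^n_*$, support functions with domain $\R^n_{<0}\cup\{0\}$, and the local-to-global argument of Theorem~\ref{thm-local-to-global} via Theorem~\ref{thm-convex-fnc-char}); this is routine but not ``verbatim.'' Second, for fundamentality your symmetric argument matches the paper's binary treatment (Theorem~\ref{thm-geometric-fundamentality}), where finiteness of the supremum of the curvature quotient yields the reversed inclusion $\spr(\ell_{\log})\subset\spr(\gamma\ell+y_p)$; that part of your plan is sound as stated.
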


\begin{center}
\begin{figure}[h!]
\includegraphics[scale=1.8]{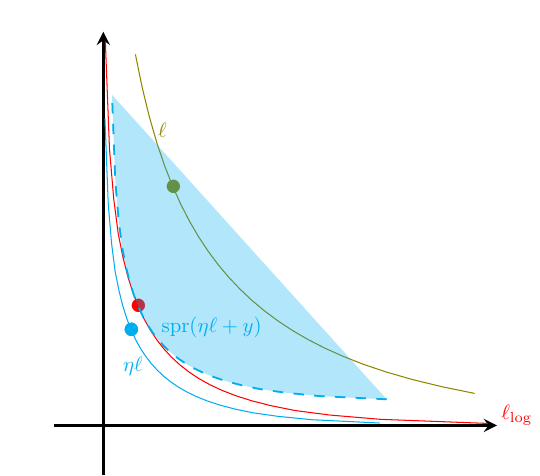}
\caption{We abuse notation and denote the image of a loss function $\ell$ by simply $\ell$. The figure shows how the superprediction set of a translation of the scaling of $\ell$ slides freely inside the $\spr(\ell_{\log})$. The bullet points are located at the image of $p \in \Delta^2$. \label{fig-slinding-1dim}}
\end{figure}
\end{center}

To obtain the previous theorem it is necessary to re-interpret properness from a differential geometry point of view, which constitutes a big part of this work. However, this technical effort pays off. In~\cite{E-R-W_curvature}, van Erven, Reid and Williamson characterized $\eta$-mixable (differentiable) loss functions for multi-class loss functions and moreover, related $\eta$ to the Hessian of the \emph{Bayes risk} of $\ell$ and the log loss (see Definition~\ref{def-Bayes-risk}), which is interpreted as its curvature. By generalizing the tools developed here for the binary case, we were able to obtain a multi-class analog result to Theorem~\ref{thm-informal-main} and to build a bridge to the results in~\cite{E-R-W_curvature}.

\subsection{Description of results and structure of the article}

Using the same setting as~\cite{E-R-W_curvature}, we obtain a geometric characterization of $\eta$-mixable loss functions in the sense of differential geometry. Loss functions are considered to be maps $\ell \colon \Delta^n \To \R^n_{\geq 0}$, which under the conditions assumed in this work, give rise to submanifolds $\ell(\rint(\Delta^n))$ of $\R^{n}$ whose geometric properties are determined by $\ell$ (see the relevant precise definitions below). We first discuss the case $n=2$ (binary classification loss functions) since it is more instructive, and then the case $n \geq 2$. We summarize the main results as follows.
\begin{enumerate}
\item We recast the notion of a (strictly) proper loss as a geometric property of the loss itself rather than its superprediction set. That is, properness is no longer considered a parametrization dependent property, it is a statement about the geometric properties of the ``loss surface'' $\ell(\rint(\Delta^n))$ (the boundary of the superprediction set). See lemmas~\ref{lemma-sproper-char} and~\ref{lemma-char-proper-n}.
\item A geometric comparison is performed. For $n=2$ in terms of the curvature of the ``loss curves'' (see Section~\ref{subsection-curvature-curves} below), and for $n \geq 2$ in terms of the scalar second fundamental form of the ``loss surfaces'' (see Section~\ref{section-multiclass} and Appendix~\ref{appendix-DG}), which measure how they curve inside $\R^n$. The precise statements are given in Lemma~\ref{lemma-weight-and-curvature} and Lemma~\ref{lemma-mix-comparison-h}. Intuitively, these results tell us how the superprediction set of $\ell$ sits inside the superprediction set of the log loss.
\item Finally, we interpret our result from the point of view of convex analysis to give a new characterization of mixability. More precisely, We show that a (strictly) proper loss function $\ell$ is $\eta$-mixable if and only if the superprediction set of $\ell$ \emph{slides freely} (see Definition~\ref{def-slides-freely}) inside the superprediction set of the log loss.
\end{enumerate}

As byproducts, we obtain a general way to define mixability with respect to a fixed (strictly) proper loss function, further properties and consequences for binary classification loss functions, particularly for composite losses and canonical links, and a bridge to the results obtained in~\cite{E-R-W_curvature}.

Since we treat loss functions from the point of view of differential geometry and convex geometry, a considerable background in these topics is needed. We present this work as self-contained as possible and spend some time providing the intuition and motivation for the results (and sometimes the background) which naturally results in a longer exposition. In Section~\ref{section-binary} we treat the binary case, in Section~\ref{section-multiclass} the multi-class case to obtain the geometric interpretation of properness and mixability and perform the geometric comparison (in terms of curvature). In Section~\ref{section-convexG} we make the connections to convex geometry and obtain the geometric characterization of mixability in terms of the sliding freely conditions of superprediction sets.

\subsection{Setup}

Here we summarize our setup, for more details see~\cite{E-R-W_curvature}. Denote by $[n]$ the set of natural numbers $\{1,...,n\}$. The set of probability distributions on a finite set $\Y$ with $|\Y|=n \in \mathbb{N}$ is given by
\begin{equation} \label{eq-simplex}
\Delta^n = \left\{ (p_1,...,p_{n}) \in \R^n \, \bigg| \, \sum_{i=1}^n p_i = 1  \right\}.
\end{equation}

We note that $\Delta^n$ is a manifold with (non-smooth) boundary of dimension $n-1$. Moreover, $\Delta^n$ is a hypersurface in $\R^n$; we denote the interior (as a manifold) of $\Delta^n$ as $\interior(\Delta^n)$ which is the same set as the \emph{relative interior} $\rint(\Delta^n)$ of $\Delta^n$. We define the \emph{standard parametrization of $\Delta^n$} as the map $\Phi_{\std} \colon \Delta^{n-1} \subset \R^{n-1} \To \Delta^n$ given by
\begin{align}\label{def-std-parametrization}
\Phi_{\std}(t_1,...,t_{n-1}) = \( t_1,...,t_{n-1},1-\sum_{i=1}^{n-1} t_i    \).
\end{align}

In particular, when $n=2$ the standard parametrization of $\Delta^2$ is the map $\Phi_{\std} \colon [0,1] \To \Delta^2$ given by $\Phi_{\std}(t) = (t,1-t)$.

\begin{definition}
A \emph{loss function} is a map $\ell \colon   \Delta^n \times \mathcal{Y} \To \R_{\geq 0}$ such that for each $k \in \Y$, the map $\ell(\cdot,k) \colon \Delta^n \To \R$ is continuous.
\end{definition}

Given a loss function $\ell$, $p \in \Delta^n$ and $k \in \mathcal{Y}$, the value $\ell(p,k)$ represents the penalty of predicting $p$ upon observing $k$. We define the \emph{partial losses} of a loss function $\ell$ as the maps $\ell_i \colon \Delta^n \To \R_{\geq 0}$ given by $\ell_i(p) = \ell(p,i)$. A loss function can be described in terms of its partial losses as
\begin{equation*}
\ell(p,k) = \sum_{i=1}^n \lb k=i \rb \ell_i(p).
\end{equation*}

Thus, we can identify a loss fuction $\ell$ with the map $\ell \colon \Delta^n \To \R^n_{\geq 0}$ determined by its partial losses
\begin{equation*}
\ell (p) = \( \ell_1(p),..., \ell_n(p) \).
\end{equation*}
In this work we follow this convention unless stated otherwise. Note that this way we can see a loss function $\ell$ as an embedding of $\interior(\Delta^n)$ into $\R^n_{\geq 0}$ (assuming enough properties on $\ell$). We will see later that properness ensures the image of this embedding to be a nice hypersurface of $\R^n$ with appealing geometric properties. Under the assumption that the outcomes are distributed with probability $p \in \Delta^n$, we make the below definitions following~\cite{E-R-W_curvature,R-W_composite}.

\begin{definition} \label{def-Bayes-risk}
Given a loss function $\ell$, we define the \emph{conditional risk} as the map $L:\Delta^n \times \Delta^n \To \R$ as
\begin{equation*}
L(p,q) \defeq \la \ell(q) , p \ra,
\end{equation*}
and the associated \emph{conditional Bayes risk} as the map  $\condL:  \Delta^n \To \R$ given by
\begin{equation*}
\condL(p) \defeq \inf_{q \in \Delta^n} L(p,q)= \inf_{q \in \Delta^n}  \la \ell(q) , p \ra.
\end{equation*}
\end{definition}

\begin{definition} \label{def-proper-loss}
A loss function $\ell \colon \Delta^n \To \R^n_{\geq 0}$ is said to be \emph{proper} if for any $p \in \Delta^n$
\begin{equation*}
\la \ell(p),p \ra \leq \la \ell(q), p \ra
\end{equation*}
for all $q \in \Delta^n$. In other words, $L(p,\cdot)$ has a minimum at $p$. When $p$ is the only minimum of $L(p,\cdot)$ we say that $\ell$ is \emph{strictly proper}.
\end{definition}

For our geometric considerations it will be useful to denote the image of $\Delta^n$ under $\ell$ by $M_{\ell}$, and impose enough differentiability conditions on $\ell$ so that $M_{\ell}$ is (at least) a $C^2$-manifold. See Definitions~\ref{def-admissible-loss-2} and~\ref{def-admissible-loss-n} below.

We now recall the definition of \emph{mixability} (see for example, Vovk~\cite{Vovk-logFund,E-R-W_curvature}). For $\eta > 0$, let $E_{\eta}\colon \R^{n} \To \R^{n}$ be the \emph{$\eta$-exponential projection} defined as
\begin{equation} \label{eq-eta-exp-projection}
E_{\eta}(y) \defeq (e^{-\eta y_1},...,e^{-\eta y_n}).
\end{equation}

A loss function $\ell$ is called \emph{$\eta$-mixable} if the image of its \emph{superprediction set}, $\spr(\ell)$, given by
\begin{align} \label{eq-def-spr}
\spr(\ell) \defeq \{ \lambda \in [0,\infty)^n \, | \, \textnormal{there is $q \in \Delta^n$ such that $\ell_i(q) \leq \lambda_i$ for $i \in [n]$}  \},
\end{align}
is convex under the $\eta$-exponential projection, that is $E_{\eta}(\spr(\ell)) \subset [0,1]^n$ is convex. We say that $\ell$ is \emph{mixable} if $\ell$ is $\eta$-mixable for some $\eta > 0$.

\begin{definition}
Let $\ell$ be a mixable loss function. The \emph{mixability constant} of $\ell$, $\eta^*_{\ell}$, is defined as
\begin{equation*}
\eta^*_{\ell} \defeq \sup_{\eta>0} \left\{ \eta>0 \, | \, \textnormal{$\ell$ is $\eta$-mixable} \right\}. 
\end{equation*}
\end{definition}

\subsection{Motivation}

In this part we mainly discuss the case $n=2$ since it is more illustrative. It has been made evident that there is a strong relation between properness and mixability. Here we make this relation more explicit and transparent from a geometric point of view. The basic motivation is as follows. It is commonly understood that properness is a property that depends on the parametrization of the boundary of the superprediction set of $\ell$~\cite{Vovk-logFund}. It has been also shown that it is related to the ``curvature'' of the Bayes risk, since it requires that the superprediction set remains convex under the $\eta$-exponential projection given by~\eqref{eq-eta-exp-projection} (with the standard parametrization of the simplex $\Delta^2$)~\cite{B-S-S_binary,R-W_composite,E-R-W_curvature}. Mixability is considered to be a stronger notion of convexity~\cite{Vovk-logFund}, for some $\eta > 0$. The basic observation in this work is that it is possible recast properness from a geometric point of view, i.e., independent of the parametrization of $\Delta^n$. More precisely, we define properness in terms of the loss function viewed as a map $\ell \colon \Delta^n \To \R^n_{\geq 0}$ rather than in terms of the superprediction set $\spr(\ell)$ (as it is usually defined). More precisely, to determine whether a given $\ell$ is proper or not, it is not enough to look at image $\ell(\Delta^n)$ (as the boundary of $\spr(\ell)$) but rather how $\Delta^n$ is mapped into $\R^n_{\geq 0}$ by $\ell$ --- since we will be using tools of differential geometry, we will assume $C^2$ differentiability (see Section \ref{section-binary}). More precisely, restricting first to $n=2$ (see Lemma~\ref{lemma-sproper-char} below), a given loss function $\ell \colon \Delta^2 \To \R^2_{\geq 0}$ will be (strictly) proper if and only if
\begin{enumerate}
\item the normal vector to $\ell(\Delta^n)$ at $\ell(p)$ is equal to $\pm p/|p|$ for all $p \in \interior(\Delta^2)$, and
\item the curvature (see Section~\ref{subsection-curvature-curves} below) at any point $\ell(p)$ with respect to the unit normal vector $\bn = p/|p|$ is strictly positive for all $p \in \interior(\Delta^2)$.
\end{enumerate}

\begin{center}
\begin{figure}[h!]
\includegraphics[scale=.6]{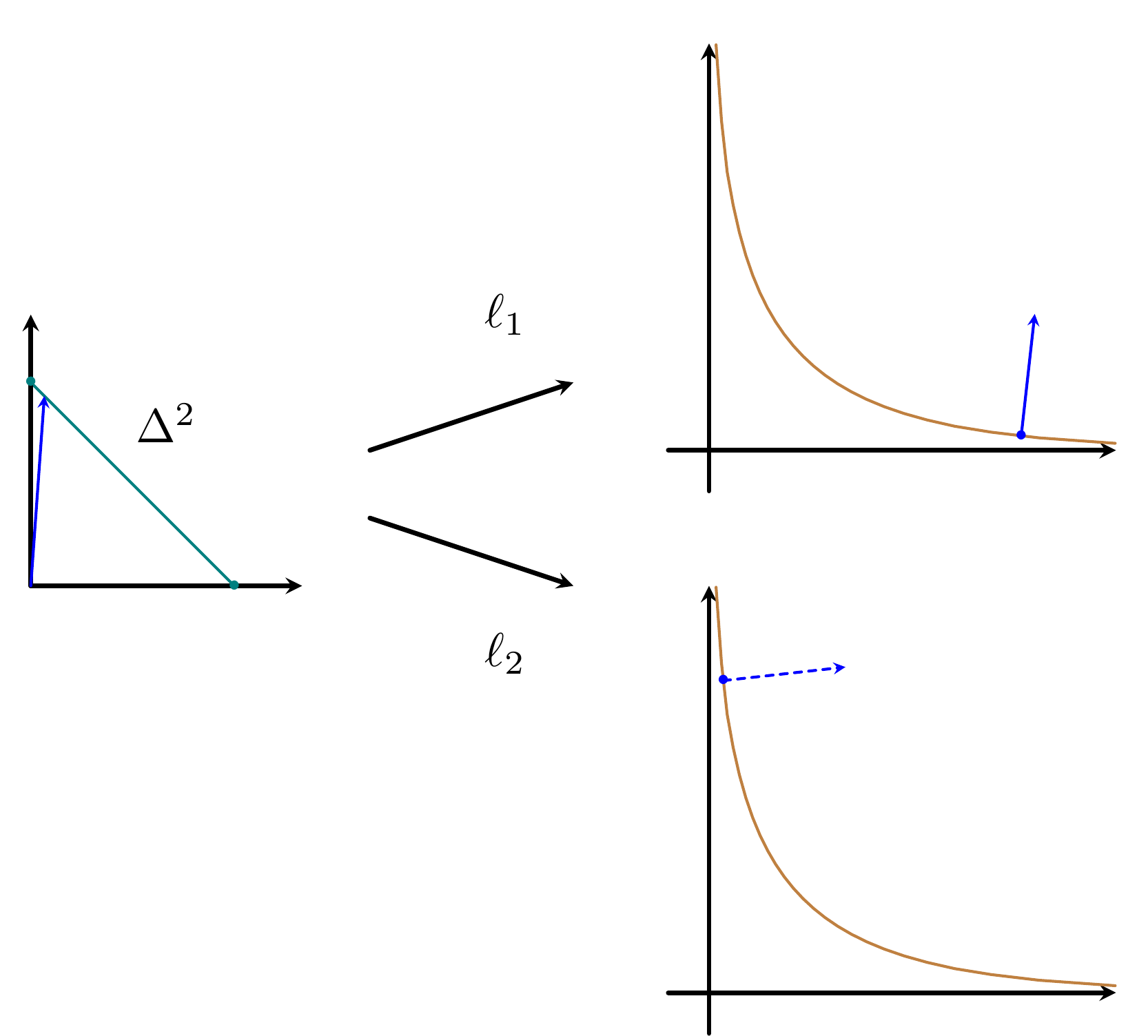}
\caption{Consider the two loss functions given by $\ell_1(p_1,p_2)=(-\log(p_1),-\log(p_2))$ and $\ell_2(p_1,p_2)=(-\log(p_2),-\log(p_1))$, for $p=(p_1,p_2) \in \Delta^2$. Although $\spr(\ell_1)=\spr(\ell_2)$, $\ell_2$ is not proper since the normal vector at $\ell_2(p)$ is not $\pm p/|p|$ for any $p \in \Delta^2$.  \label{fig-propervsnon}}
\end{figure}
\end{center}

As observed in Figure~\ref{fig-propervsnon}, $\spr(\ell_1)=\spr(\ell_2)$, which implies that their boundaries coincide (as a set). In particular, this implies that it is possible to ``parametrize'' the boundary of $\ell_2(\Delta^2)$, $\pr(\ell_2(\Delta^2))$, in the same way as $\pr(\ell_1(\Delta^2))$ in order to have a proper loss. However, note that this changes the map $\ell_2$ and hence from the point of view of this work, this is a different loss function. In practice, one is given a loss function $\ell$ rather than a superprediction set $\spr(\ell)$, therefore we look at losses as individual maps from $\Delta^2$ to $\R^2_{\geq 0}$ instead of looking at their superpredictions sets and obtaining a proper loss by choosing a convenient parametrization of $\pr(\spr(\ell))$.

\begin{remark}
Our strength by characterizing proper loss functions in this way is that we will be able to apply techniques from differential geometry, however, these considerations only work for loss functions which are sufficiently differentiable. For a general set up, it is possible to characterize properness of a loss function in a fairly simple way via the convexity of its superprediction set. More precisely, the ``loss surface'' is the subgradient of the support function of the superprediction set. This was thoroughly studied by Williamson and Cranko in~\cite{C-W_geometry}. We briefly explore some connections to our work in Section~\ref{section-convexG}. Alternative approaches to extending and better understanding mixability include \cite{F-R-W-N_entropic} and \cite{M-W_regret}.
\end{remark}

\subsection{Comments about the curvature of planar curves}\label{subsection-curvature-curves}

The second condition for $\ell$ to be proper mentioned above involves a condition on the curvature of $\ell(\interior(\Delta^2))$. We now make this notion precise. Recall that if $\a(t) = (x_1(t),x_2(t))$ is a $C^2$ curve with $\a'(t)=(x_1'(t),x_2'(t)) \neq (0,0)$ for all $t$ in its domain, then its curvature can be seen a measurement of the variation of its unit normal vector at each point. We define the \emph{canonical normal vector at $\a(t)$}, $\bn^c(t)$, as the unit normal vector in the direction obtained by rotating $\a'(t)$ $90^\circ$ counterclockwise. Then, the \emph{signed curvature of $\k$ at $t$} is defined as
\begin{align} \label{eq-signed-curvature}
\k_{\a}(t) \defeq \frac{x_1''(t)x_2'(t)-x_1'(t)x_2''(t)}{\( x_1'(t)^2 + x_2'(t)^2  \)^{3/2}}.
\end{align}

The interpretation of this number is as follows: $\k_{\a}(t)$ is positive if $\a$ ``curves'' in the direction of $\bn^c(t)$. However, note that at each point we have two normal vectors: $\pm \bn^c(t)$. Thus, $\bn^c(t)$ and $\k_{\a}$ depend on the direction of $\a$ (i.e., $\a'$), and their values differ by a negative sign. Thus, we can talk about the curvature of $\a$ with respect to a chosen unit vector $\bn$ (either choosing $\bn^c$ or $-\bn^c$ for all points, assuming this is possible, which is the case for the curves we will consider here, see Figure~\ref{fig-normals}) and denote it by $\k_{\a}^+$. In the case when $\bn=\bn^c$, then $\k_{\a}^+ = \k_{\a}$, and when $\bn = -\bn^c$, then $\k_{\a}^+ = - \k_{\a}$. Since $\k_{\a}$ is invariant under reparametrizations (up to a sign), we can simply talk at the curvature of $\a$ at a given point $p$ in the image of $\a$. In Section~\ref{section-binary} we make precise our choice in (2) above. For a summary of geometry of curves see Appendix~\ref{appendix-DG}.

\begin{center}
\begin{figure}
\includegraphics[scale=1]{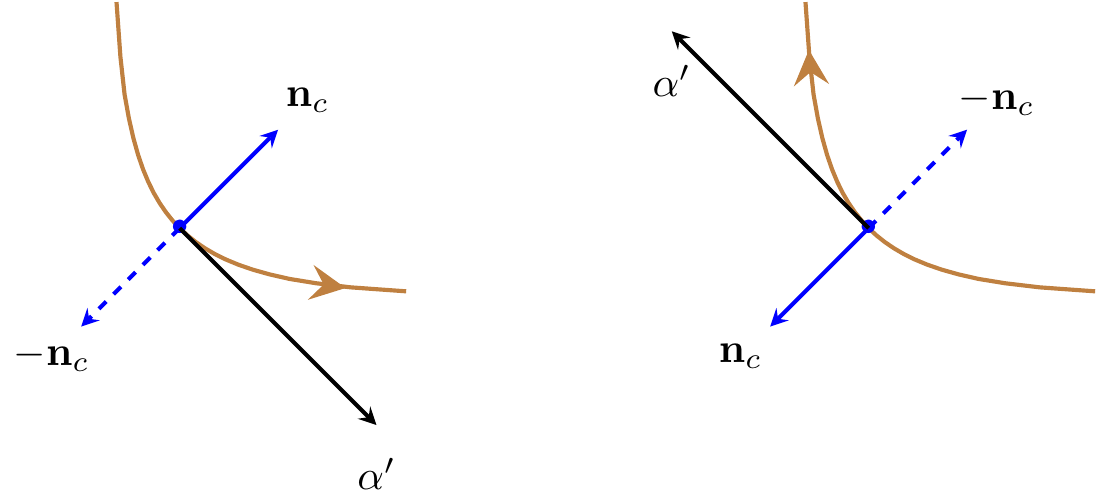}
\caption{For a regular curve $\a$, at each point we have two normal unit vectors.\label{fig-normals}}
\end{figure}
\end{center}

Going back to loss functions, suppose $\ell \colon \Delta^2 \To \R^2_{\geq 0}$ is a loss function. Since $\Delta^2$ is a $1$-manifold, any parametrization around a point (of its interior) can be assumed to be of the form $\Phi \colon (a,b) \subset \R \To \Delta^2$ for some $a < b$. Thus, the \emph{local expression} of $\ell$ under this parametrization $\tell = \ell \circ \Phi$ is a curve in $\R^2$. By changing $\Phi$ around the same point, we are \emph{reparametrizing} $\tell$. Since curvature is independent of coordinates (i.e., of the $\Phi$ used) up to a sign, we can define the curvature of the \emph{loss curve} $\ell(\interior(\Delta^2))$ with respect to a chosen unit normal vector (which will depend only on $\ell$). To compute it from its definition in~\eqref{eq-signed-curvature}, we need to choose a parametrization $\Phi$, and as we will see, many times it is convenient to take $\Phi= \Phi_{\std}$.

\begin{remark}
One could avoid part of the technical complications above by choosing beforehand $\Phi = \Phi_{\std}$, as it is usually implicitly done, and then requiring $\ell_1$ and $\ell_2$ to be monotone (cf.~\cite{B-S-S_binary,R-W_composite,S-A-M,Vovk-logFund}) -- essentially, this amounts to choosing  ``direction'' for the admissible loss curves. Although this approach is appealing since the curve parameter ($t$ in our case) can be directly interpreted as a probability, and moreover it simplifies calculations since in this case the convention can be chosen so that the signed curvature coincide with $\k^+$ (see for example~\cite{Vovk-logFund}), when considering the multi-class case, the notion of ``direction'' breaks down and it is not clear which properties of $M_{\ell}=\ell(\Delta^n)$ one should consider. The approach we consider here gives a concrete logical path to a generalization to the multi-class case (see Section~\ref{section-multiclass}).
\end{remark}

\subsection{Reconciling this point of view with previous works} \label{subsection-reconciliation}

In this part we explain how to ``translate'' the results we obtain here to previous results regarding proper losses and mixability. We do this in particular with~\cite{R-W_composite} and~\cite{Vovk-logFund}.
\begin{itemize}
\item {\bf Reid--Williamson~\cite{R-W_composite}.} Let $\Phi=\Phi_{\std}$. The parameter $\heta$ in~\cite{R-W_composite} corresponds to the parameter $t$ here, $\ell_{1}(\heta)$ and $\ell_{-1}(\heta)$ correspond to $\tell_1(t)$ and $\tell_2(t)$, respectively. Although the regularity assumption in~\cite{R-W_composite} is initially only differentiability of the partial losses, when discussing the \emph{weight} of a loss function they impose $C^2$ regularity. From Theorem~1 in~\cite{R-W_composite}, we see that a loss $\ell$ is proper if (in particular) $\ell_{-1}' >0$ and $\ell_0' < 0$. We can heuristically say that $\ell$ goes from ``right'' to ``left''. This means that in this case, $\k^+_{\ell}(\heta)=-\k_{\ell}(\heta)$. The log loss in this case is $\ell_{\log}(\heta) = \(-\ln(\heta),-\ln(1-\heta) \)$.
\item {\bf Vovk~\cite{Vovk-logFund}.} In~\cite{Vovk-logFund} the loss functions are defined as maps $(\lambda_0(p),\lambda_1(p))$, with $\lambda_0$ increasing and $\lambda_1$ decreasing (infinite differentiable). In this case, heuristically, losses go from ``left'' to ``right'' so that $\k^+_{\lambda}(p)=\k_{\lambda}(p)$. To relate this convention to ours, we set $\Phi(t)=(1-t,t)$. Then the parameter $p$ in~\cite{Vovk-logFund} corresponds to $t$ and $\lambda_0$ and $\lambda_1$ correspond to $\tell_1$ and $\tell_2$. The log loss is then given by $\lambda(p) = \( -\ln(1-p),-\ln(p)\)$.
\end{itemize}

Therefore, from our point of view, in previous works there is an implicit choice of a parametrization of $\Delta^2$, particularly motivated to interpret the parameter as a probability. However, it is well known that sometimes this might not be the case and a \emph{link function} is needed~\cite{R-W_composite} -- this fits well with our approach as a link function for us is a different choice of parametrization; this will carefully explained in Section~\ref{section-links}. In favor of the study of loss functions using tools from differential geometry we are then motivated to eliminate this choice of parametrization and consider $\ell$ as a map between manifolds (namely, $\interior(\Delta^2)$ and $\ell(\interior(\Delta^2))$ as a submanifold of $\R^2$). Although picking a general parametrization of $\Delta^2$ complicates the interpretation of the parameter, it makes other properties of loss functions transparent. This approach has, to the knowledge of the authors, never been explored. We remark that, however, one can always set $\Phi=\Phi_{\std}$ and reinterpret the results of this work as the parameter being a probability. With this geometric characterization of loss functions and properness at hand we continue to study mixability.

\section{Properness and Mixability for Binary Classification}\label{section-binary}

We first restrict our discussion to binary classification, i.e., setting $n=2$. Thus, we consider maps $\ell \colon \Delta^2 \To \R^2_{\geq 0}$, where $\Delta^2 = \{ (p_1,p_2) \in \R^2 \, | \, p_1+p_2 =1 \}$, with partial losses $\ell_1(p)$ and $\ell_2(p)$. In this case the standard parametrization of $\Delta^2$ is given by $\Phi_{\std}(t)=(t,1-t)$ for $t \in [0,1]$. When a parametrization of $\Delta^2$, say $\Phi$, is chosen, then the local expression of $\ell$ with respect to $\Phi$ ($\tell = \ell \circ \Phi$) is a map from some interval $I \subset \R$ to $\R^2$, that is, a curve in the plane $\R^2$. 

Dating back to~\cite{H-K-W,Vovk_game} it has been established that properness of a loss function imposes strong conditions on the first and second derivatives of their partial losses. In~\cite{Vovk-logFund} these relations were expressed by means of the curvature of the loss curve. Moreover, in~\cite{B-S-S_binary,R-W_composite} properness is related to the second derivative of its Bayes risk, which in a way can be interpreted as its curvature. However, in these works there is always an implicit choice of parametrization of $\Delta^2$, which in turn imposes certain restrictions on the ``admissible'' loss functions, particularly making the results parametrization dependent. In this section, we first recast properness as a geometric property which allows us to obtain results in a parametrization (or coordinate) independent way.

\begin{definition} \label{def-admissible-loss-2}
An \emph{admissible loss function} is a map $\ell \colon \Delta^2 \To \R^2_{\geq 0}$ such that
\begin{enumerate}[(i)]
\item $\ell(\interior(\Delta^2)) \subset \R^2_{\geq 0}$ is a $1$-manifold of class $C^2$,
\item there exists a differentiable map $\mathbf{n} \colon \ell(\interior(\Delta^2)) \to N\ell(\interior(\Delta^2))$, $\mathbf{n}(\ell(p))=\mathbf n_{\ell(p)}$, where $N\ell(\Delta^2)$ is the normal space of $\ell(\Delta^2)$, and
\item $\mathbf n(p)$ or $-\mathbf n(p)$ belongs to $\R^2_{> 0}$ for all $p \in \interior(\Delta^2)$.
\end{enumerate}
We denote the set of admissible loss functions as $\mathcal{L}$.
\end{definition}

\begin{remark}
We give the following interpretation of the previous definition. (i) simply says that the loss curve (once parametrized) is twice differentiable with continuous second partial derivatives. (ii) prevents some ``anomalies'' on $\ell$, for example, $\ell$ can not be constant on a neighborhood of a point. (iii) defines a subfamily of loss curves which are not allowed to vary ``too much''. This definition should be compared to the definition of loss functions in Section 2 in~\cite{Vovk-logFund}.
\end{remark}

\begin{definition}
Let $\ell \in \L$. Let $\bn \colon \ell(\interior(\Delta^2)) \to N\ell(\interior(\Delta^2))$ be the map that assigns to each $\ell(p)$ the normal vector to $M_{\ell}$ at $\ell(p)$ that lies in $\R^2_{\geq 0}$. We denote by $\k_{\a}^+(\cdot)$ the signed curvature of $\a$ with respect to the unit normal belonging to $\R^2_{\geq 0}$. We refer to $\k_{\a}^+(\cdot)$ as the \emph{curvature with respect to the unit normal vector pointing towards $\R^2_{\geq 0}$}.
\end{definition}

\subsection{Proper losses}

\begin{lemma} \label{lemma-sproper-kpos}
Suppose that $\ell$ in $\mathcal{L}$ is strictly proper, then the signed curvature of the loss curve $\ell(\Delta^2)$ has a sign. Moreover, its curvature, $\k_{\ell}$, is positive with respect to unit normal vector (field) pointing towards $\R^2_{\geq 0}$.
\end{lemma}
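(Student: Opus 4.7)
The plan is to convert strict properness into the standard first- and second-order necessary conditions at each interior minimum of the one-parameter family of linear functionals $\la \ell(\cdot), p\ra$, and then to read off the sign of the signed curvature using the coordinate-free identity
\[
\k^+_\alpha(s) = \frac{\la \alpha''(s), \bn_{\alpha(s)}\ra}{|\alpha'(s)|^2}
\]
valid for any regular $C^2$ planar parametrization $\alpha$ of the loss curve.

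First I would take the standard parametrization $\alpha(s) \defeq \ell(\Phi_{\std}(s)) = (\tell_1(s), \tell_2(s))$, which is a $C^2$ curve in $\R^2$ with $\alpha'(s)\neq 0$ by admissibility. For a fixed interior point $p = (p_1, p_2) \in \interior(\Delta^2)$ define $g_p(s) \defeq \la \ell(\Phi_{\std}(s)), p\ra = p_1\tell_1(s)+p_2\tell_2(s)$; strict properness asserts that $g_p$ attains its unique minimum over $[0,1]$ at the interior point $s = p_1$. The first-order condition $g_p'(p_1) = 0$ reads $\la \alpha'(p_1), p\ra = 0$, so the normal line to $M_\ell$ at $\ell(p)$ is $\R p$; since $p \in \R^2_{>0}$, admissibility (iii) then forces $\bn_{\ell(p)} = p/|p|$. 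The second-order condition gives
\[
\la \alpha''(p_1), p\ra = p_1 \tell_1''(p_1) + p_2 \tell_2''(p_1) \geq 0.
\]

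Next I would establish the displayed formula for $\k^+$ by decomposing the velocity as $\alpha' = |\alpha'|T$ and differentiating: $\alpha'' = (|\alpha'|)' T + \k^+_\alpha |\alpha'|^2 \bn$, so the inner product with $\bn$ isolates $\k^+_\alpha|\alpha'|^2$. Substituting our findings at $s = p_1$ gives
\[
\k^+_\ell(p) = \frac{p_1 \tell_1''(p_1) + p_2 \tell_2''(p_1)}{|p|\,|\alpha'(p_1)|^2} \geq 0
\]
for every $p \in \interior(\Delta^2)$. Hence the signed curvature has a well-defined, non-negative sign throughout with respect to the unit normal $\bn$ pointing into $\R^2_{\geq 0}$.

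The main obstacle I expect is the careful sign bookkeeping required to identify the paper's signed curvature $\k_\alpha$, defined through the orientation-dependent canonical normal $\bn^c$, with $\k^+_\ell$ computed against the intrinsic admissible normal $\bn$ supplied by (ii)--(iii) in Definition~\ref{def-admissible-loss-2}: one must check that the continuous normal field $\bn$ on the loss curve agrees globally with either $\bn^c$ or $-\bn^c$, so that the sign of $\k^+_\ell$ is unambiguous and constant on $\interior(\Delta^2)$. A secondary subtlety is that a strict minimum of a $C^2$ function is compatible with a vanishing second derivative (e.g.\ $s\mapsto s^4$), so the argument above yields only $\k^+_\ell \geq 0$; upgrading to strict positivity at isolated points would require supplementing the first- and second-order analysis with a higher-order Taylor expansion of $g_p$ about $s = p_1$, exploiting the strict inequality in the definition of strict properness.
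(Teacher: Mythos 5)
Your first-order analysis, the identification of the normal with $p/|p|$, and the identity $\k^+_{\alpha}=\la \alpha'',\bn\ra/|\alpha'|^2$ are all sound and consistent with the paper. The genuine gap is that you only conclude $\k^+_{\ell}\geq 0$, whereas the lemma asserts that the signed curvature never vanishes (``has a sign'') and that $\k^+_{\ell}>0$; this strictness is not cosmetic, since it is exactly what feeds into Lemma~\ref{lemma-sproper-char} and the later curvature comparisons. Moreover, the remedy you sketch --- a higher-order Taylor expansion of $g_p$ about $s=p_1$ exploiting strictness of the minimum --- cannot close the gap: a strict unique minimum is compatible with $g_p''(p_1)=0$ (your own $s\mapsto s^4$ example), and no pointwise expansion at that single $t$ will upgrade the second-order necessary condition from $\geq 0$ to $>0$.

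What does close it is using properness at \emph{all} interior points simultaneously, which is in essence what the paper's proof does. Since $\la \tell'(t),\Phi_{\std}(t)\ra=0$ holds identically in $t$, you may differentiate it to get $\la \tell''(t),\Phi_{\std}(t)\ra=-\la \tell'(t),\Phi_{\std}'(t)\ra=\tell_2'(t)-\tell_1'(t)$. The first-order identity forces $\tell_1'(t)$ and $\tell_2'(t)$ to have opposite signs, and they cannot both vanish (if one vanishes so does the other, which is excluded for curves in $\L$, cf.\ Definition~\ref{def-admissible-loss-2}); hence $\tell_2'(t)-\tell_1'(t)\neq 0$, and combined with your weak inequality this gives $\la \tell''(t),\Phi_{\std}(t)\ra>0$ at every interior $t$, so $\k^+_{\ell}>0$ everywhere and the signed curvature is nowhere zero, hence of constant sign. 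The paper reaches the same conclusion in an arbitrary parametrization by rewriting the curvature numerator as $\tell_1'\tell_2''-\tell_1''\tell_2'=\frac{\tell_1'}{\Phi_2}\la \tell'',\Phi\ra$ via the first-order identity and noting that $\tell_1'$ never vanishes; it does state the second-order condition with strict inequality directly, and your caveat about why that needs justification is fair --- but the justification is the differentiation-plus-regularity argument above, not a pointwise higher-order expansion.
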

\begin{proof}
Let $p_0 \in \textnormal{int}(\Delta^2)$ and let $\Phi \colon I \subset \R \To \Delta^2$ be a parametrization of $\Delta^2$ around $p_0=\Phi(t_0)$, for some $t_0 \in I$, which we use to obtain a parametrization of $\Delta^2 \times \Delta^2$ around $(p_0,p_0)$\footnote{Notice that this particular choice of coordinates around $(p_0,p_0)$ suffices since we want to conclude something about the curvature of the curve loss $\ell$.}. We consider the local expression of $L$ given by
\begin{equation*} 
\tL(t,s) =  \la \ell(\Phi(s)), \Phi(t) \ra.
\end{equation*}
Using strict properness we know that fixing $t$, the function $\tL(t,\cdot)$ achieves a minimum at $s=t$ (and it is the only one), that is
\begin{align}
0&=\pr_s \tL (t,s) \vert_{s=t}= \la \tell'(t),\Phi(t) \ra =\tell_1'(t)\Phi_1(t) + \tell_2'(t)\Phi_2(t),   \label{eq-proper-1stderB}\\
0& < \pr_{ss}  \tL (t,s) \vert_{s=t} =  \la \tell''(t),\Phi(t) \ra =\tell_1''(t)\Phi_1(t) + \tell_2''(t)\Phi_2(t). \label{eq-proper-2ndderB}
\end{align}

To compute the sign of the signed curvature of $\ell(\Delta^2)$ it is enough to determine the sign of $\tell_1'(t)\tell_2''(t) - \tell_1''(t)\tell_2'(t) $. Without loss of generality, assuming $\Phi_2 \neq 0$ on this coordinate neighborhood we can write 
\begin{align*}
\tell_1'(t)\tell_2''(t) - \tell_1''(t)\tell_2'(t) &= \tell_1'(t)\tell_2''(t) - \tell_1''(t)\left[ - \frac{\tell_1'(t) \Phi_1(t)}{\Phi_2(t)}  \right] \\
&= \frac{\tell_1'(t)}{\Phi_2(t)} \left[   \tell_2''(t)\Phi_2(t) +   \tell_1''(t)\Phi_1(t) \right] \\
&= \frac{\tell_1'(t)}{\Phi_2(t)} \left[  \la \tell''(t), \Phi(t) \ra \right] > 0,
\end{align*}
where we have used~\eqref{eq-proper-1stderB} and~\eqref{eq-proper-2ndderB}. Notice that if $\tell_1'(t)=0$ for some $t$ then necessarily $\tell_2'(t)=0$ by~\eqref{eq-proper-1stderB}, which is impossible in $\mathcal{L}$. Therefore $\tell_1'$ has a sign and this sign determines the sign of the signed curvature of $\ell(\Delta^2)$.

For the second statement, notice that again using~\eqref{eq-proper-1stderB}  we know that $\tell_1'$ and $\tell_2'$ have different signs (and they do not change). If $\tell_1'>0$, then that means that the first coordinate increases and the second decreases, hence $\bn(t)$ points towards $\R^2_{\geq 0}$ and $\k_{\tell} > 0$. If $\tell_1'<0$, then we are in the opposite case and in this case $\bn(t)$ points to $\R^2_{\geq 0}$ and  $\k_{\tell} < 0$, thus the signed curvature with respect to $-\bn(t)$ (the unit normal pointing towards $\R^2_{\geq 0}$) is positive.
\end{proof}

From the proof of the previous theorem we obtain the following corollary.

\begin{coro} \label{coro-sproper-pnormal}
Let $\ell \in \mathcal{L}$. If $\ell$ is proper, then $p \in \interior(\Delta^2)$ is normal to the loss curve $\ell(\Delta^2)$ at $\ell(p)$.
\end{coro}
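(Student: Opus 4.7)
The plan is to extract the needed fact directly from the first-order optimality condition derived in the proof of Lemma~2.3 (equation~\eqref{eq-proper-1stderB}). That identity, namely
\[
0 = \pr_s \tL(t,s)\big|_{s=t} = \la \tell'(t), \Phi(t) \ra,
\]
came solely from the fact that $L(p,\cdot)$ has a minimum at $p$, so it is available under the weaker hypothesis of properness (no strictness required) at any interior point.

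Concretely, I would fix an arbitrary $p_0 \in \interior(\Delta^2)$, choose a $C^2$ parametrization $\Phi \colon I \subset \R \to \Delta^2$ of a neighborhood of $p_0$ with $\Phi(t_0) = p_0$, and consider the local expression $\tell = \ell \circ \Phi$. Since $\ell \in \mathcal{L}$, the loss curve $M_\ell = \ell(\interior(\Delta^2))$ is a $C^2$ $1$-manifold and, by Definition~\ref{def-admissible-loss-2}(ii)--(iii), $\tell'(t_0) \neq 0$, so $\tell'(t_0)$ spans the one-dimensional tangent space $T_{\ell(p_0)} M_\ell$. Properness of $\ell$ implies that $s \mapsto \tL(t_0, s) = \la \ell(\Phi(s)), \Phi(t_0)\ra$ attains a minimum at $s = t_0$ (which lies in the interior of $I$), so the first-order condition gives $\la \tell'(t_0), \Phi(t_0)\ra = 0$, i.e.\ $\la \tell'(t_0), p_0\ra = 0$.

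Since $\tell'(t_0)$ spans $T_{\ell(p_0)} M_\ell$, this orthogonality is precisely the statement that $p_0$ lies in the normal space $N_{\ell(p_0)} M_\ell$; moreover $p_0 \neq 0$ since $p_0 \in \interior(\Delta^2) \subset \R^2_{>0}$, so $p_0$ defines a genuine normal direction at $\ell(p_0)$. The argument is parametrization-independent (any other choice of $\Phi$ around $p_0$ differs by a nonzero scalar in $\tell'(t_0)$, preserving the orthogonality relation), so the conclusion holds at every $p \in \interior(\Delta^2)$. There is no real obstacle here: the only thing to verify is that the first-order condition from the previous lemma's proof uses only properness (not strict properness), which it does, making this corollary an immediate consequence.
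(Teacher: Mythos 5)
Your proposal is correct and follows essentially the same route as the paper: the paper's proof of this corollary is exactly the observation that the first-order condition \eqref{eq-proper-1stderB} holds because $\la \ell(q),p\ra$ attains a minimum at $q=p$ under properness alone, which is what you spell out (with the additional, harmless detail that $\tell'(t_0)\neq 0$ spans the tangent line, a fact the paper also uses within the proof of Lemma~\ref{lemma-sproper-kpos}).
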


\begin{proof}
It follows directly from~\eqref{eq-proper-1stderB}, since for fixed $p \in \interior(\Delta^2)$, $\la \ell(q),p \ra$ attains a minimum at $p$.
\end{proof}

\begin{lemma} \label{lemma-propisstrict}
In $\mathcal{L}$, proper implies strictly proper.
\end{lemma}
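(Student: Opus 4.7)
The plan is to combine the first-order condition produced by properness (equation~\eqref{eq-proper-1stderB}) with the admissibility condition that the tangent vector $\tell'$ never vanishes, and then show that these two facts together pin down the \emph{unique} critical point of $\tL(t_0,\cdot)$. Let $\ell \in \L$ be proper and fix an interior point $p_0 = \Phi_{\std}(t_0)$, where $\Phi_{\std}(t)=(t,1-t)$. Writing $\tell = \ell \circ \Phi_{\std}$ and $\tL(t,s) = \la \tell(s), \Phi_{\std}(t) \ra$, properness yields, as in~\eqref{eq-proper-1stderB},
\begin{align*}
\tell_1'(s)\, s + \tell_2'(s)\,(1-s) = 0 \qquad \text{for all } s \in (0,1).
\end{align*}
As observed in the proof of Lemma~\ref{lemma-sproper-kpos}, admissibility forces $\tell_1'(s)\neq 0$ everywhere, since $\tell_1'(s)=0$ would drag $\tell_2'(s)=0$ along, contradicting the fact that $M_\ell$ is a $C^2$ $1$-manifold.

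The key calculation is single-variable. For the fixed $t_0$, set
\begin{align*}
h(s) \defeq \pr_s \tL(t_0, s) = \tell_1'(s)\, t_0 + \tell_2'(s)\,(1-t_0).
\end{align*}
Eliminating $\tell_2'(s)$ via the first-order condition gives
\begin{align*}
h(s) = \tell_1'(s)\cdot \frac{t_0 - s}{1-s},
\end{align*}
whose only root in $(0,1)$ is $s=t_0$. Since properness ensures $t_0$ is a minimum of $\tL(t_0,\cdot)$ and $h$ has constant sign on each side of $t_0$, the function $\tL(t_0,\cdot)$ is strictly decreasing on $(0,t_0)$ and strictly increasing on $(t_0,1)$. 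By continuity on $[0,1]$, the boundary values also strictly exceed $\tL(t_0,t_0)$, so $t_0$ is the unique minimizer of $\tL(t_0,\cdot)$.

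It remains to deal with $p_0 \in \pr \Delta^2$. Since $\tell_1'$ is nowhere zero and continuous, it has a constant sign on $(0,1)$, so $\tell_1$ is strictly monotonic, and similarly for $\tell_2$; thus each partial loss attains its minimum on $\Delta^2$ at a unique vertex, and properness identifies which vertex matches which $p_0 \in \{(1,0),(0,1)\}$, giving the required uniqueness. \emph{The main obstacle is not computational but conceptual:} one must recognize that admissibility is precisely the extra hypothesis that upgrades the first-order condition from ``$t_0$ is a critical point'' to ``$t_0$ is the only critical point'', thereby promoting a non-strict minimum of $\tL(t_0,\cdot)$ to a strict one.
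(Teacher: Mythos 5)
Your proof is correct, and it takes a somewhat different route than the paper's. Both arguments rest on the same two ingredients, namely the first-order identity \eqref{eq-proper-1stderB} supplied by properness at every interior point and the non-vanishing of $\tell'$ for losses in $\L$, but the paper uses them pointwise and geometrically: if $p^*\neq p$ were a second interior minimizer of $L(p,\cdot)$, then both $p$ and $p^*$ would be normal to the loss curve at $\ell(p^*)$, and since the normal space of a curve in $\R^2$ is one-dimensional and both points lie on $\Delta^2$, they coincide. You instead eliminate $\tell_2'$ to obtain the closed form $\pr_s\tL(t_0,s)=\tell_1'(s)\,(t_0-s)/(1-s)$ and turn it into a global statement: $\tL(t_0,\cdot)$ is strictly decreasing on $(0,t_0)$ and strictly increasing on $(t_0,1)$ (your sign bookkeeping here is consistent: minimality at $t_0$ forces $\tell_1'<0$ throughout, so the monotonicity claim holds). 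What your version buys is completeness: the paper's proof, as written, only excludes a competing minimizer lying in $\interior(\Delta^2)$ and only treats interior $p$, whereas your monotonicity-plus-continuity step also rules out boundary minimizers, and your last paragraph treats $p_0\in\pr\Delta^2$, which the paper's proof silently skips. What the paper's version buys is brevity and a coordinate-free flavor (no choice of $\Phi_{\std}$, no algebra). One shared caveat: the step ``$\tell_1'(s)=0$ would force $\tell_2'(s)=0$, impossible in $\L$'' really invokes that $\ell$ is an immersion (nonvanishing $\tell'$), which is how the paper reads admissibility; the image being a $C^2$ manifold does not by itself forbid a vanishing derivative of the parametrization. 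Since the paper's own Lemma~\ref{lemma-sproper-kpos} argues identically, this is not a gap relative to the paper.
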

\begin{proof}
Let $p \in \interior(\Delta^2)$, and suppose that there is $p^* \neq p$ in $\interior(\Delta^2)$, such that 
\begin{align*}
\la \ell(p^*),p \ra = \inf_{q \in\Delta^2} \la \ell(q),p \ra. 
\end{align*}
Using~\eqref{eq-proper-1stderB}, we see that $p^*$ is normal to $\ell$ at $\ell(p)$, and hence $p$ and $p^*$ are parallel. Since both belong to $\Delta^2$, it follows that $p^*=p$, which is a contradiction.
\end{proof}

Therefore, in what follows (as long as we stay within $\mathcal{L}$) we will use proper and strictly proper interchangeably.

Note that the converse of Lemma \ref{lemma-sproper-kpos} does not hold. That is, there are $\ell \in \mathcal{L}$ which have positive signed curvature (with respect to the unit normal pointing towards $\R^2_{\geq 0}$), but are not proper. Indeed let $\ell$ be defined as
\begin{equation*}
\ell(p) = (-\ln(p_2),-\ln(p_1)).
\end{equation*}
Taking the (standard) parametrization $\Phi_{\std}(t)=(t,1-t)$ we see that the loss curve $\tell$ goes from left to right so $\bn_{\tell(t)}$ points towards $\R^2_{\geq 0}$. Moreover, we can readily see that the (signed) curvature $\k_{\tell}$ is positive. However, $\Phi_{\std}(t)$ is not normal to $\tell$ at $\tell(t)$, thus by Corollary~\ref{coro-sproper-pnormal}, $\ell$ can not be proper.

Therefore, we obtain the following characterization of proper losses in $\mathcal{L}$.

\begin{lemma}\label{lemma-sproper-char}
Let $\ell \in \mathcal{L}$. $\ell$ is strictly proper if and only if $p$ is normal to the loss curve $\ell(\Delta^2)$ at $\ell(p)$ for all $p \in \textnormal{int}(\Delta^2)$ and the signed curvature of $\ell(\Delta^2)$ with respect to the normal vector pointing towards $\R^2_{\geq 0}$ is positive at all points $\ell(p)$ for $p \in \textnormal{int}(\Delta^2)$.
\end{lemma}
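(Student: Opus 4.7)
The forward implication ($\Rightarrow$) is essentially already on record: Corollary~\ref{coro-sproper-pnormal} gives the normality statement, while Lemma~\ref{lemma-sproper-kpos} gives the positivity of $\k^+_\ell$ at every $\ell(p)$, $p\in\interior(\Delta^2)$. So the substantive work is the converse ($\Leftarrow$), for which my plan is to fix $p\in\interior(\Delta^2)$ and prove that the scalar map $g\colon\Delta^2\to\R$, $g(q)\defeq\langle\ell(q),p\rangle$, attains a strict global minimum at $q=p$; this is the definition of strict properness.

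For the local part, I would pick a local parametrization $\Phi\colon I\to\Delta^2$ with $\Phi(t_0)=p$, set $\tell=\ell\circ\Phi$, and study $f(s)\defeq\langle\tell(s),p\rangle$. The normality hypothesis gives $\tell'(t_0)\perp p$, hence $f'(t_0)=0$. Because $p\in\R^2_{>0}$, the unit normal to the loss curve at $\ell(p)$ pointing into $\R^2_{\geq 0}$ is $\bn(\ell(p))=p/|p|$, and the standard identity $\langle\alpha'',\bn\rangle=|\alpha'|^2\k^+_\alpha$ for a regular $C^2$ planar curve yields
\[
f''(t_0)=\langle\tell''(t_0),p\rangle=|p|\,|\tell'(t_0)|^2\,\k^+_\ell(\ell(p))>0
\]
by the curvature hypothesis. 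Thus $p$ is a strict local minimum of $g$ (the computation is morally the content of Lemma~\ref{lemma-sproper-kpos} read in reverse).

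To pass from local to global, I would first observe that any other interior critical point $q^*=\Phi(s_1)$ of $g$ must satisfy $\tell'(s_1)\perp p$ (criticality) and $\tell'(s_1)\perp q^*$ (the normality hypothesis at $\ell(q^*)$); since the normal space at $\ell(q^*)$ is one-dimensional in $\R^2$, $p$ and $q^*$ are parallel, and both lying on the affine line $\Delta^2$ forces $p=q^*$. Suppose for contradiction there were $q_0\in\Delta^2\setminus\{p\}$ with $g(q_0)\leq g(p)$. Restricting $g$ to the one-dimensional segment of $\Delta^2$ joining $p$ to $q_0$, I obtain a continuous function that starts at $g(p)$, strictly exceeds it on a punctured neighborhood of $p$ (by the strict local minimum), and terminates at a value $\leq g(p)$; it must therefore attain an interior maximum, yielding a further interior critical point of $g$ distinct from $p$, contradicting uniqueness.

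The main obstacle is precisely this passage from local to global: the first-and-second-derivative analysis at $p$ controls $g$ only locally, and it is the combination of (i) the rigidity of interior critical points forced by the normality hypothesis and (ii) the one-dimensionality of $\Delta^2$ that promotes the strict local minimum to a strict global one, delivering strict properness. A technical point worth flagging in the writeup is that the identification of $\bn(\ell(p))$ with $p/|p|$ depends on the sign convention for $\k^+$ fixed earlier, so one needs to be careful that the $\bn$ aligned with $\R^2_{\geq 0}$ is the very same normal with respect to which $\k^+_\ell>0$ is asserted.
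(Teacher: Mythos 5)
Your proposal is correct, and its forward direction and the local second-order computation coincide with the paper's argument (the paper likewise cites Lemma~\ref{lemma-sproper-kpos} and, implicitly, Corollary~\ref{coro-sproper-pnormal}, and in the converse derives $\pr_s\tL(t,s)\vert_{s=t}=0$ from~\eqref{eq-pnormal} and $\pr_{ss}\tL(t,s)\vert_{s=t}>0$ from~\eqref{eq-poscurv}). Where you genuinely diverge is the passage from these pointwise conditions to a global minimum: the paper simply asserts that the first- and second-order conditions at $s=t$ imply $\tL(t,\cdot)$ attains its minimum there, and then upgrades proper to strictly proper via Lemma~\ref{lemma-propisstrict}; you instead prove the strict \emph{global} minimum directly, by showing that normality at every interior point forces any interior critical point $q^*$ of $q\mapsto\la\ell(q),p\ra$ to satisfy $q^*\parallel p$, hence $q^*=p$ (this absorbs the argument of Lemma~\ref{lemma-propisstrict} into a uniqueness-of-critical-points statement), and then using the one-dimensionality of $\Delta^2$ and an interior-maximum argument to rule out any $q_0\neq p$ with $\la\ell(q_0),p\ra\leq\la\ell(p),p\ra$. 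This buys you a complete local-to-global step that the paper leaves implicit (a strict local minimum with positive second derivative at the diagonal alone does not give a global minimum), at the cost of a slightly longer argument; your flag about matching the sign convention for $\k^+$ with the normal $p/|p|$ pointing into $\R^2_{\geq 0}$, and the identity $f''(t_0)=|p|\,|\tell'(t_0)|^2\,\k^+_{\ell}(\ell(p))$, are both correct and consistent with the paper's conventions (regularity of $\tell$, i.e.\ $\tell'\neq 0$, is implicitly assumed for $\L$ in the paper as well).
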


\begin{proof}
The ``if'' part is Lemma~\ref{lemma-sproper-kpos}. For the ``only if'' part, let $\ell \in \mathcal{L}$ be such that 
\begin{align} 
\mathbf{n}_p &=\pm \frac{p}{|p|}, \label{eq-pnormal} \\
\k_{\ell}^+ &> 0 \label{eq-poscurv},
\end{align}
where $\k_{\ell}^+$ is the signed curvature of $\ell$ with respect to the unit normal pointing towards $\R^2_{\geq 0}$. Let $p \in \textnormal{int}(\Delta^2)$ and let $\Phi$ be a parametrization around $p$. We readily see that~\eqref{eq-pnormal} implies that 
\begin{equation*}
\pr_s \tL (t,s) \vert_{s=t} = 0,
\end{equation*}
while~\eqref{eq-poscurv} implies $ \pr_{ss}  \tL (t,s) \vert_{s=t} > 0$ by the proof of Lemma~\ref{lemma-sproper-kpos}. This implies that fixing $t$, $\tcondL$ achieves its minimum at $s=t$. Then $\ell$ is proper and by Lemma~\ref{lemma-propisstrict}, we conclude it is strictly proper.
\end{proof}

\begin{remark} \label{rmk-proper_in_coords}
Notice that to check whether a given loss function $\ell \in \mathcal{L}$ is proper or not, it suffices to do it in \emph{any} coordinate system. That is, given $\Phi$, we check conditions~\eqref{eq-pnormal} and~\eqref{eq-poscurv} for $\tell=\ell \circ \Phi$.
\end{remark}

\subsection{Mixable loss functions}

We say that a loss function $\ell$ is \emph{fair} if $\ell_1(p) \to 0$ as $p \to (0,1)$ and $\ell_2(p) \to 0$ as $p \to (1,0)$ (this is motivated by the interpretation when using the standard parametrization, see~\cite{R-W_composite}). In addition, recall that a loss function $\ell \in \L$ is proper if and only if
\begin{enumerate}[(i)]
\item $\bn_{\ell(p)}= \frac{p}{|p|}$ can be chosen, and
\item $\k_{\ell}^+(p)>0$
\end{enumerate}
for all $p \in \textnormal{int}(\Delta^2)$.

Thus, a prototype of a fair proper loss function is shown in Figure~\ref{fig-proper}.

\begin{center}
\begin{figure}[h!]
\includegraphics[scale=1.1]{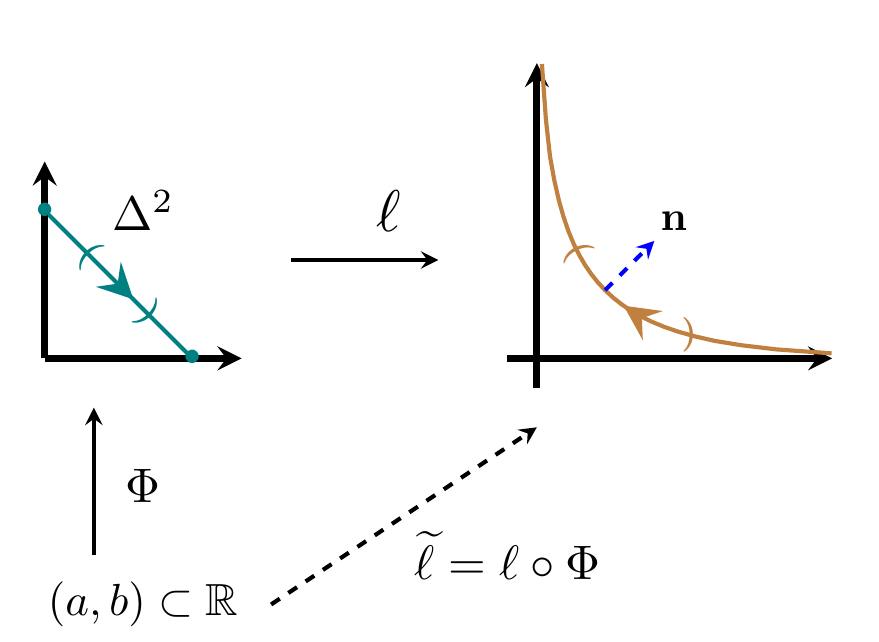}
\caption{Prototype of a mixable fair proper loss function. \label{fig-proper}}
\end{figure}
\end{center}

Recall from Section~\ref{section-intro} that mixability is defined in terms of the superprediction set $\spr(\ell)$ of $\ell$. More precisely, for $\eta  > 0$, consider the set
\begin{equation*}
E_{\eta}(y_1,y_2) = \( e^{-\eta y_1}, e^{-\eta y_2}   \),
\end{equation*}
where  $E_{\eta} \colon \R^2_{\geq 0} \To [0,1]^2$ is the exponential projection~\eqref{eq-eta-exp-projection}. Then, $\ell$ is $\eta$-mixable if and only if $E_{\eta}(\spr(\ell))$ is convex.

\begin{remark}
We stress the fact that this definition depends on the superprediction set of $\ell$ rather than on $\ell$ itself -- two different loss functions with the same superprediction set will be equally mixable. From our perspective, when talking about mixability of the map $\ell$ (i.e., without making reference to the superprediction set), we see that we can define it as follows. A loss $\ell$ is mixable if the 1-dimensional manifold $E_{\eta} \circ \ell (\interior(\Delta^2))$ has signed curvature $\k^+_{E_{\eta} \circ \ell } \leq 0$. We will adopt the latter version here. Although clearly these definitions are equivalent, it is useful to have this at hand to relate mixability with properness. For now on, when we say $\ell$ is mixable we mean in the latter way. See Figure~\ref{fig-mix-proj}.
\end{remark}

\begin{center}
\begin{figure}
\includegraphics[scale=1]{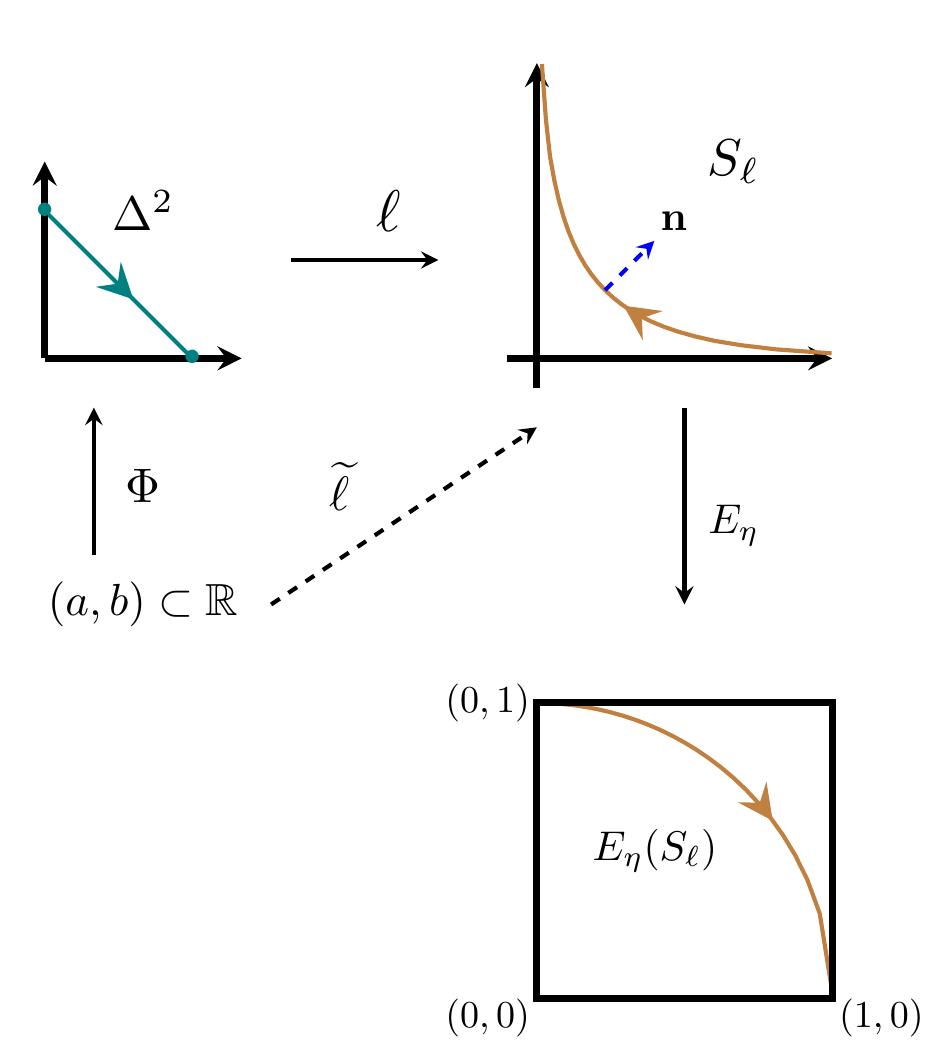}
\caption{Diagram depicting how convexity of $E_{\eta}(\spr(\ell))$ is characterized by the principal curvatures of  $E_{\eta} \circ \ell (\interior(\Delta^2))$. \label{fig-mix-proj}}
\end{figure}
\end{center}

We close this part by describing the log loss, which will play an important role. Let $\ell_{\log} \colon \Delta^2 \To \R^2$, given by
\begin{equation} \label{eq-logloss-2dim}
\ell_{\log}(p) = \( -\ln(p_1) , -\ln(p_2) \).
\end{equation}

Let $\Phi=\Phi_{\std}$. Then 
\begin{equation*}
\tell_{\log}(t)= \( -\ln(t), -\ln(1-t) \).
\end{equation*} 
Since $\tell_{\log}'(t)= \( -t^{-1}, (1-t)^{-1} \)$, its canonical normal vector is
\begin{equation*} 
\bn_{\tell_{\log}(t)} = -\frac{1}{\sqrt{t^2+(1-t)^2}}\((1-t)^{-1},t^{-1}   \).
\end{equation*} 
The curvature with respect to $-\bn_{\tell_{\log}(t)}$, the normal vector pointing towards $\R^2_{\geq 0}$, is then given by
\begin{equation} \label{eq-curv-logloss-2d}
\k_{\tell_{\log}}^+ =-\k_{\tell_{\log}} = \frac{t(1-t)}{\(t^2+(1-t)^2\)^{3/2}}>0.
\end{equation}
When there is no risk of confusion with denote $\k^+_{\tell_{\log}}$ simply as $\k^+_{\log}$.

\subsection{Mixability and curvature}

Haussler, Kivinen and Warmuth in~\cite{H-K-W} gave a characterization of the mixability constant of a mixable proper binary loss function $\ell$ in terms of the first and second derivatives of its partial losses. We reprove this characterization from a geometric point of view, that is, independent of the parametrization chosen for $\Delta^n$.

Let $\ell \in \L$ be proper and $\Phi$ a 1-chart parametrization\footnote{This means that the map $\Phi \colon D \To \Delta^2$ is such that $\Phi(D)=\Delta^2$.} of $\Delta^2$, then $E_{\eta}(\spr(\ell))$ will be convex if and only if the curve $\g(t)=E(\ell(\Phi(t)))$ has negative curvature with respect to the unit normal pointing towards $\R^2_{\geq 0}$. Since $\ell$ is proper we can assume without loss of generality that $\k_{\ell}(p)=\k_{\ell}^+(p)>0$. We are then interested in computing the signed curvature of
\begin{equation*}
g(t) = (g_1(t),g_2(t)) = \( E(\tell_1(t)) , E(\tell_2(t)) \) = \( e^{-\eta \tell_1(t)} , e^{-\eta \tell_2(t)}   \),
\end{equation*}
and showing that $\k_{g} \geq 0$. We have
\begin{align*}
g_1'(t) &=  -\eta \tell_1'(t)e^{-\eta \tell_1(t)}  \\
g_1''(t) &= -\eta \tell_1''(t) e^{-\eta \tell_1(t)}  + \eta^2 \tell_1'(t)^2 e^{-\eta \tell_1(t)}  \\
&=\eta e^{-\eta \tell_1(t)}  \left[  \eta \tell_1'(t)^2 - \tell_1''(t)   \right] 
\end{align*}
and
\begin{align*}
g_2'(t) &=  -\eta \tell_2'(t)e^{-\eta \tell_2(t)}  \\
g_2''(t) &= -\eta \tell_2''(t) e^{-\eta \tell_2(t)}  + \eta^2 \tell_2'(t)^2 e^{-\eta \tell_2(t)}  \\
&=\eta e^{-\eta \tell_2(t)}  \left[  \eta \tell_2'(t)^2 - \tell_2''(t)   \right],
\end{align*}
and thus we have
\begin{align*}
&\( g_1'(t)^2 + g_2'(t)^2  \)^{3/2} \k_{g}(t)  \\
=&  -\eta \tell_1'(t)e^{-\eta \tell_1(t)} \eta e^{-\eta \tell_2(t)}  \left[  \eta \tell_2'(t)^2 - \tell_2'(t)   \right] \\
&\qquad -\eta e^{-\eta \tell_1(t)}  \left[  \eta \tell_1'(t)^2 - \tell_1''(t)   \right]\( -\eta \tell_2'(t)e^{-\eta \tell_0(t)}  \) \\
=&\eta^2e^{-\eta \tell_1(t)} e^{-\eta \tell_2(t)} \left[  \tell_1'(t)\tell_2''(t)  -\tell_1'(t) \eta \tell_2'(t)^2 +  \tell_2'(t)\eta \tell_1'(t)^2 -  \tell_2'(t)\tell_1''(t) \right] \\
=&\eta^2e^{-\eta \tell_1(t)} e^{-\eta \tell_2(t)} \left[   \eta \tell_2'(t) \tell_1'(t)( \tell_1'(t)- \tell_2'(t) ) + \left[ \tell_1'(t)\tell_2''(t)   -  \tell_2'(t)\tell_1''(t) \right] \right].
\end{align*}

Note that the sign of $\tell_1'(t)\tell_2''(t)   -  \tell_2'(t)\tell_1''(t) $ is the sign of $\k_{\tell}$. If $\k_{\tell}$ is positive, then one can check that $\tell_1'(t)>0$ and $\tell_2'(t)<0$, thus the first term in brackets is necessarily negative. Thus by making $\eta$ large $\k_g(t)$ will become negative. 
Then we want
\begin{equation*}
 \eta \tell_2'(t) \tell_1'(t)( \tell_1'(t)- \tell_2'(t) ) + \left[ \tell_1'(t)\tell_2''(t)   -  \tell_2'(t)\tell_1''(t) \right]  \geq 0,
\end{equation*}
that is,
\begin{equation*}
 \eta   \leq  \frac{ \tell_1'(t)\tell_2''(t)   -  \tell_2'(t)\tell_1''(t)}{(- \tell_2'(t) \tell_1'(t))( \tell_1'(t)- \tell_2'(t) )}.
\end{equation*}

When considering the case when the signed curvature is negative, we have:

\begin{lemma}\label{lemma-hussler}
Suppose that $\ell \in \L$ is a proper loss function. Then, if $\ell$ is mixable, for any 1-chart parametrization $\Phi$ of $\Delta^2$, the mixability constant is given by
\begin{equation} \label{eq-mixconst}
\eta^*_{\ell} = \inf_{t \in \Phi^{-1}(\interior(\Delta^2))} \left|   \frac{ \tell_1'(t)\tell_2''(t)   -  \tell_2'(t)\tell_1''(t)  }{  \tell_1'(t)\tell_2'(t)( \tell_1'(t)- \tell_2'(t) ) } \right|.
\end{equation}
Conversely, if~\eqref{eq-mixconst} holds, then $\ell$ is mixable with mixability constant $\eta_{\ell}^*$.
\end{lemma}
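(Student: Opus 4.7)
The plan is to leverage the explicit computation of $\k_g$ (where $g(t) \defeq E_\eta(\tell(t))$) carried out in the paragraphs immediately preceding the lemma, and to repackage it as a sharp upper bound on $\eta$. By the remark defining mixability via the exponentially-projected curve, $\ell$ is $\eta$-mixable iff the signed curvature $\k^+_g$, taken with respect to the unit normal pointing toward $\R^2_{\geq 0}$, is nonpositive at every $t \in \Phi^{-1}(\interior(\Delta^2))$. Lemma~\ref{lemma-sproper-kpos} together with properness of $\ell$ ensures, possibly after composing $\Phi$ with an orientation-reversing map (which is invisible to the absolute value in~\eqref{eq-mixconst}), that $\tell_1' > 0$, $\tell_2' < 0$, and that the Wronskian $W(t) \defeq \tell_1'(t)\tell_2''(t) - \tell_1''(t)\tell_2'(t)$ is strictly positive throughout.

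The displayed computation just before the lemma produces the identity
\begin{equation*}
\bigl(g_1'(t)^2 + g_2'(t)^2\bigr)^{3/2}\,\k_g^+(t) \;=\; \eta^{2}\, e^{-\eta(\tell_1(t)+\tell_2(t))}\,\Bigl\{\eta\,\tell_1'(t)\tell_2'(t)\bigl(\tell_1'(t)-\tell_2'(t)\bigr) + W(t)\Bigr\},
\end{equation*}
up to the sign conventions fixed above. With the signs of $\tell_1'$ and $\tell_2'$ in hand, the coefficient of $\eta$ inside the braces is strictly negative while $W > 0$, so the mixability condition $\k^+_g(t) \le 0$ for all $t$ rearranges to
\begin{equation*}
\eta \;\leq\; \frac{W(t)}{-\tell_1'(t)\tell_2'(t)\bigl(\tell_1'(t)-\tell_2'(t)\bigr)} \qquad \text{for every } t,
\end{equation*}
with both sides positive. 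Taking the infimum over $t$ and inserting absolute values yields precisely the right-hand side of~\eqref{eq-mixconst}. Both directions of the lemma follow immediately: $\ell$ is $\eta$-mixable iff $\eta$ stays below this bound uniformly in $t$, so $\eta^*_\ell$ equals the stated infimum; conversely, whenever that infimum is finite and strictly positive the very same inequality certifies $\eta$-mixability for every $\eta \leq \eta^*_\ell$.

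It remains to check that the formula is intrinsic, i.e., independent of the 1-chart $\Phi$ chosen. Under a smooth reparametrization $t = \phi(s)$, the chain rule gives $\tell_i'(t) \mapsto \tell_i'(\phi(s))\phi'(s)$ and $\tell_i''(t) \mapsto \tell_i''(\phi(s))\phi'(s)^2 + \tell_i'(\phi(s))\phi''(s)$, and a direct expansion shows that both the Wronskian $\tell_1'\tell_2'' - \tell_1''\tell_2'$ and the product $\tell_1'\tell_2'(\tell_1' - \tell_2')$ acquire exactly the same factor $\phi'(s)^3$; hence the quotient is invariant, and the absolute value absorbs the remaining sign under orientation reversal. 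I do not expect a genuine obstacle: the only real content sits in the curvature computation performed above the lemma, and the remaining work is careful sign bookkeeping together with the brief reparametrization check.
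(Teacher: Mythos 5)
Your argument is correct and is essentially the paper's own proof: the lemma is exactly a repackaging of the curvature computation for $g(t)=E_{\eta}(\tell(t))$ carried out immediately before the statement, using properness (via Lemma~\ref{lemma-sproper-kpos}) to fix the signs of $\tell_1'$, $\tell_2'$ and of the Wronskian, and rearranging the condition $\k^+_{g}\leq 0$ into the uniform bound on $\eta$ whose infimum is $\eta^*_{\ell}$. Your explicit check that the quotient in \eqref{eq-mixconst} is parametrization-invariant (numerator and denominator both scale by $\phi'(s)^3$) is a detail the paper leaves implicit, but it does not alter the route.
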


By the local nature of curvature, it would be possible to consider a ``local version'' of Lemma~\ref{lemma-hussler}, which would characterize a ``local'' notion of mixability. This alternative will not be pursued here.

In~\cite{Vovk-logFund}, Vovk observes that mixability for proper losses is equivalent to a quotient of curvatures being bounded away from zero. For the reader's convenience we prove this statement. To recover Vovk's statement observe that the properties he imposes on the loss functions imply that $\k^+$ is the signed curvature (see Section~\ref{subsection-reconciliation}).

\begin{lemma}\label{lemma-mix-curvature-charac}
A proper loss function $\ell \in \L$ is mixable if and only if 
\begin{equation*}
\inf_p \frac{\k_{\ell}^+(p)}{\k_{\log}^+(p)} >0,
\end{equation*}
where $\k_{\log}^+$ denotes the curvature of $\ell_{\log}$. Moreover, when this holds,
\begin{equation*}
\eta^*_{\ell} = \inf_p \frac{\k_{\ell}^+(p)}{\k_{\log}^+(p)} >0,
\end{equation*}
\end{lemma}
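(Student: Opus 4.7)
The plan is to show that the expression appearing inside the infimum in Lemma~\ref{lemma-hussler} is, pointwise, exactly the ratio $\kappa_{\ell}^+(p)/\kappa_{\log}^+(p)$; after that, both directions and the value of $\eta^*_\ell$ follow immediately by taking the infimum and invoking Lemma~\ref{lemma-hussler}. Fix any single-chart parametrization $\Phi \colon I \to \Delta^2$ with components $(\phi_1, \phi_2)$ satisfying $\phi_1+\phi_2 = 1$, and set $\tell = \ell \circ \Phi$. By the definition of signed curvature~\eqref{eq-signed-curvature} and the positivity of $\k_\ell^+$ for proper losses (Lemma~\ref{lemma-sproper-kpos}), the numerator of the Lemma~\ref{lemma-hussler} expression satisfies
\begin{equation*}
|\tell_1'\tell_2'' - \tell_2'\tell_1''| = \k_\ell^+(p) \cdot \bigl(\tell_1'(t)^2 + \tell_2'(t)^2\bigr)^{3/2},
\end{equation*}
where $p = \Phi(t)$. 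This is one of the two pieces I need.

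For the denominator, I would use properness in the form $\phi_1 \tell_1' + \phi_2 \tell_2' = 0$ (equation~\eqref{eq-proper-1stderB} / Corollary~\ref{coro-sproper-pnormal}) to eliminate $\tell_2' = -(\phi_1/\phi_2)\tell_1'$. A short direct computation, using $\phi_1+\phi_2 = 1$, yields
\begin{equation*}
|\tell_1'\tell_2'(\tell_1' - \tell_2')| = \frac{\phi_1\,|\tell_1'|^3}{\phi_2^{\,2}}, \qquad (\tell_1'^{\,2} + \tell_2'^{\,2})^{3/2} = \frac{|\tell_1'|^3 (\phi_1^{\,2} + \phi_2^{\,2})^{3/2}}{\phi_2^{\,3}},
\end{equation*}
so their quotient is $\phi_1\phi_2 / (\phi_1^{\,2}+\phi_2^{\,2})^{3/2} = p_1 p_2/(p_1^{\,2}+p_2^{\,2})^{3/2}$, which by~\eqref{eq-curv-logloss-2d} is precisely $\k_{\log}^+(p)$. (Note that while~\eqref{eq-curv-logloss-2d} was derived using $\Phi_{\std}$, $\k_{\log}^+$ is parametrization-invariant, so this identification at the point $p$ is legitimate in any chart.)

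Combining the two displays gives
\begin{equation*}
\left|\frac{\tell_1'\tell_2'' - \tell_2'\tell_1''}{\tell_1'\tell_2'(\tell_1' - \tell_2')}\right| = \frac{\k_\ell^+(p)}{\k_{\log}^+(p)}.
\end{equation*}
Taking $\inf_{t}$ and applying Lemma~\ref{lemma-hussler} closes both implications: if $\ell$ is mixable then $\eta^*_\ell = \inf_p \k_\ell^+(p)/\k_{\log}^+(p) > 0$, and conversely if that infimum is positive then Lemma~\ref{lemma-hussler} exhibits an $\eta>0$ for which $\ell$ is $\eta$-mixable. The only nontrivial step is the algebraic reduction of the denominator; this is where properness does the real work, converting a derivative ratio into the purely geometric quantity $\k_{\log}^+(p)$ and thus exposing why the log loss plays the role of a universal yardstick.
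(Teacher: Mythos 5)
Your proposal is correct and follows essentially the same route as the paper: invoke Lemma~\ref{lemma-hussler}, use the properness relation~\eqref{eq-proper-1stderB} to eliminate $\tell_2'$, and recognize the remaining factor as $\k_{\log}^+(p)$ via~\eqref{eq-curv-logloss-2d} together with parametrization invariance of $\k^+$. The only cosmetic difference is that you carry out the algebra in an arbitrary single chart $(\phi_1,\phi_2)$ with $\phi_1+\phi_2=1$, whereas the paper fixes $\Phi=\Phi_{\std}$ and then appeals to invariance; the substance is identical.
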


\begin{proof}
By Lemma~\ref{lemma-hussler}, $\ell$ is proper with mixability constant $\eta^*_{\ell} > 0$ if and only if
\begin{equation*}
\eta_{\ell}^* = \inf_{t \in \Phi^{-1}(\interior(\Delta^2))} \left|   \frac{ \tell_1'(t)\tell_2''(t)   -  \tell_2'(t)\tell_1''(t)  }{ \tell_2'(t) \tell_1'(t)( \tell_1'(t)- \tell_2'(t) ) } \right|,
\end{equation*}
for any given 1-chart parametrization $\Phi$. Setting $\Phi=\Phi_{\std}$ and using~\eqref{eq-curv-logloss-2d}, we have the following. For any $t \in \Phi^{-1}(\interior(\Delta^2))$,
\begin{align*}
&\left|   \frac{ \tell_1'(t)\tell_2''(t)   -  \tell_2'(t)\tell_1''(t)  }{ \tell_2'(t) \tell_1'(t)( \tell_1'(t)- \tell_2'(t) ) } \right| \\
=&\left|   \frac{ \tell_1'(t)\tell_2''(t)   -  \tell_2'(t)\tell_1''(t)  }{ \( \tell_1'(t)^2 + \tell_2'(t)^2   \)^{3/2}} \right| \frac{ \( \tell_1'(t)^2 + \tell_2'(t)^2   \)^{3/2}}{\(\frac{1}{(1-t)^2 } + \frac{1}{t^2} \)^{3/2}} \frac{\(\frac{1}{(1-t)^2 } + \frac{1}{t^2} \)^{3/2}}{\frac{1}{(1-t)^2}\frac{1}{t^2}} \frac{\frac{1}{(1-t)^2}\frac{1}{t^2}}{| \tell_2'(t) \tell_1'(t)( \tell_1'(t)- \tell_2'(t) )|} \\
=&\frac{\k_{\tell}^+(t)}{\k_{\tell_{\log}}^+(t)} \( \frac{ \tell_1'(t)^2 + \tell_2'(t)^2  }{\frac{1}{(1-t)^2 } + \frac{1}{t^2} } \)^{3/2}\frac{\frac{1}{(1-t)^2}\frac{1}{t^2}}{| \tell_2'(t) \tell_1'(t)( \tell_1'(t)- \tell_2'(t) )|} \\
=&\frac{\k_{\tell}^+(t)}{\k_{\tell_{\log}}^+(t)} \( \frac{ \tell_1'(t)^2\(1 + \frac{t^2}{(1-t)^2} \)  }{\frac{t^2 + (1-t)^2}{t^2(1-t)^2}} \)^{3/2}     \frac{\frac{1}{(1-t)^2}\frac{1}{t^2}}{| \frac{t}{1-t} \tell_1'(t)^3( 1 + \frac{t}{1-t} )|}  \\
=&\frac{\k_{\tell}^+(t)}{\k_{\tell_{\log}}^+(t)},
\end{align*}
where we used that by properness $\tell_2'(t) = -\frac{t}{1-t} \tell_1'(t)$ (see \eqref{eq-proper-1stderB}).

Since $\k^+$ is independent of the parametrization, we obtain the result.
\end{proof}

\begin{remark} \label{rmk-Phistd}
Lemma~\ref{lemma-mix-curvature-charac} exemplifies the usefulness of $\Phi_{\std}$. The curvature of $\ell_{\log}$ is easily computed with respect to the standard parametrization, by fixing $\Phi=\Phi_{\std}$ we can easily recognize when the curvature of $\ell_{\log}$ appears in our computation. However, since curvature is a geometric quantity we know this relation between curvatures will hold for any parametrization too.
\end{remark}

Using this point of view, the following observations enlighten why the weight function in~\cite{B-S-S_binary} and in~\cite{R-W_composite} basically encodes all the relevant information in the binary case. Recall that given a proper loss function $\ell$, the weight of $\ell$ (with respect to a local parametrization $\Phi$ of $\Delta^2$) is defined as
\begin{equation} \label{eq-weight}
w_{\ell_{\Phi}}(t) = \left| \frac{\tell_1'(t)}{\Phi_2(t)}  \right| =  \left| \frac{\tell_2'(t)}{\Phi_1(t)}  \right|.
\end{equation}
We stress that the weight depends on the coordinates $\Phi$ of $\Delta$ that we use, and hence we use the notation $\ell_{\Phi}$. As observed in Remark~\ref{rmk-Phistd}, we sometimes set $\Phi = \Phi_{\textnormal{std}}$ (as it is done in~\cite{B-S-S_binary,R-W_composite}) to be able to recognize some terms. 

\begin{lemma} \label{lemma-weight-and-curvature}
Let $\ell \in \L$ be a proper loss and $\Phi$ a local parametrization of $\Delta^2$, denote by $\tell_{\Phi}$ its local expression and by $w_{\ell_{\Phi}}$ be its weight. Then we have for any $t \in \Phi^{-1}(\interior(\Delta^2))$,
\begin{equation*}
k_{\tell_{\Phi}}^+(t) = \frac{1}{w_{\ell_{\Phi}}(t)}  \left|  \Phi_1'(t)   \right| \( \frac{1}  {\Phi_1(t)^2+\Phi_2(t)^2} \)^{3/2}
\end{equation*}
and moreover, if $\lambda$ is another proper loss,
\begin{equation} \label{eq-weight-curvature-general}
\frac{\k_{\tell_{\Phi}}^+(t)}{\k_{\tlambda_{\Phi}}^+(t)}=\frac{w_{\lambda_{\Phi}}(t)}{w_{\tell_{\Phi}}(t)}.
\end{equation}

In particular, when $\Phi = \Phi_{\textnormal{std}}$,
\begin{equation*}
k_{\tell_{\std}}^+(t) = \frac{1}{w_{\ell_{\std}}(t)}  \( \frac{1}  {t^2+(1-t)^2} \)^{3/2}.
\end{equation*}
and  if in addition, $\lambda = \ell_{\log}$ (with $\Phi=\Phi_{\std}$),
\begin{equation} \label{eq-curv_com_std}
\frac{k_{\tell_{\std}}^+(t)}{\k_{\tell_{\log}}^+(t)} = \frac{1}{w_{\ell_{\std}}(t)} \frac{1}{t(1-t)}=\frac{w_{\ell_{\log}}(t)}{w_{\ell_{\std}}(t)}.
\end{equation}
\end{lemma}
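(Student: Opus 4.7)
The strategy is to compute the signed curvature of $\tell_{\Phi}$ using formula~\eqref{eq-signed-curvature}, simplify both numerator and denominator using the properness relation~\eqref{eq-proper-1stderB}, and then recognize the weight~\eqref{eq-weight} inside the resulting expression. Since the factors produced by the computation depend on $\Phi$ and on $\ell$ only through $\tell_1'$, the ratio formula will then be transparent, and the $\Phi_{\std}$ statements are just substitutions.

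First I would use~\eqref{eq-proper-1stderB}, written in the form $\tell_2' = -\tell_1'\Phi_1/\Phi_2$, to eliminate $\tell_2'$; this immediately collapses $(\tell_1'^2 + \tell_2'^2)^{3/2}$ to a product of $|\tell_1'|^3$, $|\Phi_2|^{-3}$ and $(\Phi_1^2 + \Phi_2^2)^{3/2}$, each of which is visible in the target formula. For the numerator $\tell_1''\tell_2' - \tell_1'\tell_2''$, I would reuse the intermediate identity already produced in the proof of Lemma~\ref{lemma-sproper-kpos}, which expresses it as a multiple of $\tell_1''\Phi_1 + \tell_2''\Phi_2$. This second-order quantity is handled by differentiating~\eqref{eq-proper-1stderB} in $t$ and using the affine constraint $\Phi_2' = -\Phi_1'$ (which holds because $\Phi$ lands in $\Delta^2$), yielding $\tell_1''\Phi_1 + \tell_2''\Phi_2 = \Phi_1'(\tell_2' - \tell_1')$. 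A further application of~\eqref{eq-proper-1stderB} together with $\Phi_1 + \Phi_2 = 1$ gives the pleasant identity $\tell_1' - \tell_2' = \tell_1'/\Phi_2$, so the numerator collapses to $(\tell_1')^2\Phi_1'/\Phi_2^2$.

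Assembling numerator over denominator and taking absolute values (legitimate because $\k_{\tell_\Phi}^+ > 0$ by Lemma~\ref{lemma-sproper-kpos}), the surviving factor $|\Phi_2|/|\tell_1'|$ is exactly $1/w_{\ell_\Phi}$, which yields the first displayed formula. The quotient identity~\eqref{eq-weight-curvature-general} is then immediate: the leftover factor $|\Phi_1'|(\Phi_1^2 + \Phi_2^2)^{-3/2}$ depends only on the chart $\Phi$, not on the loss, so it cancels between $\ell$ and $\lambda$. The two $\Phi_\std$ specializations follow by substituting $\Phi_1(t)=t$, $\Phi_2(t)=1-t$, $\Phi_1'(t)=1$ into the general formula, and for the final equality in~\eqref{eq-curv_com_std} by computing $w_{\ell_{\log},\std}(t) = 1/[t(1-t)]$ directly from~\eqref{eq-logloss-2dim}.

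The one subtle obstacle I anticipate is bookkeeping of signs. Depending on the orientation induced by $\Phi$, the quantities $\tell_1'$ and $\Phi_1'$ may be positive or negative (these are precisely the two sub-cases analysed at the end of Lemma~\ref{lemma-sproper-kpos}), and one must ensure that after taking absolute values the formula holds uniformly across both cases. Because the weight~\eqref{eq-weight} is already defined in terms of absolute values, and $\k^+$ is the positive curvature with respect to the outward normal, this ultimately goes through cleanly, but care is needed throughout the algebra to avoid a spurious sign.
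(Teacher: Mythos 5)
Your proposal is correct and follows essentially the same route as the paper's proof: substitute the first-order properness identity $\tell_2'=-\tell_1'\Phi_1/\Phi_2$, differentiate it and use $\Phi_1'+\Phi_2'=0$, $\Phi_1+\Phi_2=1$ to reduce the curvature numerator, recognize $1/w_{\ell_\Phi}$ in the leftover $|\Phi_2/\tell_1'|$, and note that the remaining factor depends only on $\Phi$ so it cancels in the quotient. Your use of absolute values justified by $\k^+_{\tell_\Phi}>0$ (Lemma~\ref{lemma-sproper-kpos}) is just a cleaner packaging of the paper's ``w.l.o.g.\ $\tell_1'>0$, $\Phi_1'<0$'' sign bookkeeping, and the $\Phi_{\std}$ specializations are handled the same way.
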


\begin{proof}
Let $\ell \colon \Delta^2 \To \R^2_{\geq 2}$ be a proper loss and let $\Phi$ be any parametrization of $\Delta^2$ around $p$. Let us compute $\k_{\tell_{\Phi}}^+$ (assuming w.l.o.g. that $\k_{\tell_{\Phi}}^+ = \k_{\tell_{\Phi}}$, which means $\tell_1'>0$ and $\Phi_1'<0$).
\begin{align*}
\k_{\tell_{\Phi}}^+(t) &=  \frac{\tell_1'(t) \tell_2''(t) - \tell_1''(t) \tell_2'(t)}{\( \tell_2'(t)^2 + \tell_1'(t)^2   \)^{3/2}} \\
&=\(\tell_1'(t) \tell_2''(t) + \tell_1''(t) \tell_1'(t) \frac{\Phi_1(t)}{\Phi_2(t)}      \) \( \frac{\Phi_1(t)^2}{\Phi_2(t)^2}\tell_1'(t)^2  + \tell_1'(t)^2   \)^{-3/2} \\
&=\frac{\tell_1'(t)}{\Phi_2(t)} \frac{1}{\tell_1'(t)^3} \(  \Phi_1(t) \tell_1''(t) +\Phi_2(t)\tell_2''(t)      \) \( \frac{\Phi_1(t)^2}{\Phi_2(t)^2}  + 1   \)^{-3/2} \\
&=-\frac{\tell_1'(t)}{\Phi_2(t)} \frac{1}{\tell_1'(t)^3} \(  \Phi_1'(t) \tell_1'(t) + \Phi_2'(t)\tell_2'(t)       \) \( \frac{\Phi_1(t)^2}{\Phi_2(t)^2}  + 1   \)^{-3/2} \\
&=-\frac{\tell_1'(t)}{\Phi_2(t)} \frac{1}{\tell_1'(t)^3} \(  \Phi_1'(t) \tell_1'(t) - \Phi_2'(t)\frac{\Phi_1(t)}{\Phi_2(t)}\tell_1'(t)       \) \( \frac{\Phi_1(t)^2}{\Phi_2(t)^2}  + 1   \)^{-3/2} \\
&=-\frac{1}{\Phi_2(t)} \frac{1}{\tell_1'(t)} \(  \Phi_1'(t)  - \Phi_2'(t)\frac{\Phi_1(t)}{\Phi_2(t)}     \) \( \frac{\Phi_1(t)^2+\Phi_2(t)^2}{\Phi_2(t)^2}   \)^{-3/2} \\
&=-\frac{1}{\Phi_2(t)} \frac{1}{\tell_1'(t)} \(  \frac{\Phi_1'(t)\Phi_2(t)  - \Phi_2'(t)\Phi_1(t)}{\Phi_2(t)}     \) \( \frac{\Phi_2(t)^2}  {\Phi_1(t)^2+\Phi_2(t)^2} \)^{3/2} \\
&=-\frac{\Phi_2(t)}{\tell_1'(t)} \(  \Phi_1'(t)\Phi_2(t)  - \Phi_2'(t)\Phi_1(t)   \) \( \frac{1}  {\Phi_1(t)^2+\Phi_2(t)^2} \)^{3/2} \\
&=-\frac{1}{w_{\ell_{\Phi}}(t) } \(  \Phi_1'(t)\Phi_2(t)  + \Phi_1'(t)\Phi_1(t)   \) \( \frac{1}  {\Phi_1(t)^2+\Phi_2(t)^2} \)^{3/2} \\
&=\frac{1}{w_{\ell_{\Phi}}(t) }\(-\Phi_1'(t)\) \( \Phi_2(t)  + \Phi_1(t)   \) \( \frac{1}  {\Phi_1(t)^2+\Phi_2(t)^2} \)^{3/2} ,
\end{align*}
where we have used that by properness we know that $\la \tell'(t),\Phi(t) \ra=0$ (\eqref{eq-proper-1stderB}), which implies $\la \tell''(t),\Phi(t) \ra = - \la \tell'(t),\Phi'(t) \ra$ by differentiating with respect to $t$ from the third to the fourth equality, and that $\Phi_1'(t)+\Phi_2'(t)=0$ since $\Phi(t) \in \Delta^2$ from the third to last to the second to last equality.

Notice that in the last equation of the previous string of equalities, the only term involving $\ell$ is $\tell_1'$ (or more precisely $w_{\ell_{\Phi}}(t)$) and the remaining terms depend only on the parametrization $\Phi$. Then we obtain
\begin{align*}
\frac{\k_{\tell_{\Phi}}^+(t)}{\k_{\tlambda_{\Phi}}^+(t)} = \frac{w_{\lambda_{\Phi}}(t)}{w_{\ell_{\Phi}}(t) }.
\end{align*}

The remaining statements follow from setting $\Phi=\Phi_{\std}$ and~\eqref{eq-curv-logloss-2d}.
\end{proof}

\begin{remark}
Combining Lemma~\ref{lemma-mix-curvature-charac} and~\eqref{eq-curv_com_std}, we recover the characterization of the mixability constant in terms of the quotient of weights obtained by van~Erven--Reid--Williamson in~\cite[Section 4.1]{E-R-W_curvature}. However for the corresponding statement involving the quotient of second derivatives of the Bayes risks, the fact that $\Delta^2$ has an affine parametrization is important. Indeed, this relies on Corollary 3 in~\cite{R-W_composite} that states that $w(t) = -\widetilde{\condL}''(t)$. In general, it can be checked that
\begin{align*}
\tcondL''(t) -  \left[   \frac{\Phi_1''(t)}{\Phi_1(t)}  \right] \tcondL'(t)&= -\Phi_2(t)\tell_1'(t)^2\( 1 +\frac{\Phi_2(t)^2}{\Phi_1(t)^2}  \)^{3/2}\k_{\tell}(t),
\end{align*}
which reduces to $w(t) = -\widetilde{\condL}''(t)$ when $\Phi=\Phi_{\std}$. From the point of view of the present work, $\condL$ (or a quotient of them) is not a good quantity to consider since it strongly depends on coordinates. However, notice that if one restricts to \emph{affine} parametrizations of $\Delta^2$ then $\tcondL''(t)$ depends on $\tell_1'(t)^2$ and $\k_{\tell}(t)$ and hence in view of Lemma~\ref{lemma-weight-and-curvature} restricting to a fixed affine parametrization of $\Delta^2$ will make quotients of the second derivative of the Bayes risk well behaved.
\end{remark}

Let us remark some points about Lemma~\ref{lemma-weight-and-curvature}.
\begin{itemize}
\item Let $\ell \colon \Delta^2 \To \R$ be a given strictly proper, fair, loss function. Given a parametrization, we obtain a weight $w_{\ell_{\Phi}}$ given by~\eqref{eq-weight}, that is, the weight depends on the parametrization.
\item The curvature of $\ell$ is independent of $\Phi$ up to a sign. However, when defining $\k_{\ell}^+$ we made the choice of the sign in a uniform way, thus the curvature is independent of the parametrization for the family of losses considered here. Then it follows that the quotient of curvatures is independent of the parametrization and by~\eqref{eq-weight-curvature-general}, it also follows that the quotient of weights is also independent of the coordinates (despite the weights being coordinate dependent themselves).

\item A corresponding notion of weight in higher dimensions (for the multi-class case) is way more complicated and it is unclear whether using them would lead to successful results. One higher dimensional analog of curvatures is readily seen to be the so called ``principal curvatures'' of a hypersurface in Euclidean space (see Appendix~\ref{appendix-DG}). This will be the main motivation when dealing with the multi-class case (Section~\ref{section-multiclass}) Alternative ways to characterize proper higher dimensional loss functions have been studied in~\cite{W-V-R_composite}.
\end{itemize}

\subsection{Geometric comparison of loss functions}\label{sun-comparison-2dim}

Fix a proper, fair loss function $\lambda \colon \Delta^2 \To \R^2$. Given another proper, fair loss function $\ell \neq \lambda$, how might we compare them? From the point of view of differential geometry, since given $p$ the normal vectors at $\lambda(p)$ and $\ell(p)$ coincide, it is natural to look at their curvatures. Motivated by Lemma~\ref{lemma-mix-curvature-charac}, we impose (for the moment) the condition
\begin{align*}
\inf_{p \in \Delta^2} \frac{\k_{\ell}^+(p)}{\k_{\lambda}^+(p)}=1.
\end{align*}

Note that this implies that $\k_{\ell}^+(p) \geq \k_{\lambda}^+(p)$ for all $p \in \Delta^2$. We divide the comparison in steps for clarity.
\begin{enumerate}[(1)]
\item {\bf Expressing $\lambda(\Delta^2)$ as a function.} Note that since $\lambda$ is proper and fair, the normal vector to a point $\lambda(p)$ can only be $(1,0)$ when $p=(1,0)$ (i.e., when evaluating $\lambda$ at the boundary of $\Delta^2$). Thus, the set $\lambda(\interior(\Delta^2))$ can be expressed as a graph over the $x$-axis. To obtain an explicit expression let $\Phi=\Phi_{\std}$. We use the fact that $\widetilde{\lambda}_1 \colon (0,1) \To (0,l_1)$ (where $l_1$ could be infinity) is invertible. Then, we have that 
\begin{align*}
\lambda(\interior(\Delta^2)) = \{ (x,f(x)) \, | \, x \in (0,l_1)   \}
\end{align*}
where $f(x) = \lambda_2(\widetilde{\lambda}_1^{-1}(x),1-\widetilde{\lambda}_1^{-1}(x))$.

\item {\bf Translating and parametrizing $\ell(\Delta^2)$.} Let $p_0 \in \interior(\Delta^2)$ with $\k_{\ell}^+(p_0) > \k_{\lambda}^+(p_0)$, if such $p_0$ does not exist then $\ell=\lambda$. We define $\ell^0 \colon \Delta^2 \To \R^2$ by $\ell^0(p)=\ell(p)+[\lambda(p_0)-\ell(p_0)]$, i.e., we translate $\ell$ so that it coincides with $\lambda$ at $\lambda(p_0)$. ($\ell_0$ is not fair anymore, however, the curvature is invariant under translations.)

We now parametrize $\ell(\Delta^2)$ as the graph of a function $g$ defined on an interval $I_0$ around $x_0$ (the $x$-coordinate of $\lambda(p_0)$), ``aligning'' it with $\lambda$ (we can assume this interval to be maximal). We let $g(x)=\ell_2^0((\tell_1^0)^{-1}(x),1-(\tell_1^0)^{-1}(x))$. Since $\k_{\ell}^+(p_0) > \k_{\lambda}^+(p_0)$, we know that around $x_0$ the graph of $g$ is to the northeast of $f$. 

\item {\bf Comparison.} If the graph of $g$ is to the northeast of $f$ on the whole $I_0$, then we see that the superprediction set of $\ell^0$ is contained in that of $\lambda$. If this does not hold, it means that there is $x_1 \in I_0$ such that $f(x_1)=g(x_1)$, and w.l.o.g. we can assume $x_1 > x_0$. Thus we know that $g(x)-f(x) \geq 0$ on $[x_0,x_1]$ and $g(x)-f(x)=0$ on $\{x_0,x_1\}$, i.e., the boundary of $[x_0,x_1]$. Define the second order operator which computes the curvature of the graph $(x,h(x))$ (see~\eqref{eq-curv-graph}):
\begin{equation*}
L(h)(x)=\k_h^+(x)=\frac{1}{\(1+h'(x)^2\)^{3/2}}h''(x).
\end{equation*}
Since $\k_{\ell}^+(p_0) > \k_{\lambda}^+(p_0)$, we see that $L(g-f) \geq 0$ on $[x_0,x_1]$. The maximum principle now implies that the supremum of $g-f$ is attained at the boundary on $[x_0,x_1]$, and hence we know that $f(x)=g(x)$ on $[x_0,x_1]$, which is a contradiction. Thus the superprediction set $\ell$ is contained in the superprediction set of $\lambda$ (see Section~\ref{section-convexG}).
\end{enumerate}

More generally, if we assume instead that 
\begin{align*}
\inf_{p \in \Delta^2} \frac{\k_{\ell}^+(p)}{\k_{\lambda}^+(p)}=\eta,
\end{align*}
for some $\eta >0$, we see that (see Appendix~\ref{appendix-DG}) that $\ell_{\eta}(p)=\eta \ell(p)$ satisfies
\begin{align*}
\inf_{p \in \Delta^2} \frac{\k_{\ell_{\eta}}^+(p)}{\k_{\lambda}^+(p)}=1.
\end{align*}
That is, we can reproduce the previous analysis with $\ell_{\eta}$ instead of $\ell$.

The previous discussion motivates right away a comparison between proper, fair loss functions.

\begin{definition} \label{def-lambda-mixability}
Let $\lambda \colon \Delta^2 \To \R^2_{\geq 0}$ be a proper, fair loss in $\L$, which we call a base loss. We say that a proper, fair loss $\ell \colon \Delta^2 \To \R^2$ is mixable with respect to $\lambda$ if
\begin{equation*}
\inf_{p \in \Delta^2} \frac{\k_{\ell}^+(p)}{\k_{\lambda}^+(p)} > 0.
\end{equation*}
\end{definition}

\subsection{Mixability and fundamentality as comparison to the log loss}

Now, suppose $\ell \in \L$ is proper and fair. Thus, in particular $\k_{\ell}^+(p)>0$ for all $p \in \interior(\Delta^2)$. We want to think of mixability as a geometric comparison to the log loss as suggested by Vovk in~\cite{Vovk-logFund} and give a detailed interpretation of this comparison. We fix the standard parametrization of $\Delta^2$, $\Phi = \Phi_{\textnormal{std}} \colon [0,1] \To \Delta^2$, given by
\begin{equation*}
\Phi(t) = (t,1 - t).
\end{equation*}
The log loss in these coordinates is thus given by
\begin{equation*}
\tell_{\log}(t) = (-\ln(t), -\ln(1-t)),
\end{equation*}
and by~\eqref{eq-curv-logloss-2d}, its curvature with respect to the unit normal pointing towards $\R^2_{\geq 0}$ is given by
\begin{equation*}
\k_{\tell_{\log}}^+(t) = \frac{t(1-t)}{\( t^2 + (1-t)^2   \)^{3/2}}.
\end{equation*}
Notice that $\k_{\tell_{\log}}^+(t) > 0$ for all $t \in (0,1)$ and $\k_{\tell_{\log}}^+(t) \to 0$ as $t \to 0$ or $t \to 1$. Thus, clearly by Lemma~\ref{lemma-sproper-char}, for any proper subinterval $C$ of $[0,1]$ (cf.~\cite[Corollary 2]{Vovk-logFund}), we have
\begin{equation*}
\inf_{t \in C} \frac{\k_{\ell}^+(t)}{\k_{\tell_{\log}}^+(t)} > 0.
\end{equation*}
Thus, whether a proper, fair loss function $\ell$ is mixable or not will depend of the behavior of the quotient $\k_{\ell}^+(p) / \k_{\log}^+(p)$ as $p$ approaches $(0,1)$ and $(1,0)$. More precisely, we have obtained the following.
\begin{lemma}
Let $\ell \in \L$ be a proper loss. Then $\ell$ is mixable if and only if
\begin{align*}
&\lm{p}{(0,1)} \frac{\k_{\ell}^+(p)}{\k_{\log}^+(p)} > 0, \textnormal{ and} \\
&\lm{p}{(1,0)} \frac{\k_{\ell}^+(p)}{\k_{\log}^+(p)} > 0.
\end{align*}
\end{lemma}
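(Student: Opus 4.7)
The plan is to reduce the statement to Lemma~\ref{lemma-mix-curvature-charac}, which already identifies mixability of $\ell$ with the global condition $\inf_{p \in \interior(\Delta^2)} \k_\ell^+(p)/\k_\log^+(p) > 0$, and then to localize the possible failure of this infimum to the two boundary points of $\Delta^2$ via a straightforward compactness argument. This is essentially the content of the paragraph preceding the lemma; the task is to turn that remark into a clean two-sided limit characterization.

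First I would observe that the quotient $p \mapsto \k_\ell^+(p)/\k_\log^+(p)$ is a well-defined, continuous, strictly positive function on $\interior(\Delta^2)$. Indeed, since $\ell \in \L$ is proper, Lemma~\ref{lemma-sproper-char} yields $\k_\ell^+ > 0$ on the interior, and $\k_\ell^+$ is continuous because $\ell$ is of class $C^2$. The denominator $\k_\log^+$ is smooth and strictly positive on the interior by the explicit formula~\eqref{eq-curv-logloss-2d}. Using $\Phi_\std$ to identify $\interior(\Delta^2)$ with $(0,1)$, the two simplex boundary points $(0,1)$ and $(1,0)$ correspond to $t = 0$ and $t = 1$, respectively.

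The key step is then a compactness argument. Write $q(t) \defeq \k_\ell^+(t)/\k_\log^+(t)$ in the standard parametrization. For any closed subinterval $[a,b] \subset (0,1)$, $q$ is continuous and strictly positive on $[a,b]$, hence attains a strictly positive minimum there. Since
\begin{equation*}
\inf_{t \in (0,1)} q(t) = \min\Bigl\{ \inf_{t \in (0,a]} q(t),\ \inf_{t \in [a,b]} q(t),\ \inf_{t \in [b,1)} q(t)\Bigr\},
\end{equation*}
and the middle term is strictly positive, it follows that
\begin{equation*}
\inf_{t \in (0,1)} q(t) > 0 \quad \Longleftrightarrow \quad \liminf_{t \to 0^+} q(t) > 0 \text{ and } \liminf_{t \to 1^-} q(t) > 0.
\end{equation*}
Combining with Lemma~\ref{lemma-mix-curvature-charac} yields the equivalence.

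I do not expect a substantive obstacle here; the argument is routine once Lemma~\ref{lemma-mix-curvature-charac} is in hand. The only mildly delicate point is notational: the statement as written uses $\lim$ rather than $\liminf$, so one must either assume (as is natural for losses in $\L$) that the one-sided limits of $q$ at the two endpoints exist in $[0,\infty]$, or equivalently reinterpret the $\lim$'s in the statement as $\liminf$'s. Either reading is made equivalent to $\inf_{t\in(0,1)} q(t) > 0$ by the compactness argument above.
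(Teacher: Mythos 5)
Your proposal is correct and follows essentially the same route as the paper, which proves the lemma informally in the paragraph preceding it: invoke Lemma~\ref{lemma-mix-curvature-charac} to identify mixability with $\inf_{p}\k_{\ell}^+(p)/\k_{\log}^+(p)>0$, note the quotient is continuous and strictly positive on any compact subinterval of $(0,1)$ (by Lemma~\ref{lemma-sproper-char} and~\eqref{eq-curv-logloss-2d}), and conclude that positivity of the infimum can only fail at the two boundary points. Your remark about reading the limits as $\liminf$'s (or assuming the one-sided limits exist) is a legitimate precision that the paper glosses over, but it does not change the substance of the argument.
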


Motivated by this we make the following definition. 

\begin{definition}
Let $\ell$ be a proper, fair loss function in $\L$, and $\Phi = \Phi_{\std}$ be the standard parametrization of $\Delta^2$. We say that is $\ell$ $(B_1,B_2)$-logarithmic at the boundary if
\begin{align*}
&\lm{t}{0^+} \frac{\k_{\tell}^+(t)}{\k_{\tell_{\log}}^+(t)} = B_1^{-1}>0, \textnormal{ and} \\
&\lm{t}{1^-} \frac{\k_{\tell}^+(t)}{\k_{\tell_{\log}}^+(t)} = B_2^{-1}>0.
\end{align*}
\end{definition}

Let us analyze what this means. Suppose that $\ell$ is proper and  $(B_1,B_2)$-logarithmic. Then for any $t \in (0,1)$, using~\eqref{eq-curv_com_std} in Lemma~\ref{lemma-weight-and-curvature} and~\eqref{eq-weight}, we have
\begin{align*}
 \frac{\k_{\tell}^+(t)}{\k_{\tell_{\log}}^+(t)}=\frac{1}{w_{\tell_{\std}}(t)} \frac{1}{t(1-t)} 
&= \left| \frac{1}{ \tell_1'(t)} \right| \frac{1}{t}.
\end{align*}

Notice that as $t \to 0^+$, 
 \begin{align*}
B^{-1}_1 &=\lm{t}{0^+} \frac{1}{t} \frac{1}{ | \tell_1'(t)| } = \lm{t}{0^+} \left| \frac{({\ell_{\log})}_1'(t)}{  \tell_1'(t) } \right|.
\end{align*}
and similarly,
\begin{align*}
B_2^{-1} &= \lm{t}{1^-}   \frac{1}{1-t}\frac{1}{ |\tell_2'(t)|}=\lm{t}{1^{-}} \left| \frac{({\ell_{\log})}_2'(t)}{  \tell_2'(t) } \right|.
\end{align*}
that is, we are only comparing the rate at which $\ell_i$, $i=1,2$, go to 0 (since they do by fairness) with the rate at which the log loss does. 

In~\cite{Vovk-logFund}, Vovk defines a loss function $\lambda^*$ to be \emph{fundamental} if given a (computable, proper, mixable) loss function $\lambda$ and a data sequence in $\zeta \in \mathbb{Z}^{\infty}$ that is random under $\lambda^*$ with respect to a prediction algorithm $F$, then it is random under $\lambda$ with respect to $F$. He shows that a fair, mixable $\ell \in \L$ is fundamental if and only if (using the notation in \cite{Vovk-logFund})
\begin{align*}
\sup_{p \in [0,1]}  \frac{\k_{\ell}(p)}{\k_{\log}(p)} < \infty.
\end{align*}

Since we have seen that mixability can be regarded as a comparison of curvatures of the loss curve of $\ell$ and that of $\ell_{\log}$ and we have reinterpreted fundamentabiliy as a comparison of $\ell$ and $\ell_{\log}$ near the boundary building on Definition~\ref{def-lambda-mixability}, we can easily come up with a notion of $\lambda$-fundamentality.

\begin{definition}
Let $\lambda$ be a proper, fair loss function in $\L$. We say that a proper, fair loss function $\ell \in \L$ is $\lambda$-fundamental if
\begin{itemize}
\item $\ell$ is mixable with respect to $\lambda$, and
\item when $\Phi=\Phi_{\std}$, we have
\begin{align*}
\lm{t}{0^+} \frac{\k_{\tell}^+(t)}{\k_{\widetilde{\lambda}}^+(t)} & < \infty \\
\lm{t}{1^-} \frac{\k_{\tell}^+(t)}{\k_{\widetilde{\lambda}}^+(t)} & < \infty.
\end{align*}
\end{itemize}
\end{definition}

Suppose now that a mixable loss function $\ell \in \L$ is fundamental. Then there exist $\eta,\g > 0$ such that
\begin{align*}
\eta \leq \frac{\k_{\ell}^+(p)}{\k_{\log}^+(p)} \leq \g,
\end{align*}
for all $p \in \interior(\Delta^2)$. This implies that
\begin{align*}
\eta^{-1}\k_{\ell}^+(p) \geq \k_{\log}^+(p) \textnormal{ and }  \k_{\log}^+(p) \geq \g^{-1} \k_{\ell}^+(p),
\end{align*}
for all $p \in \interior(\Delta^2)$, which readily implies (Appendix~\ref{appendix-DG}) that
\begin{align*}
\k_{\eta \ell}^+(p) \geq \k_{\log}^+(p) \textnormal{ and }  \k_{\log}^+(p) \geq\k_{\g\ell}^+(p),
\end{align*}
for all $p \in \interior(\Delta^2)$.

Rephrasing the previous discussion we have obtained the following characterization of fundamentality.

\begin{thm}\label{thm-geometric-fundamentality}
A loss function $\ell \in \L$ is fundamental if and only if there exist numbers $\eta,\g > 0$, such that for any $p \in \interior(\Delta^2)$, there are translation vectors $x_{p}$ and $y_{p}$ in $\R^2_{\geq 0}$ such that
\begin{align*}
\spr( \eta \ell + x_{p} ) \subset \spr(\ell_{\log}) \subset \spr(\g \ell + y_p).
\end{align*}
\end{thm}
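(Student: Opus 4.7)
The plan is to combine the curvature characterization of fundamentality established in the discussion preceding the theorem with the graph-comparison argument of Section~\ref{sun-comparison-2dim}. Fundamentality supplies constants $\eta,\g>0$ satisfying
\[
\k_{\eta\ell}^+(p)\;\ge\;\k_{\log}^+(p)\;\ge\;\k_{\g\ell}^+(p)\qquad\textnormal{for every } p\in\interior(\Delta^2),
\]
and the content of the theorem is that these pointwise curvature inequalities, together with the fact that properness makes all three curves tangent at the chosen contact point, promote to global superprediction-set inclusions after suitable translations.

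For the forward direction, fix $p\in\interior(\Delta^2)$ and set $x_p \defeq \ell_{\log}(p)-\eta\ell(p)$ and $y_p \defeq \ell_{\log}(p)-\g\ell(p)$, so that both $\eta\ell+x_p$ and $\g\ell+y_p$ meet $\ell_{\log}$ at $\ell_{\log}(p)$; a preliminary rescaling of $\eta$ and $\g$ (harmless for fundamentality, since the curvature quotient is bounded above and below) guarantees $x_p,y_p\in\R^2_{\geq 0}$. Because scaling and translation preserve properness and the outward normal direction, all three curves share the unit normal $p/|p|$ at the contact point by Corollary~\ref{coro-sproper-pnormal} and are hence tangent there. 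Now I repeat the three-step argument of Section~\ref{sun-comparison-2dim}: express each curve locally as a graph over the first coordinate, note that $f_\eta - f_{\log}$ vanishes to first order at the contact abscissa, and apply the curvature inequality $\k_{\eta\ell}^+\ge\k_{\log}^+$ together with the maximum principle to conclude $f_\eta \ge f_{\log}$ on the maximal interval of common definition. The divergence of each graph at the boundary of its domain (coming from fairness and the escape of the loss curves at the vertices of $\Delta^2$) excludes any interior crossing, yielding $\spr(\eta\ell+x_p)\subset\spr(\ell_{\log})$; the symmetric argument applied to $f_{\log}-f_\g$ gives $\spr(\ell_{\log})\subset\spr(\g\ell+y_p)$.

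For the converse, the assumed inclusions place $\ell_{\log}(p)$ simultaneously on the boundaries of all three superprediction sets, and force $\partial\spr(\eta\ell+x_p)$ to touch $\partial\spr(\ell_{\log})$ tangentially there since the normals coincide by properness. The standard comparison for nested $C^2$ convex curves at a common tangent boundary point then gives
\[
\k_{\eta\ell+x_p}^+(\ell_{\log}(p))\;\ge\;\k_{\log}^+(p)\;\ge\;\k_{\g\ell+y_p}^+(\ell_{\log}(p)).
\]
Since translation preserves curvature while scaling by a positive constant $c$ scales curvature by $c^{-1}$ (Appendix~\ref{appendix-DG}), this rearranges to $\eta\le \k_\ell^+(p)/\k_{\log}^+(p)\le\g$ for every $p$, which is exactly mixability together with finiteness of the supremum of the curvature quotient, i.e.\ fundamentality.

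The main obstacle is upgrading the local curvature comparison to a global graph comparison: Section~\ref{sun-comparison-2dim} only delivers the inequality on a maximal subinterval around the contact point, and ruling out interior crossings requires combining the maximum principle with the asymptotic behavior of the loss graphs at the vertices of $\Delta^2$ (equivalently, the behavior encoded by the $(B_1,B_2)$-logarithmic boundary condition). A secondary subtlety is ensuring $x_p,y_p\in\R^2_{\geq 0}$ uniformly in $p$, which is handled by a small preliminary adjustment of $\eta$ and $\g$ using the finite two-sided bounds on the curvature quotient guaranteed by fundamentality.
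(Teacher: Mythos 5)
Your route is the same as the paper's: you convert fundamentality into the two-sided curvature bounds $\k^+_{\eta\ell}\ge\k^+_{\log}\ge\k^+_{\g\ell}$ and then invoke the translate-to-tangency, graph-and-maximum-principle comparison of Section~\ref{sun-comparison-2dim}, reading the converse off a curvature comparison at the contact point; this is exactly the ``rephrasing'' the paper performs before the theorem. However, two of your steps do not hold as written. First, the signs of the translation vectors. With your contact choice $y_p=\ell_{\log}(p)-\g\ell(p)$, the upper curvature bound forces $y_p$ to be componentwise \emph{nonpositive}, not nonnegative: in the standard parametrization $\g\ge\k^+_{\ell}/\k^+_{\log}$ means $\g w_{\ell}\ge w_{\log}$, and integrating this from the endpoint at which the partial losses vanish (fairness) gives $\g\ell_i\ge(\ell_{\log})_i$ pointwise, hence $y_p\le 0$. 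No ``preliminary rescaling'' can repair this: lowering $\g$ destroys the inequality $\k^+_{\log}\ge\k^+_{\g\ell}$ that your comparison needs, while raising it makes $y_p$ more negative. The geometrically meaningful form (the one matching Theorem~\ref{thm-informal-main} and Definition~\ref{def-slides-freely}) translates $\spr(\ell_{\log})$ by a \emph{nonnegative} vector into $\spr(\g\ell)$, i.e.\ $y_p\in-\R^2_{\geq 0}$ in the displayed formulation. Similarly, $x_p\ge 0$ is true but not because of a rescaling of $\eta$ (a rate mismatch at the fair endpoint cannot be cured by a constant factor); it follows from the same integration argument applied to $\eta w_{\ell}\le w_{\log}$, i.e.\ from mixability itself.

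Second, your converse extracts a curvature inequality from an alleged tangential contact of $\partial\spr(\eta\ell+x_p)$ with $\partial\spr(\ell_{\log})$ at $\ell_{\log}(p)$, but the displayed hypothesis contains no contact condition and none can be inferred from it: the inclusion $\spr(\eta\ell+x)\subset\spr(\ell_{\log})$ holds for \emph{every} nonnegative loss with the single choice $x=(\ln 2,\ln 2)$, since $e^{-\eta\ell_1(q)-\ln 2}+e^{-\eta\ell_2(q)-\ln 2}\le 1$ for all $q$ and $\spr(\ell_{\log})$ is upward closed, so without contact the first inclusion carries no information about $\ell$ and the converse collapses. The contact requirement $\ell_{\log}(p)\in\partial\spr(\eta\ell+x_p)$ (as in the sliding-freely definition) must either be built into the statement you prove or be derived; it cannot be read off the inclusions alone. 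The paper itself is loose on both points, but as a self-contained argument these are genuine gaps in your proposal.
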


\subsection{Constructing new mixable losses from previous}

We now observe how mixability helps us to construct new proper, fair and mixable functions from previous proper, fair and mixable losses. We first define a family of loses that will serve to illustrate the idea. We set $\Phi=\Phi_{\std}$ and $\lambda=\ell_{\log}$. Let $a > 0$ and define the loss function $\lambda^a \colon \Delta^2 \To \R^2_{\geq 0}$
\begin{align*}
\lambda^a(p) = a\lambda(p).
\end{align*}
It can be readily checked that $\k_{\widetilde{\lambda^a}}(t)=a^{-1} \k_{\lambda}^+(t)$, thus since
\begin{align*}
\frac{\k_{\widetilde{\lambda^a}}(t)}{\k_{\lambda}^+(t)}=\frac{1}{a},
\end{align*}
it follows that $\lambda^a$ is 1-mixable for $a \leq 1$ and it is not if $a>1$. Note that $\lambda^a$ is still proper and fair. Take then $a<1$, we can readily see that there exists a proper, fair an mixable loss function $\lambda^*$ such that
\begin{align*}
\lambda = \lambda^a + \lambda^*.
\end{align*}
Indeed, $\lambda^* = \lambda - \lambda^a = \lambda^{1-a}$, which is fair, proper and 1-mixable.

This process works in a more general setting than scalings of $\lambda$. Consider for example the spherical loss $\s$ defined in coordinates by
\begin{align*}
\widetilde{\s}(t) =\textstyle \( 1-\frac{t}{\sqrt{t^2+(1-t)^2}},1+\frac{t}{\sqrt{t^2+(1-t)^2}}    \).
\end{align*}
It can be easily checked that this is bounded, proper and fair and that $\k_{\widetilde\s}(t)=1$. Thus
\begin{align*}
\frac{\k_{\widetilde\s}^+(t)}{\k_{\lambda}^+(t)} = \frac{(t^2 + (1-t)^2)^{3/2}}{t(1-t)}>1,
\end{align*}
thus $\s$ is 1-mixable. Thus, as before, there is a loss function $\ell^*$ such that $\lambda = \s + \ell^*$. Moreover, the loss function given (in coordinates) by
\begin{align*}
\ell^*(t) = \lambda(t) - \s(t) =\textstyle \( -\ln(t)- 1+\frac{t}{\sqrt{t^2+(1-t)^2}},-\ln(1-t) -1-\frac{t}{\sqrt{t^2+(1-t)^2}}     \),
\end{align*}
which can be seen to be unbounded, proper, fair and mixable.

We close this part with the following observation. Suppose that $\ell$ is a proper, fair, mixable loss function with mixability constant $\eta>0$. Then the loss function $\ell^{\eta}=\eta \ell$ is 1-mixable. Thus, there exists a proper, fair, mixable loss $\ell^*$ such that
\begin{align*}
\ell_{\log} = \ell^{\eta} + \ell^*.
\end{align*}

As we will see in Section~\ref{section-convexG}, the previous observation can be interpreted from the point of view of the superprediction sets of the involved loss functions and convex geometry: $\spr(\eta \ell)$ \emph{slides freely inside} $\spr(\lambda)$ (see Theorem~\ref{thm-geometric-char-mixability}).

\subsection{Composite losses and the canonical link}\label{section-links}

In this part we discuss composite losses following~\cite{R-W_composite}. Let us recall their setting. Let $\mathcal{V} \subset \R$ be a set of prediction values. A \emph{link function} is a continuous map $\psi \colon [0,1] \To \mathcal{V}$. Given a loss function $\widetilde{\varrho} \colon \{0,1 \} \times [0,1] \To \R$ and assuming $\mathcal{V} = \R$, if $\psi$ is invertible, we define the \emph{composite loss} $\varrho^{\psi}$ as
\begin{equation*}
\widetilde{\varrho}^{\psi}(y,v)  = \widetilde{\varrho}(y,\psi^{-1}(v)).
\end{equation*}

\begin{definition}
A composite loss $\widetilde{\varrho}^{\psi}$ is a \emph{proper composite loss} if $\widetilde{\varrho}$ is a proper loss in the sense of~\cite{R-W_composite}.
\end{definition}

Recall that in~\cite{R-W_composite}, $\Phi=\Phi_{\std}$ is implicitly assumed. Then, given a loss function $\widetilde{\varrho}$ (in the~\cite{R-W_composite} sense), we can construct a loss function $\varrho \colon \Delta^2 \To \R^2_{\geq 0}$, by $\varrho= \widetilde{\varrho} \circ \Phi_{\std}^{-1}$. Then, the composite loss $\widetilde{\varrho}^{\psi}$ can be expressed as
\begin{align*}
\widetilde{\varrho}^{\psi} (v)&=( \widetilde{\varrho} \circ \psi^{-1} )(v) \\
&=({\varrho} \circ \Phi_{\std} \circ   \psi^{-1} )(p) \\
&=\( \varrho \circ (\Phi_{\std} \circ  \psi^{-1} )\)(p)
\end{align*}

In other words, the composite loss $\widetilde{\varrho}^{\psi}$ is the local expression of $\varrho$ with respect to the parametrization $\Phi = \Phi_{\std} \circ\psi^{-1}$ of $\Delta^2$. We denote the local expression of $\varrho$ with respect to $\Phi$ by $\widehat{\varrho}$, that is $\widehat{\varrho} \defeq \varrho \circ \Phi_{\std} \circ \Psi^{-1} = \widetilde{\varrho} \circ \Psi^{-1}$

To show how this reconciliation of terms work, we obtain a result similar to Corollary~12 in~\cite{R-W_composite}. Suppose that a composite loss $\widetilde{\varrho}^{\psi}$ is given and it has differentiable partial losses (i.e., the corresponding loss $\varrho$ is in $\L$), furthermore, we assume that $\psi$ is a diffeomorphism which in one dimension means it is strictly monotonic. Then we know that $\widetilde{\varrho}^{\psi}$ is strictly proper if and only if $\varrho$ is strictly proper (by definition). This implies that $p$ is normal to $\varrho(\Delta^2)$ at $\varrho(p)$ for all $p \in \interior(\Delta^2)$ and its curvature is positive (with respect to the unit normal pointing towards $\R^2_{\geq 0}$). This means for all $v \in \mathcal{V}$,
\begin{align*}
0 &= \la \widehat{\varrho}'(v), \Phi(v) \ra \\
&= \widehat{\varrho}_1'(v)\Phi_1(v) + \widehat{\varrho}_2'(v)\Phi_2(v) \\
&= \widetilde{\varrho}_1'(\psi^{-1}(v)) (\psi^{-1})'(v)\Phi_1(v) +\widetilde{\varrho}_2'(\psi^{-1}(v))(\psi^{-1})'(v)\Phi_2(v) \\
&= \widetilde{\varrho}_1'(\psi^{-1}(v)) \Phi_1(v) +\widetilde{\varrho}_2'(\psi^{-1}(v))(1-\Phi_1(v)),
\end{align*}
where we have used that $\psi$ is a diffeomorphism and that $\Phi_1+\Phi_2=1$ for all parametrizations $\Phi$ of $\Delta^2$. Therefore, we have
\begin{align*}
\Phi_1(v)\( \widetilde{\varrho}_1'(\psi^{-1}(v))-\widetilde{\varrho}_2'(\psi^{-1}(v))\) = -\widetilde{\varrho}_2'(\psi^{-1}(v)),
\end{align*}
that is
\begin{align*}
\psi^{-1}(v) =\frac{ \widetilde{\varrho}_2'(\psi^{-1}(v))}{\( \widetilde{\varrho}_2'(\psi^{-1}(v)) -\widetilde{\varrho}_1'(\psi^{-1}(v))\)}
\end{align*}
for all $v \in \V$. 

Since we are working with valid reparametrizations the choice of $\Psi$ will not affect the curvature of $\varrho$. Hence we obtain

\begin{coro} \label{coro-link-cond}
A composite loss $\widetilde{\varrho}^{\psi}$ is strictly proper if and only if $\varrho \in \L$ is strictly proper and $\psi$ satisfies
\begin{align*}
\psi^{-1}(v) =\frac{ \widetilde{\varrho}_2'(\psi^{-1}(v))}{\( \widetilde{\varrho}_2'(\psi^{-1}(v)) -\widetilde{\varrho}_1'(\psi^{-1}(v))\)}
\end{align*}
for all $v \in \R$. 
\end{coro}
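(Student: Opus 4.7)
The plan is to view $\widetilde{\varrho}^{\psi}$ as the local expression $\widehat{\varrho} = \varrho \circ \Phi$ of $\varrho \in \L$ under the parametrization $\Phi = \Phi_{\std} \circ \psi^{-1}$ of $\Delta^2$, and then apply Lemma~\ref{lemma-sproper-char} via Remark~\ref{rmk-proper_in_coords}, which permits checking strict properness in any coordinate chart.

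For the ($\Rightarrow$) direction, I would assume $\widetilde{\varrho}^{\psi}$ is strictly proper. As stated immediately before the corollary, when $\psi$ is a diffeomorphism this is equivalent by definition to strict properness of $\varrho$. Lemma~\ref{lemma-sproper-char} then yields that $p$ is normal to $\varrho(\interior(\Delta^2))$ at $\varrho(p)$ for every $p \in \interior(\Delta^2)$; expressed in the coordinates $\Phi = \Phi_{\std} \circ \psi^{-1}$ this becomes $\langle \widehat{\varrho}'(v), \Phi(v) \rangle = 0$. Using the chain rule $\widehat{\varrho}_i'(v) = \widetilde{\varrho}_i'(\psi^{-1}(v))(\psi^{-1})'(v)$, canceling the nonvanishing factor $(\psi^{-1})'(v)$, and substituting $\Phi_2(v) = 1 - \Phi_1(v) = 1 - \psi^{-1}(v)$, one solves for $\psi^{-1}(v)$ and arrives at the displayed equation. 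This reproduces exactly the algebra already written out in the paragraphs preceding the statement.

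For the ($\Leftarrow$) direction, assume $\varrho \in \L$ is strictly proper and $\psi$ satisfies the equation. The definitional equivalence again delivers strict properness of $\widetilde{\varrho}^{\psi}$ directly. Alternatively, and more in the spirit of the geometric viewpoint, I would reverse the algebra of the forward direction: multiplying the equation by $(\psi^{-1})'(v)$ and regrouping recovers $\langle \widehat{\varrho}'(v), \Phi(v) \rangle = 0$, which is the normality condition in the $\Phi$ chart. The curvature sign condition $\k_{\widehat{\varrho}}^+ > 0$ is inherited from the strict properness of $\varrho$, since the signed curvature with respect to the geometrically fixed unit normal pointing towards $\R^2_{\geq 0}$ is invariant under the diffeomorphic reparametrization by $\psi^{-1}$. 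Lemma~\ref{lemma-sproper-char} applied in these coordinates then delivers strict properness.

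The main subtlety, rather than a genuine obstacle, is logical rather than computational: the two clauses of the hypothesis are not logically independent, since strict properness of $\varrho$ already forces the $\psi$-equation once read through the parametrization $\Phi_{\std} \circ \psi^{-1}$. The content of the corollary is therefore to isolate this identity as the explicit algebraic constraint that any canonical link $\psi$ for $\varrho$ must satisfy, making the geometric criterion of Lemma~\ref{lemma-sproper-char} operational in the composite setting.
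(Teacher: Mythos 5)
Your proposal is correct and follows essentially the same route as the paper: the definitional equivalence between strict properness of $\widetilde{\varrho}^{\psi}$ and of $\varrho$, the normality condition $\la \widehat{\varrho}'(v),\Phi(v)\ra=0$ read in the chart $\Phi=\Phi_{\std}\circ\psi^{-1}$, the chain-rule cancellation of $(\psi^{-1})'(v)$, and the invariance of the curvature sign under the reparametrization are exactly the ingredients of the paper's derivation preceding the corollary. Your closing remark that the $\psi$-equation is already forced by strict properness of $\varrho$ (it is the identity \eqref{eq-proper-1stderB} evaluated at $t=\psi^{-1}(v)$) is a fair observation about the statement itself, not a gap in the argument.
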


\begin{remark}
We have seen that whether a loss function $\ell \in \L$ is strictly proper or not, depends on whether conditions~\eqref{eq-pnormal} and~\eqref{eq-poscurv} hold or not. Notice that under a (admissible) change of coordinates, for example given by a link $\psi$,~\eqref{eq-poscurv} will not be modified. However,~\eqref{eq-pnormal} might change (since in a way, we are changing the ``velocity'' at which we move on $\ell(\Delta^2)$). Hence, Corollary~\ref{coro-link-cond} is giving us a way to define the set of admissible links (or reparametrizations of $\Delta^2$) given a loss function $\ell$ and the standard parametrization of $\Delta^2$. In this case, the new parametrization is given by $\Phi = \Phi_{\std} \circ \psi^{-1}$.
\end{remark}

For applications, it is desired to be able to work with a given composite loss $\widetilde{\varrho}^{\psi}$, and moreover, to have convexity of the partial losses $\widetilde{\varrho}^{\psi}_1$ and $\widetilde{\varrho}^{\psi}_2$. From our point of view, we see $\widetilde{\varrho}^{\psi}$ as the local expression of some $\varrho \colon \Delta^2 \To \R$, so that $\widehat{\varrho} \defeq \varrho \circ \Phi =  \varrho \circ \( \Phi_{\std} \circ \psi^{-1} \) = \( \varrho \circ \Phi_{\std} \) \circ \psi^{-1}  =\widetilde{\varrho} \circ \psi^{-1}$.

Let us work with the partial losses separately:

\begin{align*}
\ddt{}{v}\widehat{\varrho}_1(v) &=\tvarrho_1'(\psi^{-1}(v)) (\psi^{-1})'(v)
\end{align*}
\begin{align*}
\ddt{}{v}\widehat{\varrho}_2(v) &=\tvarrho_2'(\psi^{-1}(v)) (\psi^{-1})'(v)
\end{align*}

Proceeding as in the proof of Lemma~\ref{lemma-sproper-kpos}, properness implies 
\begin{align*}
0 &= \pr_v \tL(u,v) \vert_{v=u } \\
&= \tvarrho_1'(\psi^{-1}(v)) (\psi^{-1})'(v)\Phi_1(u)  + \tvarrho_2'(\psi^{-1}(v)) (\psi^{-1})'(v)\Phi_2(u) \vert_{s=u } \\
\end{align*}
or, equivalently,
\begin{align*}
0 = \tvarrho_1'( \psi^{-1}(v))\Phi_1( v)  + \tvarrho_2'( \psi^{-1}(v))\Phi_2( v).
\end{align*}
Therefore, we can define $w$ as
\begin{equation} \label{eq-composite-weight}
w(v) \defeq w_{\tvarrho}(\Psi^{-1}(v))=\frac{ \tvarrho_2'( \psi^{-1}(v))}{\Phi_1(v)} = -\frac{\tvarrho_1'( \psi^{-1}(v))}{\Phi_2( v) },
\end{equation}
where $ w_{\tvarrho}$ is the weight of $\tvarrho$, we can rewrite the derivatives of the partial losses of $\widehat{\varrho}$ as

\begin{align*}
\ddt{ \widehat{\varrho}_1}{v}(v) &=-w(v)\Phi_2(v) (\psi^{-1})'(v),
\end{align*}
\begin{align*}
\ddt{\widehat{\varrho}_2}{v} (v) &=w(s) \Phi_1(v)(\psi^{-1})'(v).
\end{align*}
Taking second derivatives we have
\begin{align*}
\ddt{^2\widehat{\varrho}_1}{v^2} (v) &= -\left[     w(v)(\psi^{-1})'(v) \right]'\Phi_2(v) -\left[     w(v)(\psi^{-1})'(v) \right]\Phi_2'(v),
\end{align*}

\begin{align*}
\ddt{^2\widehat{\varrho}_2}{v^2} (v) &= \left[     w(v)(\psi^{-1})'(v) \right]'\Phi_1(v)  + \left[     w(v)(\psi^{-1})'(v) \right] \Phi_1'( v).
\end{align*}

A way to guarantee both expressions are positive is as follows. Assume w.l.o.g. that $(\psi^{-1})'>0$. Since we are assuming $w>0$, $\widehat{\varrho}_2$ is increasing and $\widehat{\varrho}_1$ is decreasing (also we have $\Phi_1$ is increasing and $\Phi_2$ is decreasing). We readily see that imposing
\begin{equation*}
w(v) (\psi^{-1})'(v) = 1
\end{equation*}
for all $v \in \R$, is enough to guarantee both second derivatives to be strictly positive. 

\begin{definition} \label{def-canonical-link}
Given $\varrho \in \L$ strictly proper, we define the \emph{canonical link} $\psi$ as the link defined by
\begin{equation} \label{eq-ODE-canlink}
(\psi^{-1})'(v)=\frac{\psi^{-1}(v)}{ \tvarrho_2'( \psi^{-1}(v))} = \frac{1}{w(v)},
\end{equation}
for $v \in \V$, where $w$ is defined in \eqref{eq-composite-weight}.
\end{definition}

The differential equation~\eqref{eq-ODE-canlink} can be seen as separable ordinary differential equation, which is solvable for loss functions in $\L$. 

To give a geometric meaning, we look at the norm of the velocity of the loss curve $\a(v)=\widehat{\varrho}(v)$.
\begin{align*}
|\a'(v)|^2 &= w(v)^2 (\psi^{-1})'(v)^2 \left[ \Phi_1( v)^2 + \Phi_2(v)^2   \right]
\end{align*}
By assuming $w(s) (\psi^{-1})'(s) = 1$ and $\Phi=\Psi$, we have
\begin{align*}
|\a(s)|^2 &= \left[ \Phi_0(\psi^{-1}(s))^2 + \Phi_1(\psi^{-1}(s))^2   \right].
\end{align*}
Thus the canonical link gives a parametrization of $\Delta^2$ such that $\widehat{\varrho}$ is a curve such that its velocity vector at $v$ coincides with the length of the vector $\Phi(\psi^{-1}(v))$. In other words, it is a parametrization of the loss curve $\varrho(\interior(\Delta^2))$ such that for $\varrho(p)=\widehat{\varrho}(v) \in \ell(\interior(\Delta^2))$, the tangent vector at the point has length $|p|$. We close this discussion with a charcterization of the canonical link.

\begin{thm}
Let $\varrho \in \L$ be a stxrictly proper loss function and $\psi$ its canonical link. The reparametrization of $\varrho$ determined by its canonical link is a parametrization of $\varrho(\interior(\Delta^2))$ with weight equal to 1.
\end{thm}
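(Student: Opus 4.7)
The plan is to unpack the definitions directly: the canonical link was engineered precisely so that $w(v)(\psi^{-1})'(v)=1$, and the weight of the reparametrized loss is (up to sign) exactly this product. So the proof is essentially a one-line verification, once we correctly identify the relevant quantities.

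First, I would set up notation carefully. Let $\Phi \defeq \Phi_{\std} \circ \psi^{-1}$ be the new parametrization of $\Delta^2$ induced by the canonical link, and let $\widehat{\varrho} = \varrho \circ \Phi$ be the local expression of $\varrho$ under this parametrization. By definition \eqref{eq-weight}, the weight of $\varrho$ with respect to $\Phi$ is
\begin{equation*}
w_{\varrho_{\Phi}}(v) = \left| \frac{\widehat{\varrho}_2'(v)}{\Phi_1(v)} \right| = \left| \frac{\widehat{\varrho}_1'(v)}{\Phi_2(v)} \right|.
\end{equation*}

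Next, I would invoke the derivative expression already computed in the discussion preceding Definition~\ref{def-canonical-link}, namely
\begin{equation*}
\ddt{\widehat{\varrho}_2}{v}(v) = w(v)\, \Phi_1(v)\, (\psi^{-1})'(v),
\end{equation*}
where $w$ is the function defined in \eqref{eq-composite-weight} (the weight of $\tvarrho$ evaluated at $\psi^{-1}(v)$). Dividing by $\Phi_1(v)$ and taking absolute values yields
\begin{equation*}
w_{\varrho_{\Phi}}(v) = \bigl| w(v)\, (\psi^{-1})'(v) \bigr|.
\end{equation*}

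Finally, the defining equation \eqref{eq-ODE-canlink} of the canonical link gives exactly $(\psi^{-1})'(v) = 1/w(v)$, so $w(v)\,(\psi^{-1})'(v) = 1$ for all $v \in \V$, and hence $w_{\varrho_{\Phi}} \equiv 1$. There is no real obstacle here; the content of the theorem is essentially the assertion that \eqref{eq-ODE-canlink} is the correct ODE to make the weight trivial, which is immediate from how $w$ appears in $\widehat{\varrho}_i'$. The only point worth being careful about is the absolute value: strict properness together with $\varrho \in \L$ ensures that both $w$ and $(\psi^{-1})'$ have constant sign on $\V$ (this was used implicitly above when assuming $(\psi^{-1})'>0$), so the equality $w_{\varrho_\Phi} = 1$ holds without ambiguity.
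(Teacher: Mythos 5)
Your proposal is correct and follows essentially the same route as the paper's proof: both apply the chain rule to $\widehat{\varrho}_2' $ and then invoke the defining ODE \eqref{eq-ODE-canlink} of the canonical link, the only (cosmetic) difference being that you substitute $w(v)(\psi^{-1})'(v)=1$ after dividing by $\Phi_1(v)$, whereas the paper first uses $\tvarrho_2'(\psi^{-1}(v)) = \psi^{-1}(v)/(\psi^{-1})'(v)$ to conclude $\widehat{\varrho}_2'(v)=\psi^{-1}(v)=\Phi_1(v)$. Your added remark about the constant sign of $w$ and $(\psi^{-1})'$ is a reasonable touch that the paper leaves implicit.
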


\begin{proof}
Let $\widehat{\varrho}=\varrho \circ (\Phi_{\std} \circ \psi^{-1})=\widetilde{\varrho} \circ \psi^{-1}$ be the reparametrization of $\varrho(\interior(\Delta^2))$ determined by the canonical link. Since
\begin{align*}
\widehat{\varrho}_2'(v) &=\tvarrho_2'(\psi^{-1}(v)) (\psi^{-1})'(v),
\end{align*}
for all $v \in \V$, and from Definition~\ref{def-canonical-link}
\begin{equation}
\tvarrho_2'( \psi^{-1}(v)))=\frac{\psi^{-1}(v)}{ (\psi^{-1})'(v)},
\end{equation}
for all $v \in \V$, we have
\begin{align*}
\widehat{\varrho}_2'(v)  =\psi^{-1}(v).
\end{align*}
Thus $w_{\widehat{\varrho}}(v) = \left| \frac{\widehat{\varrho}_2'(v)}{\Phi_1(v)} \right| =1$.
\end{proof}

\section{Mixability for Multi-Class Classification} \label{section-multiclass}

Now we focus our attention on multi-class classification loss functions, that is, maps $\ell \colon \Delta^n \To \R^n{\geq 0}$ given by the partial losses
\begin{equation*}
\ell(p)  = (\ell_1(p),...,\ell_n(p)).
\end{equation*}

Our main goal is to interpret mixability as a geometric comparison of a given loss function $\ell$ to the log loss, as we did for the binary case. As suggested by the comments after Remark~\ref{lemma-mix-curvature-charac}, the extra work of characterizing properness and mixability in a geometric way (coordinate independent) will pay off since to carry out the comparison we will look at the \emph{scalar second fundamental forms} of $\ell(\interior(\Delta^n))$ and $\ell_{\log}(\interior(\Delta^n))$. The scalar second fundamental form measures how a Riemannian manifold curves inside an ``ambient space'', in this case how  $\ell(\interior(\Delta^n))$ curves inside  $\R^n$ (see Appendix~\ref{appendix-DG} for details).

The definition of $\L$ (Definition~\ref{def-admissible-loss-2}) can be extended to higher dimensions.

\begin{definition}\label{def-admissible-loss-n}
An \emph{admissible loss function} is a map $\ell \colon \Delta^n \To \R^n_{\geq 0}$ such that
\begin{enumerate}[(i)]
\item $\ell(\interior(\Delta^n)) \subset \R^n_{\geq 0}$ is a $(n-1)$-manifold of class $C^2$,
\item there exists a differentiable map $\mathbf{n}:\ell(\interior(\Delta^n)) \to N\ell(\interior(\Delta^n))$, $\mathbf{n}(\ell(p))=\mathbf n_{\ell(p)}$, where $N\ell(\interior((\Delta^n))$ is the normal space of $\ell(\interior((\Delta^n))$, and
\item $\mathbf n(p)$ or $-\mathbf n(p)$ belongs to $\R^n_{> 0}$ for all $p \in \interior(\Delta^n)$.
\end{enumerate}
We denote the set of admissible loss functions as $\L_n$, or simply $\L$ when the dimension is clear from context.
\end{definition}

We fix the log loss and denote it for convenience by $\lambda \defeq \ell_{\log} \colon \Delta^n \To \R^n_{\geq 0}$, as the map
\begin{equation*}
\lambda(p)  = ( -\ln(p_1),...,-\ln(p_n)),
\end{equation*}
for $p=(p_1,...,p_n) \in \Delta^n$.

Let $\ell \in \L_n$ and consider a parametrization $\Phi \colon D \subset \R^{n-1} \To \Delta^n$ of $\Delta^{n}$ around $p \in \interior(\Delta^{n})$. The local expression of the conditional risk (using the parametrization $\Phi \times \Phi$ of $\Delta^n \times \Delta^n$ around $(p,p)$) is given by
\begin{equation*}
\tL(t,s) = \la \tell(s), \Phi(t) \ra = \sum_{k=1}^n \tell_k(s)\Phi_k(t),
\end{equation*}
where $t=(t_1,...t_{n-1}),s=(s_1,...,s_{n-1}) \in D$ and $\tell = \ell \circ \Phi$.

Imposing $\ell$ to be proper implies that when fixing $t$, $s=t$ is a critical point of $\tL(t,\cdot)$, that is,
\begin{align*}
0 = \pr_{s_i} \tL(t,\cdot) \vert_{s=t} = \la \pr_{s_i}\tell(t), \Phi(t) \ra
\end{align*}
for all $i \in \{1,...,n -1\}$. Note that since the tangent space of $M_{\ell}$ at $\tell(t)$, $T_{\tell(t)}\tell(U)$, is generated by $\{ \pr_{s_{1}} \tell(t),...,\pr_{s_{n-1}} \tell(t) \}$, we conclude that $\Phi(t)$ is a normal vector. In other words, as before, we have
\begin{equation*}
\bn(\ell(p)) = \pm \frac{p}{|p|},
\end{equation*}
for all $p \in \interior(\Delta^n)$.

The fact that $\tL(t,\cdot)$ achieves a minimum at $s=t$ (at interior points) is equivalent to requiring that the Hessian, $D^2 \tL$, is positive definite at $s=t$. The Hessian of $\tL(t,\cdot)$ at $s=t$ is given by

\begin{align*}
[D^2 \tL]_{ij}(t) =  \pr_{s_j s_i} \tL(t,\cdot) \vert_{s=t} =\la \pr^2_{s_j s_i}\tell(t), \Phi(t) \ra.
\end{align*}

The next step is to relate $[D^2 \tL]_{ij}(t)$ to the scalar second fundamental form $h$ of $M_{\ell}=\ell(\Delta^n)$ (see Appendix~\ref{appendix-DG} for its definition). More precisely, we compute the $h$ with respect to a local parametrization $\Phi$ of $\Delta^n$, i.e., we obtain the matrix $[h_{ij}]$ representing $h$. To do this we need to compute the second derivatives of its parametrization $\tell = \ell \circ \Phi$ (Appendix~\ref{appendix-DG}). Since,

\begin{align*}
\pr_{s_i} \tell(s) = \( \pr_{s_i} \tell_1(s),..., \pr_{s_i} \tell_{n-1}(s)  \)
\end{align*}
we have
\begin{align*}
\pr^2_{s_js_i} \tell(s) = \( \pr^2_{s_js_i} \tell_1(s),..., \pr^2_{s_js_i} \tell_{n-1}(s)  \)
\end{align*}
The scalar second fundamental form (with respect to the normal vector pointing towards $\R^n_{\geq 0}$) is then given by
\begin{align}
h_{ij}(s)=h(\pr_{s_i}\tell(s),\pr_{s_j}\tell(s)) &= \la \pr^2_{s_js_i} \tell(s), \bn(\tell(s)) \ra  \nonumber \\
&=  \la \pr^2_{s_js_i} \tell(s), \frac{\Phi(s)}{|\Phi(s)|} \ra \nonumber  \\
&=\frac{1}{|\Phi(s)|} \la \pr^2_{s_js_i} \tell(s), \Phi(s) \ra \nonumber  \\
&=\frac{1}{|\Phi(s)|} [D^2 \tL]_{ij}(s) \label{eq-h-is-hessian},
\end{align} 
for $i,j=1,...,n-1$, thus if $[D^2 \tL]_{ij}(s)$ is positive definite, then the matrix $[h_{ij}](s)$ is positive definite. In this case its eigenvalues are strictly positive and hence, the \emph{principal curvatures} of $M_{\ell}$ at $\ell(s)$ (see Appendix~\ref{appendix-DG}), $\k_i^+(s)$ (with respect to the unit normal pointing towards $\R^n_{\geq 0}$) are all positive. Therefore, using a similar reasoning as we did in the case $n=2$, we have obtained the following geometric characterization of properness (by following the same arguments as in Section~\ref{section-binary}).

\begin{lemma}\label{lemma-char-proper-n}
Let $\ell \in \L_n$. $\ell$ is strictly proper if and only if $\bn_{\ell}(p) = \pm p/|p|$ and the principal curvatures of $M_{\ell}$ at $\ell(p)$, $\k_i^+(p)$ ($i=1,..,n-1$), are strictly positive for all $p \in \interior(\Delta^n)$.
\end{lemma}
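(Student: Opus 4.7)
The plan is to mirror the argument used for the binary case in Lemma~\ref{lemma-sproper-char}, exploiting the multi-dimensional identity \eqref{eq-h-is-hessian} that was just established. Fix a parametrization $\Phi\colon D\subset\R^{n-1}\to\Delta^n$ around a point $p\in\interior(\Delta^n)$ with $p=\Phi(t)$, and consider the local expression of the conditional risk
\[
\tL(t,s)=\la\tell(s),\Phi(t)\ra,
\]
where $\tell=\ell\circ\Phi$. All the computations preceding the statement of the lemma will be reused.

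For the ``only if'' direction, strict properness says that for each fixed $t$, the function $\tL(t,\cdot)$ attains its unique minimum at $s=t$. The first-order condition $\pr_{s_i}\tL(t,s)\vert_{s=t}=\la\pr_{s_i}\tell(t),\Phi(t)\ra=0$ for $i=1,\dots,n-1$ shows that $\Phi(t)$ is orthogonal to the full tangent space $T_{\tell(t)}M_\ell$. Since $\Phi(t)\in\R^n_{>0}$ lies in the one-dimensional normal line at $\tell(t)$, condition (iii) of Definition~\ref{def-admissible-loss-n} forces $\bn_{\ell(p)}=\pm p/|p|$. The second-order optimality condition gives $[D^2\tL](t,t)\succeq 0$, and identity \eqref{eq-h-is-hessian} then yields $[h_{ij}](t)\succeq 0$. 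Strict positivity of the principal curvatures (rather than non-negativity) follows from strict properness together with the $C^2$ regularity imposed in $\L_n$: if some principal curvature vanished at $\ell(p)$, the associated direction $v$ would give $h(v,v)=0$, and one could then show, as in the proof of Lemma~\ref{lemma-sproper-kpos}, that a degeneracy in the Hessian of $\tL(t,\cdot)$ at $t$ is incompatible with the uniqueness of the minimum when combined with smoothness of $\ell$ and the restriction to the admissible class.

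For the ``if'' direction, fix $t$ and consider any critical point $s^\ast$ of $\tL(t,\cdot)$. The vanishing of the gradient gives $\la\pr_{s_i}\tell(s^\ast),\Phi(t)\ra=0$ for all $i$, so $\Phi(t)$ lies in the normal line to $M_\ell$ at $\tell(s^\ast)$. Using the hypothesis $\bn_{\ell(\Phi(s^\ast))}=\pm\Phi(s^\ast)/|\Phi(s^\ast)|$, we conclude $\Phi(t)\parallel\Phi(s^\ast)$; since both vectors have coordinates summing to $1$, they coincide and hence $s^\ast=t$. Next, positivity of the principal curvatures $\k_i^+$ gives, again via \eqref{eq-h-is-hessian}, that $[D^2\tL](t,t)\succ 0$, so $t$ is a strict local minimum. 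Combined with uniqueness of the critical point and the continuity and boundary behavior of $\tL(t,\cdot)$ on $\Delta^n$, this upgrades to a unique global minimum, so $\ell$ is strictly proper. The analog of Lemma~\ref{lemma-propisstrict} (i.e.\ that in $\L_n$ proper and strictly proper coincide) is itself a consequence of the parallelism argument above, so the distinction between ``proper'' and ``strictly proper'' in the statement is immaterial.

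I expect the genuinely delicate step to be the upgrade from ``unique local minimum'' to ``global minimum'' on $\Delta^n$: in the binary case the domain is a one-dimensional interval where a single interior critical point essentially forces the global statement, but for $n\geq 3$ one must exclude the possibility that $\tL(t,\cdot)$ attains a smaller value on $\pr\Delta^n$. This is handled by combining the unique-critical-point argument with the admissibility conditions on $\ell$ (in particular the orientation of $\bn$ towards $\R^n_{>0}$), which rule out the degenerate boundary configurations. The other subtle point is justifying \emph{strict} positivity of each $\k_i^+$ from strict properness, which similarly uses the $C^2$ structure and the non-degeneracy built into $\L_n$.
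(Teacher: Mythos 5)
Your proposal is correct and follows essentially the same route as the paper: the paper's own ``proof'' of Lemma~\ref{lemma-char-proper-n} is exactly the computation preceding the statement (first-order condition forcing $\bn_{\ell(p)}=\pm p/|p|$, and the identity \eqref{eq-h-is-hessian} converting the Hessian of $\tL(t,\cdot)$ into the scalar second fundamental form) combined with an appeal to ``the same arguments as in Section~\ref{section-binary}'', which is precisely what you reproduce, including the parallelism argument showing proper implies strictly proper. The two delicate points you flag (extracting \emph{strict} positivity of the $\k_i^+$ from strict properness, and upgrading the unique interior critical point to a global minimum over $\Delta^n$) are treated no more carefully in the paper, which simply asserts the strict second-order inequality and the minimality, so your added caution only makes the argument more honest, not different.
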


We briefly explain how the comparison of scalar second fundamental forms will be performed. We follow a similar procedure as the one described in Section~\ref{sun-comparison-2dim} for the case $n=2$.
\begin{enumerate}
\item We establish that given a proper loss function $\ell \in \L_n$, around every $p^* \in \interior(\Delta^n)$, $\ell(\interior(\Delta^n))$ can be parametrized as a graph of a function $f$ defined on a neighborhood around some $x^* \in \R^n$ such that $(x^*,f(x^*))=\ell(p^*)$. We do this explicitly for the log loss $\lambda$.
\item Since $\lambda$ and $\ell$ are proper, the normal vector to $\lambda(\interior(\Delta^n))$ and $\ell(\interior(\Delta^n))$ at $\lambda(p^*)$ and $\ell(p^*)$, respectively, is $p^*/|p^*|$. Hence we can identify their tangent spaces at these points. We do so and fix the parametrizations given in step (1).
\item By assuming $\eta$-mixability of $\ell$, we look at the principal curvatures of $E_{\eta}(\ell(\interior(\Delta^n))$ and prove an equivalent condition for them to be non-negative with respect to normal vector field pointing towards $E_{\eta}(\spr(\ell))$ (i.e., convexity). The condition to be satisfied is seen to be comparison of the scalar second fundamental forms of $\lambda$ and $\ell$ that we can recognize by step (1).
\item We interpret this comparison as follows. Since the tangent spaces to $\ell(p^*)$ (and $\eta\ell(p^*)$)  and $\lambda(p^*)$ coincide for the chosen point $p^*$, if we translate $\ell$ to coincide to $\lambda$ at $p^*$, call this tangent space $H$ (and note it can be indetified with the supporting plane of the loss functions at the given point). Then if we express (locally) $\eta\ell(\interior(\Delta^n))$ and $\lambda(\interior(\Delta^n))$ over $H$, the graph of $\eta\ell(\interior(\Delta^n))$ lies above the graph of $\lambda(\interior(\Delta^n))$. See Figure~\ref{fig-mix-ndim}.
\end{enumerate}

\medskip

\begin{center}
\begin{figure}[h!]
\includegraphics[scale=.9]{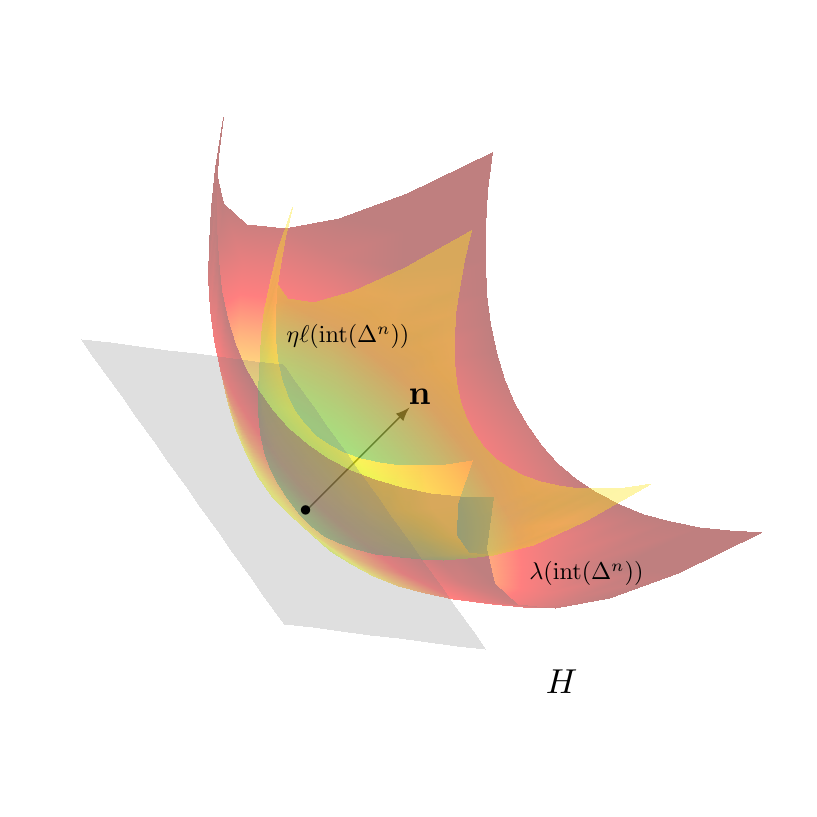}
\caption{Geometric interpretation of $\eta$-mixability. \label{fig-mix-ndim}}
\end{figure}
\end{center}

\subsection{Representing proper loss functions as graphs over Euclidean spaces}

When restricting to the set of admissible loss functions $\L_n$ ($n \geq 2$), we can represent losses as functions over $\R^{n-1}$ (a similar approach was taken in~\cite{E-R-W_curvature}; the difference relies on the fact that here we are after the comparison of second fundamental forms), which allows us to represent geometric quantities in a simple way. This will be useful to recognize these quantities when comparing a proper loss function $\ell$ to the log loss $\lambda$, as we did for the binary case in Section~\ref{section-binary}. Let $\ell \colon \Delta^{n} \To \R^n_{\geq 0}$ be a proper loss in $\L_n$ given by
\begin{align*}
\ell(p) = \( \ell_1(p),...,\ell_n(p)  \).
\end{align*}

Let $\Phi \colon \Delta^{n-1} \subset \R^{n-1} \To \Delta^{n}$ be the standard parametrization of $\Delta^{n}$ given by
\begin{equation*}
\Phi(s) =\Phi_{\std}(s) = \(s_1,...,s_{n-1}, 1 - \sum_{i=1}^{n-1} s_i \),
\end{equation*}
where $s=(s_1,...,s_n) \in \Delta^{n-1}$. The local expression of $\tell$ in these coordinates is then given by $\tell(s) = (\ell \circ \Phi)(s)$, so that $\tell_i(s) = (\ell_i \circ \Phi)(s)$. Also, we define the projection $\Pi \colon \R^{n}_{\geq 0} \To \R^{n-1}_{\geq 0}$ as $\Pi (y_1,...,y_n) = (y_1,...,y_{n-1})$.

Recall that properness implies that the normal vector of $M_{\ell}= \ell(\Delta^{n})$ at $\ell(p)$ can be chosen to be $|p|^{-1}p$, for $p \in \interior(\Delta^n)$. As a consequence, the normal vector is never parallel to the hyperplane $\{ (x_1,..,x_n) \in \R^n \, | \, x^n=0 \}$, so that around any point $\ell(p)$ with $p \in \interior(\Delta^n)$, $M_{\ell}$ can be written as a graph over $\R^n_{\geq 0} \times \{0\}$ (as regular as $M_{\ell}$ is). In general, the existence of this function is guaranteed by the implicit function theorem, however, in our case we can give an explicit description of it as follows. The function $\Pi \vert_{M_{\ell}}$ is a map with injective derivative, say around $\ell(q)$ for a fixed $q \in \interior(\Delta^{n})$, therefore, the inverse function theorem ensures the existence (and differentiability) of a local inverse, which we can denote by $\Pi \vert_{M_{\ell}}^{-1}$. This inverse map can be seen as a local parametrization of $M_{\ell}$. Thus, the local expression of $\ell$ (viewed as a map from $\Delta^n$ to $M_{\ell}$), $\ovr{\ell} \colon D_q \subset \R^{n-1} \To U_{\ell(q)} \subset \R^{n-1}$ (where the latter are small neighborhoods around $\Phi^{-1}(q)$ and $\Pi(\ell(q))$ respectively) is given by
\begin{equation*}
\ovr{\ell}(s) = (\Pi \circ \ell \circ \Phi)(s) = (\Pi \circ \tell)(s)= \( \tell_1(s),...,\tell_{n-1}(s)   \).
\end{equation*}

This map is a diffeomorphism and its inverse $\ovr{\ell}^{-1} \colon U_{\ell(q)} \To D_q$, will be denoted by
\begin{equation*}
\ovr{\ell}^{-1} (x) = \( \tell^{-1}_1(x),..., \tell^{-1}_{n-1}(s)   \).
\end{equation*}
We warn the reader about this abuse of notation, $\tell^{-1}_i(x)$ is not the inverse of $\tell_i(s)$, it is a map satisfying
\begin{align*}
x_i &= \tell_i(s), \\
s_i &= \tell_i^{-1}(x), \\
(\ovr{\ell} \circ \ovr{\ell}^{-1})(x) &= x, \\
(\ovr{\ell}^{-1} \circ \ovr{\ell})(s) &= s.
\end{align*}

We want to define $f \colon U_{\ell_q} \To \R$ such that $\graph(f) \subset M_{\ell}$. We see that setting $U_{\ell(q)} \subset \Pi(M_{\ell})$, so that it contains $\Pi(\ell(q))$, we arrive to
\begin{equation*}
f(x) = (\ell_n \circ \Phi \circ \ovr{\ell}^{-1}) (x) =\tell_n(\ovr{\ell}^{-1}(x)).
\end{equation*}

We have obtained the following result.

\begin{lemma} \label{lemma-graph-ell}
Let $\ell \in \L_n$ be a strictly proper loss. Let $q \in \interior(\Delta^n)$. Then there exists an open set $U \subset \R^{n-1}_{\geq 0} \times \{ 0 \}$ and a function $f \colon  U  \To \R_{\geq 0}$ such that $M_{\ell}$ admits the parametrization
\begin{equation*}
\Phi^{f}(x) = (x,f(x)),
\end{equation*}
around $\ell(q)$.
\end{lemma}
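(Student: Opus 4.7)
The plan is to make rigorous the discussion immediately preceding the statement, which already contains all the essential ideas; the task is to organize them into a clean application of the inverse function theorem. I would take it in three short steps.

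First, I would exploit strict properness via Lemma~\ref{lemma-char-proper-n} to note that at $\ell(q)$ the unit normal to $M_\ell$ may be chosen to be $q/|q|$. Since $q\in\interior(\Delta^n)$, every component $q_k$ is strictly positive, in particular the $n$-th component $q_n/|q|>0$. This is the geometric input that makes the coordinate $y_n$ an admissible ``height'' coordinate: the normal is not tangent to the hyperplane $\{y_n=0\}$, equivalently, the coordinate vector $e_n$ is not tangent to $M_\ell$ at $\ell(q)$.

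Second, I would introduce the projection $\Pi\colon\R^n\to\R^{n-1}$, $\Pi(y_1,\dots,y_n)=(y_1,\dots,y_{n-1})$, and analyze $\Pi\vert_{M_\ell}$ locally near $\ell(q)$ using the standard parametrization $\tell=\ell\circ\Phi_{\std}$ on a neighborhood of $s_0=\Phi_{\std}^{-1}(q)$. The composition $\overline{\ell}:=\Pi\circ\tell$ is a $C^2$ map from a neighborhood of $s_0$ in $\R^{n-1}$ to $\R^{n-1}$, and I would check that its differential is invertible at $s_0$. The simplest way is to argue by contradiction: if $D\overline{\ell}(s_0)$ had a nontrivial kernel, there would be a nonzero tangent vector $v\in T_{\ell(q)}M_\ell$ with $\Pi$-projection equal to zero, i.e.\ $v$ would be parallel to $e_n$; but then $\langle v,q/|q|\rangle=v_n q_n/|q|\neq 0$, contradicting $v\in T_{\ell(q)}M_\ell=(q/|q|)^\perp$. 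Hence $D\overline{\ell}(s_0)$ is an isomorphism, and the inverse function theorem provides a $C^2$ local inverse $\overline{\ell}^{-1}$ defined on an open neighborhood $U\subset\R^{n-1}$ of $x_0:=\Pi(\ell(q))$.

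Third, I would define
\begin{equation*}
f(x) := \tell_n\bigl(\overline{\ell}^{-1}(x)\bigr),\qquad x\in U,
\end{equation*}
which is $C^2$ as a composition of $C^2$ maps, and verify directly that
\begin{equation*}
\Phi^f(x)=(x,f(x))=(\overline{\ell}(\overline{\ell}^{-1}(x)),\tell_n(\overline{\ell}^{-1}(x)))=\tell(\overline{\ell}^{-1}(x)),
\end{equation*}
so that $\Phi^f(U)\subset M_\ell$ and $\Phi^f$ parametrizes a neighborhood of $\ell(q)$ on $M_\ell$. Non-negativity of $f$ follows since $\tell_n$ takes values in $\R_{\geq 0}$, and after possibly shrinking $U$ we may assume $U\subset\R^{n-1}_{\geq 0}\times\{0\}$ because $\tell_1,\dots,\tell_{n-1}\geq 0$ near $s_0$.

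The only mildly nontrivial point is the invertibility of $D\overline{\ell}(s_0)$, and this is settled by the kernel-versus-normal-vector argument above. Everything else is bookkeeping with the implicit and inverse function theorems, so I expect no genuine obstacle.
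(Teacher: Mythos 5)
Your proposal is correct and follows essentially the same route as the paper: use properness to place the normal $q/|q|$ off the hyperplane $\{y_n=0\}$, apply the inverse function theorem to $\overline{\ell}=\Pi\circ\tell$, and set $f=\tell_n\circ\overline{\ell}^{-1}$. The only difference is that you spell out the injectivity of $D\overline{\ell}(s_0)$ via the kernel-versus-normal argument, a detail the paper simply asserts, so your write-up is if anything slightly more complete.
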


Let $\ell$ and $f$ be as in Lemma~\ref{lemma-graph-ell}. The unit normal vector field (pointing towards $\R^{n}_{\geq 0}$) is then given by
\begin{equation} \label{eq-normal-to-graph-ndim}
\bn^{\ell}(x) = \frac{1}{\sqrt{|Df(x)|^2+1}}\(-Df(x),1   \).
\end{equation}
We proceed to calculate the scalar second fundamental form. The first and second derivatives of $\Phi^f$ are given by
\begin{align*}
\pr_k \Phi^f(x) &=(e_k, \pr_k f(x)), \\
\pr_{km} \Phi^f(x) &= (\mathbf{0}, \pr_{km} f(x)),
\end{align*}
for $k,m=1,...,n-1$, where $e_k$ denotes the canonical basis of $\R^{n-1}$ and $\mathbf{0}$ is the 0 vector of $\R^{n-1}$. Denote by $h^{\ell}$ the scalar second fundamental form of $M_{\ell}$. Thus with respect to this coordinates we have
\begin{align} \label{eq-2nd-fund-form-graph-ndim}
h^{\ell}_{km}(x) &= \la \pr_{km} \Phi^f(x), \bn^{\ell}(x) \ra 
= \frac{1}{\sqrt{|Df(x)|^2+1}}\pr_{km} f(x),
\end{align}
for $k,m=1,...,n-1$.

\subsubsection{$M_{\lambda}$ as a graph}

Fix an arbitrary point $q^* \in \interior(\Delta^{n})$. The local expression of $\lambda$ (with respect to the standard parametrization $\Phi=\Phi_{\std}$ around $q^*$ and $\Pi$ around $\ell(q^*)$) is given by
\begin{align*}
\ovr{\lambda}(s) = \( -\ln(s_1),...,-\ln(s_{n-1})   \),
\end{align*}
thus, we have
\begin{align*}
\ovr{\lambda}^{-1}(x) = \(e^{-x_1},...,e^{-x_{n-1}}     \).
\end{align*}

Fix $s^*=\Phi^{-1}(q^*)$. Thus, around $x^* = \Pi(\lambda(q^*))$, using Lemma~\ref{lemma-graph-ell}, $M_{\lambda}$ around $\ell(q)$ can be described as
\begin{align*}
\Phi^{g}(x) = (x,g(x)).
\end{align*}
Moreover, in this case we have the explicit expression $g(x) = -\ln(1- \sum_{i=1}^{n-1} e^{-x_i})$. Notice that $\ovr{\lambda}^{-1}(x^*)=s^*$. We now compute the scalar second fundamental form $h^{\lambda}$ of $\lambda$ at $x^*$.
\begin{align*}
\pr_k \Phi^{g}(x) &= \(e_k,  -\frac{e^{-x_k}}{1- \sum_{i=1}^{n-1} e^{-x_i}}    \), \\
\pr_{km} \Phi^{g}(x) &=\( \mathbf{0},   \frac{\delta_{km}e^{-x_k}}{1- \sum_{i=1}^{n-1} e^{-x_i}}  +  \frac{e^{-x_k}e^{-x_m}}{\(1- \sum_{i=1}^n e^{-x_i}\)^2}   \),
\end{align*}
for $k,m=1,...,n-1$ (here $\delta_{km}$ denotes the \emph{Kronecker delta}). In particular,
\begin{align*}
\pr_k \Phi^{g}(x^*) &= \(e_k,  -\frac{s^*_k}{1- \sum_{i=1}^{n-1} s^*_i}    \), \\
\pr_{km} \Phi^{g}(x^*) &=\( \mathbf{0},   \frac{\delta_{km}s^*_k}{1- \sum_{i=1}^{n-1} s^*_i}  +  \frac{s^*_k s^*_m}{\(1- \sum_{i=1}^{n-1} s^*_i\)^2}   \),
\end{align*}
for $k,m=1,...,n-1$, and since $\bn((x^*,g(x^*)) = \frac{1}{\sqrt{\sum_{i=1}^{n-1} (s^*_i)^2 + (1- \sum_{i=1}^{n-1} s^*_i)^2}}(s^*,1 - \sum_{i=1}^{n-1} s^*_i)$ we have
\begin{align}
h^{\lambda}_{km}(x^*) &= \la \pr_{km} \Phi^{g}(x^*), \bn((x^*,g(x^*)) \ra \\
&=\frac{1}{\sqrt{\sum_{i=1}^n (s^*_i)^2 + (1- \sum_{i=1}^n s^*_i)^2}}\( \delta_{km}s^*_k +    \frac{s^*_k s^*_m}{1- \sum_{i=1}^n s^*_i}\),
\end{align}
for $k,m=1,...,n-1$

\begin{remark} \label{remark-translations}
Note that if instead of $\lambda$ we would have used a translation of it, that is, for $c \in \R^n$, define a loss function $\varphi \colon \Delta^n \To \R^n_{\geq 0}$ by
\begin{align*}
\varphi(p) = \lambda(p) + c,
\end{align*}
we can repeat the previous computation. The only difference is that we would have a different point $x_*^c$ instead of $x_*$.
\end{remark}

\subsection{Geometric interpretation of mixability}

Mixability is defined as a property of the superprediction set of a proper loss $\ell \in \L_n$. More precisely, $\ell$ is mixable if and only if $E_{\eta}(\spr(\ell))$ is convex for some $\eta>0$. As before, we can determine whether $E_{\eta}(\spr(\ell))$ is convex by looking at its boundary $\pr E_{\eta}(\spr(\ell))=E_{\eta}(\ell(\Delta^n))$. $E_{\eta}(\spr(\ell))$ is convex if the principal curvatures of $E_{\eta}(\ell(\Delta^n)$ are non-negative (when defined with respect to the inner pointing normal vector) at all points. Since convexity is a global property that can be tested ``locally everywhere'', it makes sense to make the following definition.

\begin{definition}[$\eta$-Mixability at $p \in \Delta^n$]
We say that $\ell \in \L_n$ is \emph{$\eta$-mixable at $p \in \interior(\Delta^n)$} if $E_{\eta}(M_{\ell})$ has non-negative principal curvatures with respect to the unit normal vector pointing towards $E_{\eta}(\spr(\ell))$ at $E_{\eta}(\ell(p))$.
\end{definition}

Clearly, $\ell \in \L_n$ is $\eta$-mixable at all $p  \in \interior(\Delta^n)$ if and only if it is $\eta$-mixable.

Let $\ell,\varrho \in \L_n$ be strictly proper. First, we note that properness implies that the second fundamental forms of $\ell$ and $\varrho$ can be compared in the following sense. Given $q^* \in \Delta^n$, note that the normal vector to $M_{\ell}$ and $M_{\varrho}$ can be chosen to be $q^*/|q^*|$. A translation does not affect the geometric properties of $M_{\varrho}$ (since it is an isometry of $\R^{n}$), thus we consider the translated loss $\varrho^{\ell(q^*)} \colon \Delta^n \To \R^n$, given by
\begin{align*}
\varrho^{\ell(q^*)}(p)  = \varrho(p) +\left[ \ell(q^*) - \varrho(q^*)    \right],
\end{align*}
i.e., we translate $\varrho$ by the vector $c^{q^*}=  \lambda(q^*) - \ell(q^*)$ so that both $\varrho^{q^*}$ and $\ell$ coincide when evaluated at $q^*$. Doing so allows us to identify the tangent spaces to $M_{\varrho^{\ell(q^*)}}$ and $M_{\ell}$ at $\varrho^{\ell(q^*)}(q^*)=\ell(q^*)$. We will call $\varrho^{\ell(q^*)}$ the \emph{translation of $\varrho$ to $\ell(q^*)$}. 

\begin{lemma} \label{lemma-mix-comparison-h}
Let $\ell \in \L_n$ be strictly proper. Let $h^{\ell}$ and $h^{\lambda}$ denote the scalar second fundamental form of $M_{\ell}$  and $M_{\lambda
}$ (the log loss), respectively. Then, $\ell$ is $\eta$-mixable at $p \in \interior(\Delta^n)$ if and only if
\begin{equation} \label{eq-h-mix-cond}
h^{\ell}(\ell(p)) - \eta h^{\lambda}(\lambda(p))
\end{equation}
is positive semi-definite, where $h^{\ell}$ and $h^{\lambda}$ denote the second fundamental forms of $\ell$ and $\lambda$ in the graphical coordintes described in Lemma~\ref{lemma-graph-ell}. And therefore, $\ell$ is $\eta$-mixable if and only if~\eqref{eq-h-mix-cond} holds for all $p \in \interior(\Delta^n)$.
\end{lemma}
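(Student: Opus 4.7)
The plan is to work everywhere in the graphical coordinates of Lemma~\ref{lemma-graph-ell}. Fix $p \in \interior(\Delta^n)$ and set $x^* \defeq \Pi(\ell(p))$ and $y^* \defeq \Pi(\lambda(p))$. Then $M_{\ell}$ is the graph of some $f$ near $x^*$, while $M_{\lambda}$ is the graph of $g(y) = -\ln\bigl(1-\sum_{i=1}^{n-1} e^{-y_i}\bigr)$ near $y^*$. Properness (Lemma~\ref{lemma-char-proper-n}) forces both $M_{\ell}$ and $M_{\lambda}$ to have the common unit normal $p/|p|$ at $\ell(p)$ and $\lambda(p)$, which together with~\eqref{eq-normal-to-graph-ndim} gives
\begin{equation*}
Df(x^*) \;=\; Dg(y^*) \;=\; (-p_1/p_n,\ldots,-p_{n-1}/p_n),
\end{equation*}
so $\sqrt{1+|Df(x^*)|^2} = \sqrt{1+|Dg(y^*)|^2} = |p|/p_n$. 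By~\eqref{eq-2nd-fund-form-graph-ndim} this common prefactor cancels in the proposed condition, so $h^{\ell}(\ell(p)) - \eta h^{\lambda}(\lambda(p)) \succeq 0$ is equivalent to $D^2 f(x^*) - \eta D^2 g(y^*) \succeq 0$.

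Next, the substitution $u_k = e^{-\eta x_k}$ parametrizes $E_{\eta}(M_{\ell})$ locally as the graph of
\begin{equation*}
F(u) \;\defeq\; e^{-\eta f(-\eta^{-1}\ln u)},
\end{equation*}
with $u^* \defeq E_{\eta}(x^*)$ mapping to $E_\eta(\ell(p))$. Because $E_{\eta}$ is coordinate-wise strictly decreasing, $E_{\eta}(\spr(\ell))$ is locally the hypograph of $F$, and hence $\eta$-mixability at $p$ is equivalent to the pointwise concavity condition $D^2 F(u^*) \preceq 0$. A direct chain rule (with $\pr_k F = F\,\pr_k f/u_k$) yields
\begin{equation*}
\pr^2_{km} F \;=\; \frac{F}{u_k u_m}\bigl[\pr_k f\,\pr_m f - \eta^{-1}\pr^2_{km} f\bigr] - \frac{F\,\pr_k f\,\delta_{km}}{u_k^2}.
\end{equation*}
Contracting with a tangent vector $v$ and setting $w_k \defeq v_k/u_k^*$ (a linear isomorphism on $\R^{n-1}$), the condition $D^2 F(u^*) \preceq 0$ becomes
\begin{equation*}
(D^2 f(x^*))(w,w) \;\geq\; \eta\Bigl[\bigl(\textstyle\sum_k \pr_k f(x^*)\, w_k\bigr)^2 - \textstyle\sum_k \pr_k f(x^*)\, w_k^2\Bigr]
\end{equation*}
for every $w \in \R^{n-1}$.

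To finish, I identify the right-hand side with $\eta\,(D^2 g(y^*))(w,w)$. Substituting $\pr_k f(x^*) = -p_k/p_n$ simplifies the bracket to $\tfrac{1}{p_n^2}\bigl(\sum_k p_k w_k\bigr)^2 + \tfrac{1}{p_n}\sum_k p_k w_k^2$; on the other hand, differentiating $g$ twice and using $1-\sum_{i=1}^{n-1} p_i = p_n$ yields $\pr^2_{km} g(y^*) = \delta_{km} p_k/p_n + p_k p_m/p_n^2$, whose associated quadratic form is exactly this expression. Combining the three paragraphs, $\eta$-mixability at $p$ is equivalent to $D^2 f(x^*) - \eta D^2 g(y^*) \succeq 0$, and hence to $h^{\ell}(\ell(p)) - \eta h^{\lambda}(\lambda(p)) \succeq 0$. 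The main obstacle is the chain-rule step: correctly identifying the normal to $E_{\eta}(M_\ell)$ that points into $E_\eta(\spr(\ell))$ (so that concavity of $F$ corresponds to convexity of the superprediction set), and then executing the change of variables $w = v/u$ without sign errors; once the first-order data of $f$ and $g$ have been matched via properness, the whole comparison collapses to a direct substitution.
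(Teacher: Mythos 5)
Your proof is correct and follows essentially the same route as the paper: both arguments work in the graphical coordinates of Lemma~\ref{lemma-graph-ell}, use properness to match the gradients $Df(x^*)=Dg(y^*)=(-p_1/p_n,\dots,-p_{n-1}/p_n)$ (so the normalization factors in \eqref{eq-2nd-fund-form-graph-ndim} cancel), and reduce $\eta$-mixability at $p$ to positive semidefiniteness of $D^2 f(x^*)-\eta D^2 g(y^*)$, recognizing the explicit Hessian of the log-loss graph function. The only cosmetic difference is that you extract the second-order condition on $E_\eta(M_\ell)$ via concavity of the graph function $F(u)=e^{-\eta f(-\eta^{-1}\ln u)}$ together with the linear change of variables $w=v/u$, whereas the paper parametrizes $E_\eta(M_\ell)$ by $\Psi(x)=(e^{-\eta x_1},\dots,e^{-\eta x_{n-1}},e^{-\eta f(x)})$ and computes its scalar second fundamental form against the inward normal directly; the two computations are equivalent.
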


\begin{proof}
Let $\ell \colon \Delta^{n} \To \R^n_{\geq 0}$ be an admissible proper loss 
\begin{align*}
\ell(p) = \( \ell_1(p),...,\ell_n(p)  \).
\end{align*}

The $\eta$-exponential projection map $E_{\eta}\colon \R^{n} \To \R^{n}$ is given by
\begin{equation*}
E_{\eta}(y) = (e^{-\eta y_1},...,e^{-\eta y_{n}}).
\end{equation*}

Let $q^* \in \interior(\Delta^n)$ and write $M_{\ell}$ around $\ell(q^*)$ as the graph of a function $f$ over $\R^{n-1}$, defined on an open set $U^f_{x^*}$ containing $x^*$, such that $f(x^*)=\ell(q^*)$. We can directly give a parametrization of $E_{\eta}(M_{\ell})$ around $E_{\eta}(\ell(q^*))=E_{\eta}((x^*,f(x^*))$ by
\begin{equation*}
\Psi(x)=\(e^{-\eta x_1},...,e^{-\eta x_{n-1}}, e^{-\eta f(x)} \).
\end{equation*}
We proceed to compute the second fundamental form of $E_{\eta}(M_{\ell})$ around $E_{\eta}(\ell(q^*))$ (with respect to the inward pointing unit normal vector). The first and second derivatives of $\Psi$ are given by
\begin{align*}
\pr_k \Psi(x) &= \(  -\eta e^{-\eta x_k} e_k , -\eta \pr_k f(x) e^{-\eta f(x)}  \) \\
\pr_{km} \Psi(x) &= \(  \eta^2 \delta_{km}e^{-\eta x_k} e_k , -\eta \pr_{km} f(x) e^{-\eta f(x)} + \eta^2 \pr_k f(x)\pr_m f(x) e^{-\eta f(x)}   \) 
\end{align*}
and noting that the (inward pointing) unit vector field is given by
\begin{equation*}
\bn(E_{\eta}(\ell(x)) =\frac{1}{\(  \sum_{i=1}^{n-1} \pr_i f(x)^2 e^{2\eta x_i} + e^{2\eta f(x)} \)^{1/2}} \( \pr_1 f(x) e^{\eta x_1},...,\pr_n f(x) e^{\eta x_n},-e^{\eta f(x)}   \)
\end{equation*}

Therefore, letting $\mathcal{E}^{\eta} \defeq \mathcal{E}^{\eta}(U_f)=E_{\eta}(f(U_f))$, the second fundamental form of $\mathcal{E}^{\eta}$ at $E_{\eta}((x^*,f(x^*)))$ is given by
\begin{align*}
&h^{\mathcal{E}^{\eta}}_{km}(x^*)  \\
=& \la  \pr_{km} \Psi(x^*), \bn(\ell^{\eta}(x^*)) \ra \\
=&\frac{1}{\(  \sum_{i=1}^{n-1} \pr_i f(x^*)^2 e^{2\eta x^*_i} + e^{2\eta f(x^*)} \)^{1/2}}\left[ \la \eta^2 \delta_{km}e^{-\eta x^*_k} e_k, \sum_{i=1}^n \pr_if(x^*)e^{\eta x^*_i}e_i  \ra_{\R^{n-1}} \right. \\
&\qquad \qquad \qquad \qquad \qquad \qquad \qquad \qquad \left.  + \eta \pr_{km} f(x^*) - \eta^2 \pr_k f(x^*)\pr_m f(x^*)   \right] \\
=&\frac{\eta}{\(  \sum_{i=1}^{n-1} \pr_i f(x^*)^2 e^{2\eta x^*_i} + e^{2\eta f(x^*)} \)^{1/2}}\left[ \eta \delta_{km}\pr_k f(x^*)   +  \pr_{km} f(x^*) - \eta \pr_k f(x^*)\pr_m f(x^*)   \right].
\end{align*}
Thus, since the convexity of $E_{\eta}(\spr(\ell))$ is equivalent to the principal curvatures of $E_{\eta}(M_{\ell})$ being non-negative at $q^*$ for all $q^* \in \interior(\Delta^n)$ (with respect to the inner pointing normal vector), we see this will be the case if and only if the matrix
\begin{equation*}
A_{km}= \pr_{km} f(x^*)  - \eta\left[ -\delta_{km}\pr_k f(x^*)  +  \pr_k f(x^*)\pr_m f(x^*)  \right]
\end{equation*}
is positive semi-definite for all $x^*$ corresponding to $q^* \in \interior(\Delta^n)$.

Note that since we have a graphical parametrization $\Phi^f$ of $M_{\ell}$ around $x^* \in U$, we have
\begin{equation*}
\pr_k \Phi^{f}(x^*) = \( e_k, \pr_k f(x^*)   \)
\end{equation*}
and by~\eqref{eq-normal-to-graph-ndim},
\begin{equation*}
\bn(x^*,f(x^*)) = \frac{1}{\sqrt{|Df(x^*)|^2+1}}\(-Df(x^*),1   \).
\end{equation*}
On the other hand, since the normal vector to $\Phi^f(U)$ at $(x^*,f(x^*))$ is $\frac{q^*}{|q^*|}$, we have
\begin{equation*}
\bn((x^*,f(x^*))) = \frac{1}{\sqrt{\sum_{i=1}^n (s^*_i)^2 + (1- \sum_{i=1}^n s^*_i)^2}} \(s^*_1,...,s^*_n, 1 -\sum_{i=1}^{n-1} s^*_i \),
\end{equation*}
for $s^* \in \R^{n-1}$ such that $\Phi(s^*)=q^*$.

By properness we know that
\begin{align*}
0 &= \la \pr_k \Phi^{f}(x^*), \bn((x^*,f(x^*)))  \ra \\
&= \frac{1}{\sqrt{\sum_{i=1}^n (s^*_i)^2 + (1- \sum_{i=1}^n s^*_i)^2}}\left[ s^*_k +\pr_k f(x^*)\( 1 -\sum_{i=1}^{n-1} s^*_i \)   \right]
\end{align*}
thus
\begin{equation*}
\pr_k f(x^*) = \frac{-s^*_k}{1 -\sum_{i=1}^{n-1} s^*_i },
\end{equation*}
and also
\begin{align}
1+|Df(x^*)|^2& = 1 + \frac{\sum_{j=1}^{n-1} (s^*_j)^2}{\( 1 -\sum_{i=1}^{n-1} s^*_i \)^2} = \frac{\sum_{j=1}^{n-1} (s^*_j)^2 + \( 1 -\sum_{i=1}^{n-1} s^*_i \)^2}{\( 1 -\sum_{i=1}^{n-1} s^*_i \)^2}.
\end{align}

Using~\eqref{eq-2nd-fund-form-graph-ndim} and the previous observations, we can rewrite the terms of $A_{km}$ as

\begin{align*}
 \pr_{km} f(x^*) &= \frac{\sqrt{|Df(x^*)|^2 + 1}}{\sqrt{|Df(x^*)|^2 + 1}} \pr_{km} f(x^*) \\
 &= \frac{1}{\( 1 -\sum_{i=1}^{n-1} s^*_i \)} \textstyle{\sqrt{\sum_{j=1}^{n-1} (s^*_j)^2 + \( 1 -\sum_{i=1}^{n-1} s^*_i \)^2}} h^{\ell}_{km}(x^*)
\end{align*}

and

\begin{align*}
&\left[ -\delta_{km}\pr_k f(x^*)  +  \pr_k f(x^*)\pr_m f(x^*)  \right] \\
=& \frac{\delta_{km}s^*_k}{1 -\sum_{i=1}^{n-1} s^*_i } + \frac{s^*_k s^*_m}{\(1 -\sum_{i=1}^{n-1} s^*_i\)^2 }. \\
\end{align*}

Now, consider the log loss $\lambda$ and its translation to $\ell(q^*)$ which we denote by $\lambda^{*}$ to simplify the notation. That is, we have
\begin{align*}
\lambda^{*} =  \lambda(p) +\left[ \ell(q^*) - \lambda(q^*)    \right].
\end{align*}
As discussed in Remark~\ref{remark-translations}, we can write $M_{\lambda^*}$ as a graph around $x^*$ (since $\lambda^*(q^*)=\ell(q^*)$). The scalar second fundamental form of $M_{\lambda^*}$ at $\lambda^*(q^*)$ is then given by
\begin{align}
h^{\lambda^*}_{ij}(x^*)
&=\frac{1}{\sqrt{\sum_{i=1}^n (s^*_i)^2 + (1- \sum_{i=1}^n s^*_i)^2}}\( \delta_{km}s^*_k +    \frac{s^*_k s^*_m}{1- \sum_{i=1}^n s^*_i}\).
\end{align}
This readily implies that
\begin{align*}
&\left[ -\delta_{km}\pr_k f(x^*)  +  \pr_k f(x^*)\pr_m f(x^*)  \right] \\
&=  \frac{1}{\( 1 -\sum_{i=1}^{n-1} s^*_i \)} \textstyle{\sqrt{\sum_{j=1}^{n-1} (s^*_j)^2 + \( 1 -\sum_{i=1}^{n-1} s^*_i \)^2} } h^{\lambda^*}_{ij}(x^*).
\end{align*}

Therefore, $\ell$ is $\eta$-mixable at $q^*$ if and only if we have that
\begin{equation*}
[h^{\ell}_{ij}](x^*) - \eta  [h^{\lambda^*}_{ij}](x^*)
\end{equation*}
is semi-positive definite. Since $q^*$ was arbitrary the result follows.
\end{proof}

\begin{remark}
The previous comparison of second fundamental forms is possible because properness forces the induced metrics by $\ell$ and $\lambda$ to coincide at $\ell(q^*)=\lambda^*(q^*)$, that is, $[g^{\ell}_{ij}](x^*)=[g^{\lambda^*}_{ij}](x^*)$ (see Appendix~\ref{appendix-DG} and Remark~\ref{rmk-metric-coincides}). The conclusion of Theorem~\ref{lemma-mix-comparison-h} does not necessarily hold if one takes a different coordinate system.
\end{remark}

In order to get a geometric interpretation (i.e., independent of coordinates) we note the following:
\begin{align*}
0 &\leq [h^{\ell}_{ij}](x^*) - \eta  [h^{\lambda^*}_{ij}](x^*) \\
&= [h^{\ell}_{ij}](x^*)[g^{\ell}_{ij}]^{-1}(x^*)[g^{\ell}_{ij}](x^*) - \eta  [h^{\lambda^*}_{ij}](x^*)[g^{\lambda^*}_{ij}]^{-1}(x^*)[g^{\lambda^*}_{ij}](x^*) \\
&= \left( [h^{\ell}_{ij}](x^*)[g^{\ell}_{ij}]^{-1}(x^*) - \eta  [h^{\lambda^*}_{ij}](x^*)[g^{\lambda^*}_{ij}]^{-1}(x^*) \right)[g^{\ell}_{ij}](x^*).
\end{align*}

The matrices $ [h^{\ell}_{ij}](x^*)[g^{\ell}]^{-1}(x^*)$ and $[h^{\lambda^*}_{ij}](x^*)[g^{\lambda^*}]^{-1}(x^*)$ are the local expression of the Weingarten map (see~\cite{Lee-RG} for its definition and properties) of $\ell$ and $\lambda$ respectively. The eigenvalues of these matrices are the principal curvatures of $M_{\ell}$ and $M_{\lambda}$ (and they are independent of coordinates), and the determinants are their Gaussian curvatures. From here it also follows that
\begin{align*}
&\eta  \left[  \frac{1}{\eta}[h^{\ell}_{ij}](x^*)[g^{\ell}_{ij}]^{-1}(x^*) -  [h^{\lambda^*}_{ij}](x^*)[g^{\lambda^*}_{ij}]^{-1}(x^*) \right][g^{\ell}_{ij}](x^*) \\
=&\eta  \left[ [h^{\eta \ell}_{ij}](x^*)[g^{\eta \ell}_{ij}]^{-1}(x^*) -  [h^{\lambda^*}_{ij}](x^*)[g^{\lambda^*}_{ij}]^{-1}(x^*) \right][g^{\ell}_{ij}](x^*),
\end{align*}
that is,
\begin{align}\label{eq-weingarten-relation}
[W^{\eta \ell }_{ij}]-[W^{\lambda}_{ij}]\geq 0,
\end{align}
where $W^{\ell}$ denotes the Weingarten map of the loss function $\ell$. Then once a system of coordinates around $p \in \Delta^n$ is chosen the relation~\eqref{eq-weingarten-relation} holds. A priori, the relation obtained between the Weingarten maps of $\ell$ and $\lambda$ does not provide much information, but it does points to look at the loss function $\eta \ell$. With this in mind Lemma~\ref{lemma-mix-comparison-h} does give a direct geometric interpretation as follows. Let $\ell \colon \Delta^n \To \R^n_{\geq 0}$ in $\L$ be a proper loss. Given a point $q \in \Delta^n$ we know that around $\ell(q)$, $M_{\ell}$ can be parametrized with $\Phi^f(x)=(x,f(x))$ for some function $f$ around the point $\Pi(\ell(q))$. Let $x^*=\Phi(\ell(q))$. Consider now the proper loss $\varrho = \eta \ell$, for some $\eta > 0$. We readily see that $\varrho$ can be parametrized as $\Phi^g(y) = (y,g(y))$ with
\begin{align*}
g(y) = \eta f(\eta^{-1}y),
\end{align*}
with $g$ defined around $y_q  =\eta x^*$. Now we compute the second fundamental form of $\varrho$ at $y_q$. Notice that
\begin{align*}
\pr_ig(y) \vert_{y=\eta x^*} = \eta \pr_i f(\eta^{-1} x)\vert_{y=\eta x^*} \eta^{-1} = \pr_i f(x^*), \\
\pr_{ij}g(y) \vert_{y=\eta x^*} = \pr_{ij} f(\eta^{-1} x)\vert_{y=\eta x^*} \eta^{-1}=  \eta^{-1}\pr_{ij} f(x^*),
\end{align*}
and hence,
\begin{align*}
h^{\varrho}_{ij}(\eta x^*)=h^{\eta \ell}_{ij}(\eta x^*) =\eta^{-1} h^{\ell}(x^*).
\end{align*}
Then assuming the hypothesis of Lemma~\ref{lemma-mix-comparison-h}, we obtain
\begin{align} \label{eq-relation-etaell-lambda}
h^{\eta \ell}_{ij}(\eta x^*)  - h^{\lambda}_{ij}(x^*) &=\eta^{-1} h_{ij}^{\ell}(x^*)- h^{\lambda}_{ij}(x^*) =\eta^{-1}\(  h^{\ell}(x^*)-  \eta h^{\lambda}_{ij}(x^*)  \) \geq 0.
\end{align}

The supporting planes at $\eta \ell(p)$  and $\lambda(p)$ of $M_{\eta \ell}$ (or more precisely, of its translation to $\lambda(p)$) and $M_{\lambda}$, respectively, coincide (since the normal vectors are the same), we denote it by $H_p$. By looking at $M_{\eta \ell}$ and $M_{\lambda}$ locally as graphs over $H_p$, Lemma~\ref{lemma-mix-comparison-h} gives the following comparison of graphs, which in turn can be regarded as local embeddability in the sense of convex geometry (see Definition~\ref{def-local-embed} below). 

\begin{thm}\label{thm-geometric-mixability-ndim}
$\ell \in \L_n$ proper is $\eta$-mixable if and only if for all $p \in \interior(\Delta^n)$ the local graph of the translation of $\eta\ell$ to $\lambda(p)$ over the supporting plane to both $M_{\ell_p}$ and $M_{\lambda}$ at $\lambda(p)$, $H_p$, lies above the graph of $\lambda$ over $H_p$.
\end{thm}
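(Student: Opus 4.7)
The plan is to reduce the geometric statement to the bilinear-form inequality furnished by Lemma~\ref{lemma-mix-comparison-h}, but re-expressed in a graph parametrization over the common supporting plane $H_p$ rather than over $\R^{n-1}\times\{0\}$. First I would fix $p \in \interior(\Delta^n)$ and let $(\eta\ell)^{\lambda(p)}$ denote the translation of $\eta\ell$ so that $(\eta\ell)^{\lambda(p)}(p)=\lambda(p)$. Since translations are isometries of $\R^n$, the scalar second fundamental form of $M_{(\eta\ell)^{\lambda(p)}}$ at $\lambda(p)$ equals that of $M_{\eta\ell}$ at $\eta\ell(p)$. By Lemma~\ref{lemma-char-proper-n}, the unit normal to each of $M_{(\eta\ell)^{\lambda(p)}}$ and $M_{\lambda}$ at $\lambda(p)$ is $p/|p|$, so both surfaces share $H_p$ as supporting plane at this common point.

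Next, I would represent both surfaces locally as graphs over $H_p$: applying the inverse function theorem to the orthogonal projection $\pi_{H_p}\colon \R^n \to H_p$ (the same argument as Lemma~\ref{lemma-graph-ell} with $H_p$ replacing $\R^{n-1}\times\{0\}$) yields an open $U \subset H_p$ containing the origin and $C^2$ functions $F_{\eta\ell},F_{\lambda}\colon U \to H_p^{\perp}$ with
\begin{equation*}
F_{\eta\ell}(0)=F_{\lambda}(0)=0,\qquad \nabla F_{\eta\ell}(0)=\nabla F_{\lambda}(0)=0.
\end{equation*}
Because the gradients vanish at the origin, the graph formula~\eqref{eq-2nd-fund-form-graph-ndim} shows that, in the natural basis of $H_p$, the Hessians $D^2 F_{\eta\ell}(0)$ and $D^2 F_{\lambda}(0)$ are exactly the matrices of the second fundamental forms $h^{\eta\ell}(\lambda(p))$ and $h^{\lambda}(\lambda(p))$.

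The core step is then to invoke Lemma~\ref{lemma-mix-comparison-h} together with the rescaling identity~\eqref{eq-relation-etaell-lambda}: $\eta$-mixability of $\ell$ at $p$ is equivalent to $h^{\eta\ell}(\lambda(p)) - h^{\lambda}(\lambda(p))\geq 0$ as a bilinear form on $H_p$, which in our graph coordinates reads
\begin{equation*}
D^2(F_{\eta\ell}-F_{\lambda})(0)\geq 0.
\end{equation*}
Combined with $(F_{\eta\ell}-F_{\lambda})(0)=0$ and $\nabla(F_{\eta\ell}-F_{\lambda})(0)=0$, this is precisely the local statement that the graph of $F_{\eta\ell}$ lies above the graph of $F_{\lambda}$ near the origin of $H_p$. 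The converse runs the same chain backwards via the standard second-derivative test at an interior critical point. Letting $p$ range over $\interior(\Delta^n)$ then yields the full equivalence.

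The main obstacle I anticipate is upgrading ``lies above'' from a mere second-order (germ) comparison at $\lambda(p)$ to a genuine graph inequality on an open neighborhood. This is handled as follows: the explicit computation of $h^{\lambda}$ in the proof of Lemma~\ref{lemma-mix-comparison-h} shows that $h^{\lambda}$ is strictly positive definite throughout $\interior(\Delta^n)$, and by the assumed $\eta$-mixability the same is then true for the Hessian of $F_{\eta\ell}$; strict convexity of $F_{\lambda}$ together with $D^2 F_{\eta\ell}(0)\geq D^2 F_{\lambda}(0)$ then upgrades the Hessian comparison to the pointwise inequality $F_{\eta\ell}(y)\geq F_{\lambda}(y)$ on a neighborhood of $0$, which is the intended meaning of ``local graph above graph''. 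A secondary care point is that the graph representation of Lemma~\ref{lemma-graph-ell} is phrased over $\R^{n-1}\times\{0\}$, while here we need it over the rotated plane $H_p$; this is a routine change of orthonormal frame and does not affect the bilinear-form comparison since $h$ is intrinsic to the tangent space.
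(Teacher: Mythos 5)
Your route is essentially the paper's: reduce to Lemma~\ref{lemma-mix-comparison-h} plus the rescaling identity \eqref{eq-relation-etaell-lambda}, translate $\eta\ell$ so that it touches $M_{\lambda}$ at $\lambda(p)$, use Lemma~\ref{lemma-char-proper-n} to see that both surfaces have normal $p/|p|$ there, write both as graphs over the common supporting plane $H_p$, and identify the Hessians at the point of tangency with the scalar second fundamental forms via \eqref{eq-2nd-fund-form-graph-ndim} (this is what the paper does through the comparison lemma of Appendix~\ref{appendix-DG}). The translation-invariance of $h$, the change of frame from $\R^{n-1}\times\{0\}$ to $H_p$, and the converse direction (a local graph inequality with first-order tangency at the origin forces $D^2(F_{\eta\ell}-F_{\lambda})(0)\geq 0$, hence the form inequality, hence $\eta$-mixability at $p$ by Lemma~\ref{lemma-mix-comparison-h}) are all sound.

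The gap is in your forward-direction ``upgrade''. The claim that strict convexity of $F_{\lambda}$ near $0$ together with $D^2F_{\eta\ell}(0)\geq D^2F_{\lambda}(0)$ yields $F_{\eta\ell}(y)\geq F_{\lambda}(y)$ on a neighborhood of $0$ is false: already for $n-1=1$, take $F_{\lambda}(y)=y^2$ and $F_{\eta\ell}(y)=y^2-y^3$; both are strictly convex near $0$, agree to first order at $0$, and satisfy $F_{\eta\ell}''(0)=F_{\lambda}''(0)$, yet $F_{\eta\ell}-F_{\lambda}=-y^3$ changes sign in every neighborhood of $0$. A second-order comparison at the single point of tangency can never produce a pointwise inequality on a neighborhood; one needs curvature information at nearby points as well. (In fairness, the comparison lemma in Appendix~\ref{appendix-DG}, on which the paper's own derivation rests, asserts the same one-point equivalence and its proof has the same defect.) So either ``lies above'' is read in the second-order/germ sense, in which case your argument is complete once you delete the upgrade, or -- as the subsequent use in Lemma~\ref{lemma-mixable-loc-embed} suggests -- it is meant as a genuine neighborhood inequality, and then the forward implication should not be argued at a fixed $p$ alone: one must exploit that the form comparison of Lemma~\ref{lemma-mix-comparison-h} holds at \emph{all} $p\in\interior(\Delta^n)$, matched through equal normal directions, e.g.\ via the support-function machinery of Theorem~\ref{thm-IIandsupport}, Theorem~\ref{thm-convex-fnc-char} and Theorem~\ref{thm-local-to-global}, which is exactly how Section~\ref{section-convexG} converts the pointwise curvature comparison into containment statements about $\spr(\eta\ell)$ inside $\spr(\lambda)$.
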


\begin{remark}
We would like to point out the resemblance of Lemma~\ref{lemma-mix-comparison-h} to Theorem~10 in~\cite{E-R-W_curvature}. To recover the latter from our point of view we will first reinterpret Lemma~\ref{lemma-mix-comparison-h} and Theorem~\ref{thm-geometric-mixability-ndim} from a convex geometry point of view which will lead to a transparent bridge between Lemma~\ref{lemma-mix-comparison-h} and~\cite[Theorem 10]{E-R-W_curvature}.
\end{remark}

\section{Connections to convex geometry}\label{section-convexG}

In this part we reinterpret our results from the point of view of convex geometry. With this interpretation we can relate Theorem~\ref{thm-geometric-mixability-ndim} to results in~\cite{E-R-W_curvature} and~\cite{C-W_geometry}. We first provide some background and state relevant results from convex geometry which are well-known and can be found in~\cite{Schneider} and can be adapted to our setting.

Let $K \subset \R^n$ be a convex set, that is
\begin{equation*}
\lambda x + (1-\lambda)y \in K
\end{equation*}
for all $x,y \in K$ and $\lambda \in [0,1]$.

We define the \emph{recession cone} of $K$ as the set
\begin{equation*}
\rec(K) = \{ x \in \R^n \, : \, K+x \subset K \}. 
\end{equation*}

The boundary of $K$ is denoted by $\pr K$, as since we will assume that $\pr K$ is a differentiable manifold we denote the interior (as a manifold) of $\pr K$ by $\interior(\pr K)$. As usual the \emph{scaling} of $K$ by $\eta >0$ and the \emph{Minkowski sum} of $K$ and $L$ are defined as
\begin{align}
\eta K &= \{ \eta k \in \R^n \, : \, k \in K \}, \label{eq-scaling}\\
K + L&= \{ k+ l \in \R^n \, : \, k \in K, l \in L \} \label{eq-min-sum}.
\end{align}

\begin{definition}
Let $K$ be a closed convex set in $\R^n$. The \emph{support function of $K$}, $\s(K,u) \colon \R^n \To \overline{\R}$, is defined as
\begin{equation*}
\s(K,u ) = \sup_{x \in K} \la x, u \ra.
\end{equation*} 
We sometimes denote it as $\s_K(u) \defeq \s(K,u)$.
\end{definition}

From the definition we know that
\begin{align*}
y \in K \Longleftrightarrow \la y,u \ra \leq \s_K(u) \textnormal{ for all $u \in \R^n$}. 
\end{align*}

From~\cite[Section 1.7]{Schneider} we have the following.

\begin{lemma}[Properties of $\s$] \label{lemma-prop-support-fnc} Let $L,K \subset \R^n$ be closed convex sets.
\begin{enumerate}
\item $\s_L \leq \s_K$ if and only if $L  \subset K$.
\item $\s(K+t,u)=\s(K,u)+\la t,u \ra$ for all $t \in \R^n$.
\item $\s(K+L,u)=\s(K,u)+\s(L,u)$.
\end{enumerate}
\end{lemma}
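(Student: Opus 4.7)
The plan is to verify each of the three properties directly from the definition $\s(K,u) = \sup_{x \in K} \la x, u \ra$, using only elementary convex analysis; since these are standard facts cited from Schneider, the proof is essentially a warm-up before the main convex-geometry work that follows.

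For item (2), the proof is a one-line change of variables: I would substitute $y = x - t$ in $\sup_{x \in K+t} \la x,u\ra$ to get $\sup_{y \in K}\la y+t,u\ra = \s(K,u)+\la t,u\ra$, using bilinearity of $\la \cdot,\cdot \ra$. For item (3), I would similarly write any $z \in K+L$ as $z = x+y$ with $x \in K$, $y \in L$, and use that the supremum of a sum of independent terms factors: $\sup_{x \in K, y \in L}(\la x,u\ra + \la y,u\ra) = \s(K,u)+\s(L,u)$. These two steps are purely computational and should be dispatched briefly.

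For item (1), the forward direction ($L \subset K \Rightarrow \s_L \leq \s_K$) is immediate from monotonicity of supremum: the supremum of $\la \cdot, u\ra$ over a smaller set is no larger. The converse is the only step requiring real content. The plan is to argue by contrapositive: assume $L \not\subset K$ and produce $u$ with $\s_L(u) > \s_K(u)$. Pick $y \in L \setminus K$; since $K$ is closed and convex (and $\{y\}$ is a compact convex set disjoint from $K$), the Hahn--Banach / hyperplane separation theorem gives a unit vector $u \in \R^n$ and a constant $\a$ with $\la x,u\ra \leq \a < \la y,u\ra$ for all $x \in K$. Taking the supremum over $x \in K$ yields $\s_K(u) \leq \a < \la y,u\ra \leq \s_L(u)$, which is the desired strict inequality.

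The main (only) obstacle is the converse in (1), where one must invoke the separating hyperplane theorem and be careful that closedness of $K$ is used to ensure the separation is strict at $y$; without closedness one could have $y \in \overline{K} \setminus K$ with $\s_L(u) = \s_K(u)$ for all $u$. Everything else is a direct unfolding of the definition of $\s$, and I would present all three items within a single compact proof block referencing \cite[Section 1.7]{Schneider} for completeness.
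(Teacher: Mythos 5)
Your proof is correct. The paper itself gives no argument for this lemma --- it is simply quoted from Schneider (Section 1.7) --- and your proposal is exactly the standard proof one would find there: change of variables for (2), splitting the supremum over a Minkowski sum for (3), monotonicity of the supremum for the easy direction of (1), and strict separation of a point $y \in L \setminus K$ from the closed convex set $K$ for the converse, with closedness of $K$ correctly identified as the essential hypothesis. The only (immaterial) point you could flag is the degenerate case of empty sets, where the convention $\s_{\emptyset} \equiv -\infty$ handles everything; otherwise nothing is missing.
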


\begin{definition}
A function $f \colon D \subset \R^n \To \ovr{\R}$ is \emph{convex} if its extension to $\R^n$ given by 
\begin{align*}
\widetilde{f}(x) = 
\begin{cases}
f(x),\textnormal{ if $x \in D$} \\
\infty,\textnormal{ if $x \notin  D$}
\end{cases}
\end{align*}
is convex.
\end{definition}

The following lemma is a well-known result (see~\cite[Theorem~1.7.1]{Schneider} for example).

\begin{lemma}\label{lemma-convex-is-support}
Let $f \colon \R^n \to \R$ convex, closed and positively homogeneous, then $f$ is the support function of the convex, closed set
\begin{align*}
K^f = \{ x \in \R^n \, | \, \la x,u \ra \leq f(u) \textnormal{ for all $u \in \R^n$} \}.
\end{align*}
\end{lemma}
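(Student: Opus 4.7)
The plan is to prove the two inequalities $\s_{K^f}(u) \leq f(u)$ and $f(u) \leq \s_{K^f}(u)$ for every $u \in \R^n$, after first noting that $K^f$ is a legitimate closed convex set. Since $f$ is finite everywhere, the defining set $K^f$ is an intersection of closed half-spaces $\{x : \la x, u \ra \leq f(u)\}$ indexed by $u \in \R^n$, so $K^f$ is closed and convex.

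For the easy inequality, I would argue directly from the definition of $K^f$: every $x \in K^f$ satisfies $\la x, u \ra \leq f(u)$, and taking the supremum over $x \in K^f$ gives $\s_{K^f}(u) \leq f(u)$.

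The substantive step is the reverse inequality, for which the natural tool is the Hahn--Banach theorem applied to the sublinear functional $f$ (convexity and positive homogeneity together give subadditivity $f(u+v) \leq f(u) + f(v)$). Fix an arbitrary $u_0 \in \R^n$. On the one-dimensional subspace $\R u_0$, define $L_0(\lambda u_0) = \lambda f(u_0)$. For $\lambda \geq 0$ this equals $f(\lambda u_0)$; for $\lambda < 0$, the bound $L_0(\lambda u_0) \leq f(\lambda u_0)$ reduces to $-f(u_0) \leq f(-u_0)$, which follows from $0 = f(0) = f(u_0 + (-u_0)) \leq f(u_0) + f(-u_0)$. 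Hahn--Banach then extends $L_0$ to a linear functional $L \colon \R^n \to \R$ with $L \leq f$ on all of $\R^n$. By Riesz representation in $\R^n$, there is $x_0 \in \R^n$ with $L(u) = \la x_0, u \ra$. The bound $L \leq f$ says precisely $x_0 \in K^f$, and $L(u_0) = L_0(u_0) = f(u_0)$ yields $\s_{K^f}(u_0) \geq \la x_0, u_0 \ra = f(u_0)$. Since $u_0$ was arbitrary, $f = \s_{K^f}$.

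The main (and essentially only) obstacle is the Hahn--Banach extension step, which produces the supporting linear functional whose defining vector witnesses equality of the two sides. The hypothesis that $f$ is closed (i.e., lower semicontinuous) ensures $K^f$ is large enough to recover $f$ as its support function; since $f$ is finite and convex on all of $\R^n$, it is automatically continuous, so no further regularization is needed.
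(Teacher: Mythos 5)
Your proof is correct. Note that the paper does not actually prove this lemma at all: it is stated as a known result with a citation to Schneider (Theorem~1.7.1), so there is no in-paper argument to compare against. Your argument is the standard self-contained one: convexity plus positive homogeneity gives subadditivity via $f(u+v)=2f\bigl(\tfrac{u+v}{2}\bigr)\leq f(u)+f(v)$, so $f$ is sublinear; the inequality $\s_{K^f}\leq f$ is immediate from the definition of $K^f$; and for the reverse inequality the Hahn--Banach extension from the line $\R u_0$ (with the check $-f(u_0)\leq f(-u_0)$ coming from $0=f(0)\leq f(u_0)+f(-u_0)$) produces a vector $x_0\in K^f$ with $\la x_0,u_0\ra=f(u_0)$, which is exactly a subgradient of $f$ at $u_0$ lying in $K^f$. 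Schneider's own proof reaches the same supporting functional by applying the separation/supporting-hyperplane theorem to the epigraph of $f$, which is a closed convex cone in $\R^{n+1}$; the two routes are equivalent in substance, and yours has the advantage of being elementary and explicit in finite dimensions. Your closing remark is also accurate: since $f$ is finite and convex on all of $\R^n$ it is automatically continuous, so the closedness hypothesis plays no independent role in this finite-valued formulation (it matters in the extended-real-valued version the paper implicitly uses later, where the domain is $\R^n_{<0}\cup\{0\}$).
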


\begin{definition}
Let $L,K \subset \R^n$ and closed and convex. We say that $L$ is a \emph{summand} of $K$ if there exists a convex, closed set $M \subset \R^n$ such that $K = M + L$.
\end{definition}

We will be mainly interested in sets $K$ whose recession cone is $\R^n_{\geq 0}$, hence we denote by $\mathcal{K}^n_*$ the set of closed, convex sets whose recession cone is $\R^n_{\geq 0}$. In the following we extend some common results in convex geometry which are usually stated for closed, compact convex sets in $\R^n$ (see~\cite{Schneider}), however, some of them are easily extended to $\mathcal{K}_*^n$~\cite{Shveidel}.

\begin{lemma}[Basic properties of sets in $\mathcal{K}_*^n$]
Let $K,L \in \mathcal{K}_*^n$ and $\eta > 0$. Then, the following holds:
\begin{enumerate}[(1)]
\item $\eta K \in \mathcal{K}_*^n$,
\item $ \rec(K+L) = \R^n_{\geq 0}$,
\item $K+L$ is closed, and
\item $K+L \in  \mathcal{K}_*^n$.
\end{enumerate}
\end{lemma}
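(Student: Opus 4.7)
My plan is to prove the four items in the order (1), (3), (2), (4), isolating the main sequential compactness argument in (3) and reusing it for the nontrivial half of (2). The unifying geometric input is $\rec(K) \cap (-\rec(L)) = \R^n_{\geq 0} \cap \R^n_{\leq 0} = \{0\}$, which is what prevents any pathological behavior of the Minkowski sum.

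For (1), $\eta K$ is convex and closed because $x \mapsto \eta x$ is a linear homeomorphism of $\R^n$, and the recession cone satisfies $x \in \rec(\eta K)$ iff $K + \eta^{-1}x \subset K$ iff $\eta^{-1}x \in \R^n_{\geq 0}$ iff $x \in \R^n_{\geq 0}$. For (3), I would take a sequence $k_n + l_n \to y$ with $k_n \in K$, $l_n \in L$ and show that $\{k_n\}$ must be bounded. If not, extract a subsequence with $|k_n| \to \infty$ and $k_n/|k_n| \to v$ with $|v|=1$. The standard fact for closed convex sets — that unit limit directions of unbounded sequences are recession directions — places $v \in \rec(K) = \R^n_{\geq 0}$. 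Since $k_n + l_n$ is bounded, $l_n/|k_n| \to -v$, hence $|l_n| \to \infty$ and $l_n/|l_n| \to -v$, giving $-v \in \rec(L) = \R^n_{\geq 0}$. Combining $v \geq 0$ and $-v \geq 0$ forces $v = 0$, contradicting $|v|=1$. With $\{k_n\}$ bounded, a convergent subsequence $k_{n_j} \to k \in K$ gives $l_{n_j} = (k_{n_j}+l_{n_j}) - k_{n_j} \to y - k \in L$, so $y \in K+L$.

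For (2), the inclusion $\R^n_{\geq 0} \subset \rec(K+L)$ is immediate: given $x \in \R^n_{\geq 0}$ and $k+l \in K+L$, writing $x = x/2 + x/2$ gives $k+x/2 \in K$ and $l+x/2 \in L$, so $(k+l)+x \in K+L$. For the reverse, fix $x \in \rec(K+L)$ and $k_0 \in K$, $l_0 \in L$, and write $k_0 + l_0 + t_n x = k_n + l_n$ for a sequence $t_n \to \infty$. If $\{k_n\}$ is bounded (or, more generally, if $t_n/|k_n| \to \infty$), then $k_n/t_n \to 0$ and $l_n/t_n \to x$, placing $x \in \rec(L) = \R^n_{\geq 0}$ via the limit of $l_n/|l_n|$. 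If $\{k_n\}$ is unbounded with $t_n/|k_n| \to \alpha < \infty$, then $k_n/|k_n| \to v \in \R^n_{\geq 0}$ with $|v|=1$ and $l_n/|k_n| \to \alpha x - v$; either $l_n/|l_n|$ converges to a multiple of $\alpha x - v$, giving $\alpha x \geq v \geq 0$ (so $x \geq 0$ when $\alpha > 0$), or $\alpha x = v$, which in turn yields $x \geq 0$ when $\alpha > 0$. In every sub-case where $\alpha = 0$ one recovers the $v \geq 0$, $-v \geq 0$ contradiction with $|v|=1$. Finally, (4) is immediate: $K+L$ is convex as a Minkowski sum of convex sets, closed by (3), with recession cone $\R^n_{\geq 0}$ by (2).

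The main obstacle is item (3), specifically the sequential argument that $\{k_n\}$ must be bounded. It rests on the standard but nontrivial fact that unit-normalized unbounded subsequences in a closed convex set have limits in the recession cone, combined with the $\R^n_{\geq 0}/\R^n_{\leq 0}$ sign incompatibility. The reuse of the same strategy in the $\subset$ half of (2) requires careful bookkeeping of the sub-cases on $t_n/|k_n|$, but no new ideas are needed; once (3) is in hand, (4) is purely cosmetic.
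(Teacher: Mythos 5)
Your proposal is correct, but it takes a genuinely different route from the paper on items (2) and (3). For (3) the paper does not argue at all: it simply cites Rockafellar (Thm.~8.2) and Shveidel (Thm.~3.1) for closedness of $K+L$, whereas you reprove this from scratch via the standard normalized-sequence argument (unbounded $k_n$ would give a unit recession direction $v\in\rec(K)=\R^n_{\geq 0}$ with $-v\in\rec(L)=\R^n_{\geq 0}$, impossible), which is exactly the classical criterion $\rec(K)\cap(-\rec(L))=\{0\}$; this makes the lemma self-contained at the cost of the bookkeeping you acknowledge. For the inclusion $\rec(K+L)\subset\R^n_{\geq 0}$ the paper argues quite differently: it takes $x\in\rec(K+L)\setminus\R^n_{\geq 0}$, uses that $\rec(K+L)$ is a cone to get $k+l+\lambda x\in K+L\subset K$, concludes $l+\lambda x\in\rec(K)=\R^n_{\geq 0}$ for all $\lambda>0$, and derives a contradiction for $\lambda$ large. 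Note that the paper's step $K+L\subset K$ amounts to assuming $L\subset\R^n_{\geq 0}$, which holds for superprediction sets but is not literally part of the definition of $\mathcal{K}_*^n$; your asymptotic argument (cases on $t_n/|k_n|$) avoids this and works for arbitrary members of $\mathcal{K}_*^n$, so it is arguably more robust, though you should state explicitly that you pass to subsequences along which $k_n/|k_n|$ and $t_n/|k_n|$ converge (in $[0,\infty]$) and that the recession-direction fact you invoke is Rockafellar's Theorem~8.2. Items (1) and (4) match the paper's treatment essentially verbatim (the paper proves the easy inclusion of (2) even more directly, via $l+x\in L$, but your $x/2+x/2$ splitting is equally fine).
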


\begin{proof}
In order to show (1), we need to show that $\eta K$ is closed, convex and $\rec(\eta K) = \R^n_{\geq 0}$. Let $x,y \in \eta K$ and $\lambda \in [0,1]$, then we have
\begin{align*}
\lambda x + (1- \lambda) y &= \eta(\lambda k_x + (1-\lambda) k_y)  
\end{align*}
where $x = \eta k_x$ and $y = \eta k_y$ for some $k_x,k_y \in K$. Since $K$ is convex, then $\lambda k_x + (1-\lambda) k_y \in K$ and hence $\eta K$ is convex. Let $x_n \in K$ be a convergent sequence that converges to $x$. Then, there exists $k_{x_n} \in K$ such that $x_n = \eta k_{x_n}$. Since $\eta$ is a constant, $\{ k_{x_n} \}$ converges to $k_{x_{\infty}} \in K$ (since $K$ is closed). By the uniqueness of the limit, $x= \eta x_{\infty} \in \eta K$. Now, let $x \in \R^n_{\geq 0}$, we want to show that $\eta K + x \subset \eta K$. Take any $k \in K$,
\begin{align*}
\eta k + x &= \eta\(k + \frac{1}{\eta} x     \) \in \eta K
\end{align*}
since $ \frac{1}{\eta} x    \in \R^n_{\geq 0}$. Conversely, if $x \in \rec(\eta K)$, then for any $k_1 \in K$, we have
\begin{align*}
\eta k_1 + x \in \eta K
\end{align*}
then there exists $k_2 \in K$, such that $\eta k_1 + x = \eta k_2$. Hence
\begin{align*}
k_1 + \frac{1}{\eta} x = k_2,
\end{align*}
thus $\frac{1}{\eta} x \in \rec(K)=\R^n_{\geq 0}$, thus $x \in \R^n_{\geq 0}$. 

To show (2), let $x \in \R^n_{\geq 0}$. We want to show that $K+L+x \subset K + L$. Let $k \in K$ and $l \in L$, then
\begin{align*}
k + l + x  \in K + L,
\end{align*}
since $l+x \in L$. Thus $\R^n_{\geq 0} \subset \rec(K+L)$. Now, suppose that there is $x \in \rec(K+L)$ such that $x \notin \R^n_{\geq 0}$. Since $\rec(K+L)$ is a cone, for all $\lambda> 0$, we have $\lambda x \in \rec(K+L)$. Let $k \in K$ and $l \in L$. Then
\begin{align*}
k+l+\lambda x \in K+L \subset  K.
\end{align*}
Thus $\l + \lambda x \in \rec(K)= \R^n_{\geq 0}$ for all $\lambda >0$, but notice that this is a contradiction since by picking $\lambda$ sufficiently large, $l+\lambda x \notin \R^n_{\geq 0}$. Thus $\rec(K+L)=\R^n_{\geq 0}$.

For (3), see Rockafellar~\cite{Rockafellar} Thm. 8.2 and~\cite{Shveidel} Thm. 3.1. (4) is simply the combination of (2) and (3) (and the fact that $K+L$ is convex).
\end{proof}

We now specialize the discussion to a particular type of sets $K \in \mathcal{K}_*^n$. First, suppose that the boundary $\pr K$ is of class $C^2$, then at each point $x \in \interior(\pr K)$ there is an \emph{outward} pointing normal vector $\u_K(x)$. Thus, clearly, we can define a map $\u_K\colon \interior(\pr K) \To \bS^{n-1}$ assigning $u_K(x)$ to $x \in \interior(\pr K)$. We define 
\begin{align*}
\R^n_{\leq 0} = \{ x \in \R^n \, : \, x=(x_1,...,x_n)\textnormal{, with $x_i \leq 0$ for $i=1,...,n$}\},
\end{align*}
so that 
\begin{align*}
\interior (\R^n_{\leq 0}) = \{ x \in \R^n \, : \, x=(x_1,...,x_n)\textnormal{, with $x_i< 0$ for $i=1,...,n$}\}=\R^n_{< 0}. 
\end{align*}

\begin{definition}
Define $C^2_+(\mathcal{K}_*^n)$ as the collection of sets $K \in \mathcal{K}_*^n$ with boundary $\pr K$ of class $C^2$, and such that the map $\u_K$ is a $C^1$-diffeomorphism from $\interior(\pr K)$ to $\bS^{n-1}_- \defeq \bS^{n-1} \cap \R^n_{< 0}$.
\end{definition}

We now specialize some properties of the support function to $C^2_+(\mathcal{K}_*^n)$. 
\begin{lemma} \label{lemma-domain-support}
If $K \in C^2_+(\mathcal{K}_*^n)$, then $\dom(\s_K)= \interior(\R^n_{\leq 0}) \cup \{0\} $.
\end{lemma}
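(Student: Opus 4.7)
The plan is to prove the equality via two inclusions, leveraging $\rec(K) = \R^n_{\geq 0}$ in one direction and the diffeomorphism hypothesis on the Gauss map $\u_K$ in the other.

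For $\interior(\R^n_{\leq 0}) \cup \{0\} \subseteq \dom(\s_K)$: the origin is immediate since $\s_K(0) = 0$. For $u \in \R^n_{<0}$, set $\hat u \defeq u/|u| \in \bS^{n-1}_-$. By the $C^2_+$ hypothesis, $\u_K$ is a diffeomorphism onto $\bS^{n-1}_-$, so there is a unique $x_u \in \interior(\pr K)$ with $\u_K(x_u) = \hat u$. Since $K$ is convex and $\hat u$ is the outward unit normal at $x_u$, the supporting hyperplane theorem gives $\la y - x_u, \hat u \ra \leq 0$ for every $y \in K$, so $\s_K(\hat u) = \la x_u, \hat u \ra < \infty$. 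Positive homogeneity of $\s_K$ then yields $\s_K(u) = |u|\,\s_K(\hat u) < \infty$, hence $u \in \dom(\s_K)$.

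For the reverse inclusion I argue contrapositively. If some $u_i > 0$, then since $e_i \in \R^n_{\geq 0} = \rec(K)$, the ray $\{x_0 + t e_i\}_{t \geq 0}$ lies in $K$ for any $x_0 \in K$, and $\la u, x_0 + t e_i \ra = \la u, x_0 \ra + t u_i \to +\infty$, forcing $\s_K(u) = +\infty$. The remaining case is $u \in \pr \R^n_{\leq 0} \setminus \{0\}$, where $u \neq 0$, $u_i \leq 0$ for all $i$, and $u_j = 0$ for some $j$. Then $\hat u = u/|u| \notin \bS^{n-1}_-$, so by the diffeomorphism hypothesis no boundary point of $K$ has $\hat u$ as outward unit normal; the supremum defining $\s_K(u)$ is not attained. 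To promote this to $\s_K(u) = +\infty$, the natural strategy is to approximate $u$ by a sequence $u_m \in \R^n_{<0}$ with $u_m \to u$, track the corresponding support points $x_m \defeq \u_K^{-1}(u_m/|u_m|)$, and argue that as $u_m/|u_m|$ approaches $\pr \bS^{n-1}_-$ the $x_m$ must escape to infinity along a recession direction of $K$, producing unbounded values of $\la u, x_m \ra$.

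The hard part will be precisely this last step. Non-attainment of the supremum is immediate, but upgrading it to $\s_K(u) = +\infty$ requires a quantitative coupling between how fast $u_m/|u_m|$ approaches the boundary of $\bS^{n-1}_-$ and how fast the corresponding support points diverge in $\R^n$. The Gauss map being a diffeomorphism onto the open hemisphere does force inverse images of normals tending to $\pr \bS^{n-1}_-$ to leave every compact set, but one must additionally verify that the divergence occurs in a direction of $\R^n_{\geq 0} = \rec(K)$ along which $\la u, \cdot \ra$ is unbounded above; this may require either strengthening the $C^2_+$ condition or a careful use of convexity near infinity coupled with the smoothness of $\pr K$.
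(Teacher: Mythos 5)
Your first inclusion and your treatment of directions with a positive component are fine, and the first inclusion is exactly the paper's argument (normalize $u\in\R^n_{<0}$, use that $\u_K$ is onto $\bS^{n-1}_-$ to get a support point, conclude finiteness by the supporting hyperplane at that point). The genuine gap is the one you yourself flag: for $u\neq 0$ on the boundary of $\R^n_{\leq 0}$ you only get non-attainment of the supremum, and you do not show $\s_K(u)=+\infty$. This gap cannot be closed, because the claimed equality fails there in general. Take $K=\spr(\ell_{\log})$, which lies in $C^2_+(\mathcal{K}_*^n)$ (Remark~\ref{rmk-proper-in-C2-}), and $u=(-1,0,\dots,0)$: since the first coordinate of every point of $K$ is positive with infimum $0$, one has $\s_K(u)=-\inf_{x\in K}x_1=0<\infty$, so $u\in\dom(\s_K)$ although $u\notin\interior(\R^n_{\leq 0})\cup\{0\}$. (For other sets in $C^2_+(\mathcal{K}_*^n)$, e.g.\ the epigraph of $x\mapsto e^{-x}$ in $\R^2$, the same boundary direction does give $+\infty$, so which boundary rays belong to $\dom(\s_K)$ genuinely depends on $K$.) Hence no quantitative coupling between the Gauss map and escape to infinity can rescue your contrapositive in that case.

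For comparison, the paper's own proof of the inclusion $\dom(\s_K)\subset\interior(\R^n_{\leq 0})\cup\{0\}$ simply asserts that any $u\neq 0$ with $\s_K(u)<\infty$ ``must be an outward normal vector to $\interior(\pr K)$'', i.e.\ it tacitly assumes the supremum is attained at an interior boundary point --- precisely the step you (correctly) declined to take; finiteness of a support function does not imply attainment when $K$ is unbounded. What is true, and what your two completed steps already establish, is the two-sided containment $\interior(\R^n_{\leq 0})\cup\{0\}\subseteq\dom(\s_K)\subseteq\R^n_{\leq 0}$ (the second from the recession-cone argument applied to all coordinate directions); this weaker statement is all that the later applications (convexity of $\s_{\spr(\lambda)}-\s_{\spr(\eta\ell)}$ on $\R^n_{<0}$) actually require. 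So: your approach is sound where it is complete, the remaining step is not a missing trick but an unprovable claim, and the paper's proof papers over exactly this point.
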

\begin{proof}
Take $u \neq 0$ in $\dom(\s_K)$, then it must be an outward normal vector to $\interior(\pr K)$, hence it is in $\bS^{n-1}_-$. Then $\dom(\s_K)  \subset \interior(\R^n_{\leq 0}) \cup \{0\}$. Now, for $u \in \R^n_{< 0}$, normalize it to make it unitary by letting $v=u/|u|$, then $v \in \bS^{n-1}_-$ and thus it must be a normal vector form some $x \in \interior(\pr K)$, hence the support function evaluated at $v$ is finite, and in consequence $\s_K(u)$ is finite too.
\end{proof}

\begin{remark}\label{rmk-C2-}
Following Schneider~\cite[Section 2.5]{Schneider} the condition $K \in C^2_+(\mathcal{K}_*^n)$ is equivalent to assuming the principal curvatures of $\pr K$ to be non-zero. It also follows that 
\begin{align*}
\s_K(u) \vert_{\bS^{n-1}_- } = \la \u_K^{-1}(u),u \ra,
\end{align*}
and moreover, $\s_K$ is of class $C^2$.
\end{remark}

\begin{remark}\label{rmk-proper-in-C2-}
Let $\ell \in \L_n$ be a proper loss function. By definition we see that Remark~\ref{rmk-C2-} implies $\spr(\ell) \in  C^2_+(\mathcal{K}_*^n)$ (since $M_{\ell}=\pr (\spr(\ell))$).
\end{remark}

\begin{definition}\label{def-slides-freely}
Let $K,L \in C^2_+(\mathcal{K}^n)$. We say that $L$ \emph{slides freely inside $K$} if to each boundary point $x$ of $K$, there exists a translation vector $t \in \R^n$, such that $x \in L+ t \subset K$.
\end{definition}

\begin{thm}\label{thm-summand-implies-freely}
Let $K, L \in C^2_+(\mathcal{K}_*^n)$. $L$ is a summand of $K$, then $L$ slides freely inside $K$.
\end{thm}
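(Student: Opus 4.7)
The proof is essentially a direct unpacking of the definitions. The plan is to use the hypothesis $K = M + L$ with $M \subset \R^n$ closed and convex to produce, for each $x \in \partial K$, an explicit translation vector $t$ witnessing the sliding condition.

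First I would fix an arbitrary boundary point $x \in \partial K$. Since $K = M + L$ as sets, I can choose a decomposition $x = m_0 + l_0$ with $m_0 \in M$ and $l_0 \in L$ (existence follows from the definition of the Minkowski sum; no regularity is needed here). I then set $t \defeq m_0$.

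With this choice, the two conditions in Definition~\ref{def-slides-freely} are immediate. For the first, $x - t = l_0 \in L$, so $x \in L + t$. For the second, pick any $l \in L$; then $l + t = l + m_0 \in L + M = K$, so $L + t \subset K$. Hence $x \in L + t \subset K$, which is exactly the sliding freely condition for the point $x$. Since $x \in \partial K$ was arbitrary, $L$ slides freely inside $K$.

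The only potential obstacle is noticing that the $C^2_+(\mathcal{K}^n_*)$ assumption is not used for this direction: we never need $\partial K$ or $\partial L$ to be $C^2$, nor that the recession cones equal $\R^n_{\geq 0}$; the argument is purely set-theoretic, relying only on the fact that $L$ being a summand of $K$ means every element of $K$ admits a decomposition with one coordinate in $L$. (The converse direction, which the authors do not claim here, is the substantive one and typically requires the $C^2_+$ hypothesis together with tools such as support functions and comparison of curvatures via Lemma~\ref{lemma-prop-support-fnc} and Lemma~\ref{lemma-convex-is-support}.)
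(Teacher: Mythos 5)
Your proof is correct and follows the same route as the paper: decompose $x = l_0 + m_0$ via the Minkowski sum and use $m_0$ as the translation vector, so that $x \in L + m_0 \subset L + M = K$. Your observation that the $C^2_+(\mathcal{K}_*^n)$ regularity is not needed for this direction also matches the paper's own Remark following the theorem.
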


\begin{proof}
Suppose that there exists $M \in C^2_+(\in \mathcal{K}_*^n)$ such that $K=L+M$. Let $x \in \pr K$. Then there are $l \in L$ and $m \in M$ such that 
\begin{align*}
x = l + m.
\end{align*}
Thus, $x \in L + m \subset L + M=K$.
\end{proof}

\begin{remark} \label{rmk-slidingfreely-implies-poscurv}
For a general convex set $L$, if $L$ is a summand of $K \in C^2_+(\mathcal{K}_*^n)$ we see that the previous proof holds an we conclude that $L$ slides freely inside $K$; note however that this imposes restrictions on possible sets $L$. One of this consequences is that the principal curvatures of $\pr L$ must be positive as can be seen from a second fundamental form comparison and Theorem~\ref{thm-geometric-mixability-ndim}.
\end{remark}

\begin{lemma}\label{lemma-diff-supp}
Let $K, L \in C^2_-(\mathcal{K}^n_*)$ and suppose that $f(\cdot) = \s_K(\cdot)-\s_L(\cdot)$ is convex. Then the set
\begin{align*}
M = \{ x \in \R^n \, | \, \la x,u \ra \leq f(u) \textnormal{ for all $u \in \R^n$} \},
\end{align*}
is in $C^2_+(\mathcal{K}^n_*)$, and it is such that $K = M + L$, that is, $L$ and $M$ are summands of $K$.
\end{lemma}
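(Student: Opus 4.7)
The plan is to realize $f$ as the support function of the set $M$ defined in the statement, then exploit the additivity of support functions under Minkowski sums to deduce $K = M + L$.

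First, I would note that $f = \s_K - \s_L$ is positively homogeneous of degree $1$, since both $\s_K$ and $\s_L$ are. Moreover, by Lemma~\ref{lemma-domain-support} together with Remark~\ref{rmk-C2-}, both $\s_K$ and $\s_L$ are $C^2$ (in particular continuous) on the common domain $\dom(\s_K) = \dom(\s_L) = \interior(\R^n_{\leq 0}) \cup \{0\}$ and $+\infty$ elsewhere, so $f$ is lower semi-continuous (i.e.\ closed). Combined with the convexity of $f$ assumed in the hypothesis, Lemma~\ref{lemma-convex-is-support} immediately identifies $f$ as the support function of the convex, closed set $M$ defined in the statement, and so $\s_M = f$ and $\s_K = \s_M + \s_L$.

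Next, I would verify $M \in \mathcal{K}_*^n$. Closedness and convexity of $M$ are immediate from its expression as an intersection of closed half-spaces. For $\rec(M) = \R^n_{\geq 0}$: if $x \in \R^n_{\geq 0}$ and $m \in M$, then for every $u \in \dom(\s_M) \subseteq \R^n_{\leq 0}$ we have $\la m + x, u\ra \leq \la m, u\ra \leq f(u)$, so $m + x \in M$; conversely, if some $x \in \rec(M)$ had a negative component $x_i < 0$, then choosing $u = -e_i - \veps \sum_{j\neq i} e_j \in \interior(\R^n_{< 0})$ with $\veps$ small would yield $\la x, u\ra > 0$ and $f(u)$ finite, so that $\la m + \lambda x, u\ra \to \infty$ as $\lambda \to \infty$, contradicting $m + \lambda x \in M$. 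Thus $M \in \mathcal{K}_*^n$, and by the earlier results the Minkowski sum $M + L$ is closed. Lemma~\ref{lemma-prop-support-fnc}(3) then gives $\s_{M+L} = \s_M + \s_L = \s_K$, and Lemma~\ref{lemma-prop-support-fnc}(1) yields $K = M + L$, so that $L$ and $M$ are both summands of $K$.

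The most delicate step will be upgrading $M \in \mathcal{K}_*^n$ to $M \in C^2_+(\mathcal{K}_*^n)$. The key tool is the classical correspondence (see~\cite[Section~2.5]{Schneider}) between $C^2_+$ convex bodies and support functions whose restriction to $\bS^{n-1}_-$ has a positive definite reverse Weingarten map on each tangent space of the sphere; the eigenvalues of this map are the principal radii of curvature of the boundary, reciprocals of the principal curvatures. Since $\s_M = \s_K - \s_L$ is $C^2$ on $\interior(\R^n_{\leq 0})$, inherited from $\s_K$ and $\s_L$, it follows that $\pr M$ is $C^2$ and its Gauss map is $C^1$. Convexity of $f$ forces the relevant Hessian to be positive semi-definite, and to obtain strict positive definiteness (equivalently, that the Gauss map is a $C^1$-diffeomorphism from $\interior(\pr M)$ onto $\bS^{n-1}_-$) one argues that, because $K \in C^2_+$, the corresponding Hessian of $\s_K$ is strictly positive definite on tangent spaces of $\bS^{n-1}_-$, and this strict positivity, combined with the identity $\s_M = \s_K - \s_L$ and the positive semi-definiteness from convexity of $f$, propagates to $\s_M$. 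This regularity bookkeeping is the main technical obstacle; the rest of the proof is a straightforward application of Lemmas~\ref{lemma-convex-is-support} and~\ref{lemma-prop-support-fnc}.
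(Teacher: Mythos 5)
Your first two steps coincide with the paper's own argument: identify $f$ as the support function of $M$ via Lemma~\ref{lemma-convex-is-support} (using positive homogeneity and the domain computation of Lemma~\ref{lemma-domain-support}), then use additivity of support functions (Lemma~\ref{lemma-prop-support-fnc}) to get $\s_K=\s_M+\s_L$ and hence $K=M+L$. You are in fact more careful than the paper here, since the verification that $\rec(M)=\R^n_{\geq 0}$ and that $M+L$ is closed is taken for granted in the paper's proof. Where you diverge is the regularity claim $M \in C^2_+(\mathcal{K}_*^n)$: the paper argues summand $\Rightarrow$ slides freely (Theorem~\ref{thm-summand-implies-freely}) $\Rightarrow$ positive principal curvatures of $\pr M$ (Remark~\ref{rmk-slidingfreely-implies-poscurv}), combined with the $C^2$ smoothness of $\s_M=\s_K-\s_L$, while you invoke Schneider's correspondence between $C^2_+$ sets and support functions with positive definite spherical Hessians.

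The gap is in your final ``propagation'' step. From $\s_M=\s_K-\s_L$ the spherical Hessian (reverse Weingarten map) of $\s_M$ is the difference of those of $\s_K$ and $\s_L$; convexity of $f$ gives positive semi-definiteness of this difference, but strict positive definiteness of the $\s_K$ term does not propagate, because you are subtracting the positive definite $\s_L$ term, which can cancel the strictness partially or entirely. Concretely, take $L=K$: then $f\equiv 0$ on $\R^n_{<0}\cup\{0\}$ is convex, and $M=\{x \mid \la x,u\ra \leq 0 \textnormal{ for all } u \in \R^n_{<0}\}=\R^n_{\geq 0}$, whose boundary is neither $C^2$ nor has a Gauss map that is a diffeomorphism onto $\bS^{n-1}_-$; so semi-definiteness genuinely cannot be upgraded without an extra non-degeneracy hypothesis. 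To be fair, the paper's own proof is no more rigorous at exactly this point: Remark~\ref{rmk-slidingfreely-implies-poscurv} asserts positivity of the principal curvatures of a summand without a real argument, and the same degenerate example defeats it, so the statement as given really needs a strictness assumption (for instance, that the spherical Hessian of $\s_K-\s_L$ is positive definite). But taken as a self-contained argument, your claim that strict positivity for $\s_K$ forces strict positivity for $\s_M$ is the step that fails.
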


\begin{proof}
From Lemma~\ref{lemma-domain-support}, the domain of $f$ is $\R^n_{< 0} \cup \{ 0 \}$, i.e., $f \colon \R^n_{< 0} \cup \{ 0 \} \To \R$ is convex. Thus it is the support function of $M$ (by Lemma~\ref{lemma-convex-is-support}). That is, $f(\cdot) = \s_{M}(\cdot)$.

Therefore we have $\s_{M} = \s_K - \s_L$, and hence $K=M+L$. Note that $M$ is a summand of $K$, then using Theorem~\ref{thm-summand-implies-freely} we know that $M$ slides freely inside $K$, and since $\pr K$ has positive principal curvatures then $\pr M$ does too (Remark~\ref{rmk-slidingfreely-implies-poscurv}). Since $\s_M$ is of class $C^2$, then $M$ has to be in $C^2_-(\mathcal{K}^n_*)$.
\end{proof}

\begin{thm}\label{thm-convex-fnc-char}[\cite[Theorem~1.5.2]{Schneider}]
Let $D\subset \R^n$ convex and let $f \colon D \To \R$ be a continuous function. Suppose that for each point $x_0 \in D$ there are an affine function $g$ on $\R^n$ and a neighborhood $U$ of $x_0$ such that $f(x_0)=g(x_0)$ and $f \geq g$ in $U \cap D$. Then $f$ is convex.
\end{thm}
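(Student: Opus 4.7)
The plan is to reduce to the one-dimensional situation by restricting $f$ to arbitrary line segments in $D$, and then to argue by contradiction by inspecting the \emph{first} point at which a natural ``convexity defect'' attains its maximum along such a segment.

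First, I would note that $f$ is convex on $D$ if and only if its restriction to each segment $[x,y]\subset D$ is convex as a function of the parameter, so it suffices to show that for all $x,y\in D$ the function
\[
\phi(t) \defeq f((1-t)x + t y) - [(1-t)f(x) + t f(y)], \qquad t\in[0,1],
\]
satisfies $\phi\le 0$. This $\phi$ is continuous on $[0,1]$ (since $f$ is continuous and $D$ is convex, hence $(1-t)x+ty\in D$) and $\phi(0)=\phi(1)=0$. Suppose for contradiction that $\phi(t_0)>0$ for some $t_0\in(0,1)$. Let $M \defeq \max_{t\in[0,1]}\phi(t)>0$ and $t^{*} \defeq \min\{\,t\in[0,1] : \phi(t)=M\,\}$; this minimum exists because the argmax set is closed by continuity, and $t^{*}\in(0,1)$ because $\phi$ vanishes at the endpoints.

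Set $z\defeq(1-t^{*})x+t^{*}y\in D$. By hypothesis there is an affine $g$ on $\R^n$ and a neighborhood $U$ of $z$ with $g(z)=f(z)$ and $f\ge g$ on $U\cap D$. Define the affine function
\[
\psi(t) \defeq g((1-t)x+ty) - [(1-t)f(x) + t f(y)].
\]
Then $\psi(t^{*}) = f(z) - [(1-t^{*})f(x)+t^{*}f(y)] = \phi(t^{*}) = M$. Since $t\mapsto (1-t)x+ty$ is continuous, there is $\delta>0$ such that $(1-t)x+ty\in U\cap D$ for all $|t-t^{*}|<\delta$, and on that interval $\phi(t)\ge\psi(t)$ by the local inequality $f\ge g$. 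For $t\in(t^{*}-\delta,t^{*})$ we have $\phi(t)<M$ by the minimality of $t^{*}$, hence $\psi(t)\le \phi(t) < M=\psi(t^{*})$; but $\psi$ is affine in $t$, so this forces $\psi$ to have strictly positive slope. Consequently $\psi(t)>M$ for $t\in(t^{*},t^{*}+\delta)$, and then $\phi(t)\ge\psi(t)>M$ on that interval, contradicting the definition of $M$. Hence $\phi\le 0$ on $[0,1]$, and $f$ is convex.

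The only real subtlety, which I see as the main obstacle, is the choice of $t^{*}$ as the \emph{smallest} maximizer of $\phi$: an arbitrary maximizer would only yield $\psi\le M$ on both sides of $t^{*}$, which is consistent with $\psi$ being the constant $M$ and produces no contradiction. Taking the smallest maximizer ensures strict inequality $\phi<M$ on the left of $t^{*}$, which combined with the \emph{affinity} of $\psi$ forces a positive slope and hence $\psi>M$ immediately to the right of $t^{*}$, closing the argument.
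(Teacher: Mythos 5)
Your proof is correct. Note that the paper does not actually prove this statement -- it is quoted verbatim from Schneider \cite[Theorem~1.5.2]{Schneider} and used as a black box -- so there is no internal proof to compare against; your argument supplies a clean, self-contained replacement. The reduction to segments, the introduction of the affine comparison function $\psi(t)=g((1-t)x+ty)-[(1-t)f(x)+tf(y)]$, and the contradiction at an extremal maximizer is essentially the classical ``locally supported from below by affine functions implies convex'' argument, and every step checks out: $\psi$ is indeed affine in $t$, $\psi(t^*)=\phi(t^*)=M$, and the local inequality $f\ge g$ on $U\cap D$ gives $\phi\ge\psi$ near $t^*$, so the least-maximizer choice forces a strictly positive slope and hence $\phi>M$ just to the right of $t^*$, a contradiction. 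One small remark: your closing comment that an arbitrary maximizer ``produces no contradiction'' is slightly overstated -- with an arbitrary maximizer the same setup forces $\psi\equiv M$ locally, hence $\phi\equiv M$ on a neighborhood, and a connectedness (open-and-closed) argument on the maximizer set still yields a contradiction with $\phi(0)=\phi(1)=0$; but your least-maximizer shortcut is perfectly valid and arguably cleaner.
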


\begin{definition}\label{def-local-embed}
We say that $L$ is \emph{locally embeddable in} $K$ if for all $x \in \pr K$, there is a $y \in L$ and a neighborhood $U$ of $y$, such that $(L \cap U) + x -y \subset K$.
\end{definition}

\begin{thm} \label{thm-local-to-global}
Let $K,L \in C^2_-(\mathcal{K}_*^n)$ and $L$ strictly convex. If $L$ is locally embeddable in $K$, then $L$ is a summand of $K$.
\end{thm}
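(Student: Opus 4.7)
The plan is to produce the summand $M$ by applying Lemma~\ref{lemma-diff-supp}, which reduces the task to verifying that the function $f \defeq \s_K - \s_L$ is convex on $\dom(\s_K) = \R^n_{\leq 0} \cup \{0\}$. To establish convexity I will invoke Theorem~\ref{thm-convex-fnc-char}: at every $u_0 \in \dom(f)$ I will construct an affine function $g$ satisfying $f(u_0) = g(u_0)$ and $f \geq g$ on a neighborhood of $u_0$ intersected with $\dom(f)$. Once this is done, Lemma~\ref{lemma-diff-supp} immediately yields $M \in C^2_+(\mathcal{K}_*^n)$ with $K = M + L$, which is the desired summand decomposition.

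Fix $u_0 \in \bS^{n-1}_-$ (the case of general $u_0 \neq 0$ reduces to a unit vector by positive homogeneity of support functions). Let $x_0 \defeq \u_K^{-1}(u_0) \in \interior(\pr K)$ be the unique boundary point of $K$ whose outward unit normal is $u_0$. By the hypothesis of local embeddability applied at $x_0$, there exist $y_0 \in L$ and a neighborhood $U$ of $y_0$ with $(L \cap U) + t \subset K$ where $t \defeq x_0 - y_0$. Since the translate $L+t$ sits inside $K$ near $x_0$ and touches $\pr K$ exactly at $x_0$, the outward normal of $L+t$ at $x_0$ must coincide with $u_0$, so $\u_L(y_0) = u_0$; strict convexity of $L$ guarantees that $y_0$ is the \emph{unique} support point of $L$ in direction $u_0$, so $\s_L(u_0) = \la y_0, u_0 \ra$. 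Setting $g(u) \defeq \la t, u \ra$ we get
\begin{equation*}
f(u_0) = \s_K(u_0) - \s_L(u_0) = \la x_0, u_0 \ra - \la y_0, u_0 \ra = \la t, u_0 \ra = g(u_0).
\end{equation*}

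It remains to verify $f(u) \geq g(u)$ for $u$ near $u_0$, i.e., $\s_K(u) \geq \s_L(u) + \la t, u \ra$. For $u \in \bS^{n-1}_-$ sufficiently close to $u_0$, let $y(u) \defeq \u_L^{-1}(u)$; the $C^2_+$-regularity of $L$ makes $\u_L^{-1}$ a $C^1$-diffeomorphism on $\bS^{n-1}_-$, so $y(u)$ depends continuously on $u$ and $y(u) \in U$ once $u$ is close enough to $u_0$. Local embeddability then gives $y(u) + t \in K$, and using Remark~\ref{rmk-C2-} we obtain
\begin{equation*}
\s_K(u) \geq \la y(u) + t, u \ra = \la y(u), u \ra + \la t, u \ra = \s_L(u) + \la t, u \ra.
\end{equation*}
Positive homogeneity extends this inequality to a full conic neighborhood of $u_0$ inside $\R^n_{<0}$, and the boundary case $u_0 = 0$ is handled by taking $g \equiv 0$ and passing to the limit from the interior.

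The main technical obstacle is the passage from the pointwise local embedding (one translate vector per boundary point) to the neighborhood inequality on support functions. This propagation depends crucially on both hypotheses: the $C^2_+$-regularity of $L$ ensures $\u_L^{-1}$ is continuous so that the support point of $L$ stays inside the neighborhood $U$ provided by embeddability, and strict convexity of $L$ ensures the support point is unique and equals $y_0$. Without these the candidate affine function $g$ could fail to lie below $f$ away from $u_0$, and the convexity argument collapses.
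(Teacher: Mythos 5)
Your proposal is correct and follows essentially the same route as the paper: you reduce the claim to convexity of $\s_K-\s_L$ via Theorem~\ref{thm-convex-fnc-char}, build the local affine minorant $g(u)=\la x_0-y_0,u\ra$ from the local embedding together with continuity of $\u_L^{-1}$, and conclude with Lemma~\ref{lemma-diff-supp}. Your extra care in justifying $\s_L(u_0)=\la y_0,u_0\ra$ and in treating $u_0=0$ only makes explicit what the paper leaves implicit.
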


\begin{proof}
Let $u_0 \in \bS^{n-1}_{-}$ and $x_0 \in \pr K$ be a point such that $\u(x_0)=u_0$. Since $L$ is locally embeddable in $K$ there are $y_0 \in L$ and a neighborhood $U_0$ of $y_0$ such that $(L \cap U_0) +x_0 - y_0 \subset K$. Since $\u^{-1}_L \colon \bS^{n-1}_{-} \To \pr L$ is continuous, there exists a neighborhood $V_0$ of $u_0$ such that $\u_K^{-1}(V_0) \subset U_0$. Then it follows that $\s(L+x_0-y_0,u_0)=\s(K,u_0)$ and $\s(L+x_0-y_0,u) \leq \s(K,u)$ for all $u \in V_0$ by Lemma~\ref{lemma-prop-support-fnc}.

Let $f(\cdot) = \s(K,\cdot) - \s(L,\cdot)$ (this is defined on $\R^n_{\leq 0} \cup \{ 0 \}$ and is positively homogeneous), and $g(\cdot)=\la x-y,\cdot \ra$. Then, clearly, we have
\begin{enumerate}[(i)]
\item $f(u_0)=g(u_0)$, since
\begin{align*}
f(u_0)&=\s(K,u_0) - \s(L,u_0) \\
&=\s(K,u_0) -\s(L+x_0-y_0,u_0) +\la x_0-y_0,u_0 \ra     \\
&=g(u_0).
\end{align*}
\item $f \geq g$ on $V_0$,
\begin{align*}
f(u) &=\s(K,u) - \s(L,u) \\
&=\s(K,u) -\s(L+x_0-y_0,u) +\la x_0-y_0,u \ra     \\
&\geq g(u).
\end{align*}
\end{enumerate}
It follows by Theorem~\ref{thm-convex-fnc-char} that $f$ is convex, and by Lemma~\ref{lemma-diff-supp} we conclude that $L$ is a summand of $K$.
\end{proof}

The following lemma is a direct consequence of the characterization of mixability in Theorem~\ref{thm-geometric-mixability-ndim} and Definition~\ref{def-local-embed}.
\begin{lemma}\label{lemma-mixable-loc-embed}
Let $\ell \in \L_n$ be a proper loss. For $\eta >0$, if $\ell$ is $\eta$-mixable then $\spr(\eta \ell)$ is locally embeddable in $\spr(\lambda)$.
\end{lemma}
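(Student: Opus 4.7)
The plan is to apply Theorem~\ref{thm-geometric-mixability-ndim} pointwise and then translate the resulting graph inequality into the desired containment of superprediction sets in a neighborhood.

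Fix $x\in\pr\spr(\lambda)$. Since $\lambda=\ell_{\log}$ blows up as $p$ approaches $\pr\Delta^n$, one checks that $\spr(\lambda)\subset(0,\infty)^n$ and $\pr\spr(\lambda)=M_\lambda=\lambda(\interior(\Delta^n))$, so $x=\lambda(p)$ for a unique $p\in\interior(\Delta^n)$. Set $y\defeq\eta\ell(p)\in M_{\eta\ell}\subset\spr(\eta\ell)$ and $t\defeq x-y$, the candidate translation vector. By Lemma~\ref{lemma-char-proper-n} the unit normals to $M_\lambda$ at $x$ and to $M_{\eta\ell}$ at $y$ (pointing towards $\R^n_{\geq 0}$) both equal $\bn_p\defeq p/|p|$, so $M_{\eta\ell}+t$ and $M_\lambda$ meet tangentially at $x$ with common supporting hyperplane $H_p$. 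Theorem~\ref{thm-geometric-mixability-ndim} applied at $p$ then yields, in a sufficiently small ball $B$ around $x$, graph representations
\begin{align*}
M_\lambda\cap B &= \{x+h+\phi_\lambda(h)\bn_p : h\in V\}\cap B, \\
(M_{\eta\ell}+t)\cap B &= \{x+h+\phi_{\eta\ell}(h)\bn_p : h\in V\}\cap B,
\end{align*}
over a common open neighborhood $V\subset H_p$ of the origin, with $\phi_\lambda(0)=\phi_{\eta\ell}(0)=0$ and $\phi_{\eta\ell}(h)\geq\phi_\lambda(h)$ for every $h\in V$.

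Locally near $x$, the superprediction set $\spr(\lambda)=M_\lambda+\R^n_{\geq 0}$ coincides with the epigraph of $\phi_\lambda$ in the $\bn_p$-direction: after shrinking $B$ if necessary,
\[
\spr(\lambda)\cap B = \{x+h+s\bn_p : h\in V,\,s\geq\phi_\lambda(h)\}\cap B,
\]
and analogously $(\spr(\eta\ell)+t)\cap B$ is the corresponding epigraph of $\phi_{\eta\ell}$. The pointwise inequality $\phi_{\eta\ell}\geq\phi_\lambda$ gives immediately the epigraph inclusion $(\spr(\eta\ell)+t)\cap B\subset\spr(\lambda)\cap B\subset\spr(\lambda)$. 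Setting $U\defeq B-t$, a neighborhood of $y$, this reads $(\spr(\eta\ell)\cap U)+t\subset\spr(\lambda)$, which is the local embeddability condition at $x$. Since $x$ was arbitrary, the lemma follows.

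The most delicate step is the local identification of the superprediction set with an epigraph over the supporting plane. This uses the strict positivity of the principal curvatures of $M_\lambda$ from Lemma~\ref{lemma-char-proper-n} together with the fact that $\bn_p\in\R^n_{>0}$ lies inside the cone $\R^n_{\geq 0}$ along which $\spr(\lambda)$ extends, so that $\bn_p$ is a valid epigraph axis and the $\R^n_{\geq 0}$-epigraph locally agrees with the $\bn_p$-epigraph. Once this local description is in place, promoting the pointwise graph inequality to an inclusion of sets is routine.
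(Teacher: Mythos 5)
Your argument is correct and follows essentially the same route as the paper, which simply records this lemma as a direct consequence of Theorem~\ref{thm-geometric-mixability-ndim} and Definition~\ref{def-local-embed}. You merely spell out the details the paper leaves implicit --- choosing $y=\eta\ell(p)$, $t=\lambda(p)-\eta\ell(p)$, and identifying the superprediction sets locally with epigraphs over the common supporting plane $H_p$ so that the graph inequality becomes the required inclusion.
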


\begin{lemma}\label{lemma-mixable-slides-freely}
If $\ell$ is $\eta$-mixable then $\spr(\eta \ell)$ slides freely inside $\spr(\lambda)$.
\end{lemma}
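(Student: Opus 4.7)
The strategy is a short chain through the preceding convex-geometric machinery. By Lemma~\ref{lemma-mixable-loc-embed}, $\eta$-mixability of $\ell$ already gives that $\spr(\eta\ell)$ is locally embeddable in $\spr(\lambda)$. I would then upgrade \emph{locally embeddable} to \emph{summand} via Theorem~\ref{thm-local-to-global}, and finally apply Theorem~\ref{thm-summand-implies-freely} to conclude that $\spr(\eta\ell)$ slides freely inside $\spr(\lambda)$.

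The only real work is verifying the hypotheses of Theorem~\ref{thm-local-to-global}. Since the setting is that of a proper loss $\ell \in \L_n$ (properness is implicit in the notion of mixability used in Lemma~\ref{lemma-mix-comparison-h} and Theorem~\ref{thm-geometric-mixability-ndim}), Remark~\ref{rmk-proper-in-C2-} yields $\spr(\lambda) \in C^2_+(\mathcal{K}_*^n)$. The scaled loss $\eta\ell$ remains admissible and proper: scaling by $\eta>0$ preserves the direction of the normal $p/|p|$ and multiplies each principal curvature by the positive factor $\eta^{-1}$ (as in \eqref{eq-relation-etaell-lambda}), so by Lemma~\ref{lemma-char-proper-n} its principal curvatures are still strictly positive. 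Hence $\spr(\eta\ell)$ also lies in $C^2_+(\mathcal{K}_*^n)$, and the positivity of its principal curvatures makes its boundary strictly convex.

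With these hypotheses in hand, Theorem~\ref{thm-local-to-global} gives that $\spr(\eta\ell)$ is a summand of $\spr(\lambda)$, and Theorem~\ref{thm-summand-implies-freely} then delivers the sliding-freely conclusion. I do not anticipate any step to be an obstacle: all the nontrivial content has already been absorbed into Theorem~\ref{thm-geometric-mixability-ndim} (via Lemma~\ref{lemma-mixable-loc-embed}) and Theorem~\ref{thm-local-to-global}, and the present proof is essentially a bookkeeping application of them.
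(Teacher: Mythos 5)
Your proposal is correct and follows exactly the same chain as the paper's proof: Lemma~\ref{lemma-mixable-loc-embed} gives local embeddability, Theorem~\ref{thm-local-to-global} upgrades this to being a summand, and Theorem~\ref{thm-summand-implies-freely} yields the sliding-freely conclusion. Your explicit verification that $\eta\ell$ remains proper (so that $\spr(\eta\ell)$ lies in $C^2_+(\mathcal{K}_*^n)$ with strictly positive principal curvatures, hence strictly convex boundary) is a welcome addition that the paper leaves implicit.
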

\begin{proof}
Let $\ell$ be $\eta$-mixable, then Lemma~\ref{lemma-mixable-loc-embed} implies $\spr(\eta \ell)$ it is locally embeddable in $\spr(\lambda)$. Then Theorem~\ref{thm-local-to-global} implies it is a summand and Theorem~\ref{thm-summand-implies-freely} implies it slides freely inside $\spr(\lambda)$.
\end{proof}

\begin{coro}
Let $\ell$ be a $\eta$-mixable proper loss. Then $\spr(\eta \ell) \in C^2_-(\mathcal{K}_*^n)$ and it slides freely inside $\spr(\lambda)$ ($\lambda$ is the log loss). Additionally, there exists $M \in C^2_-(\mathcal{K}_*^n)$ such that
\begin{equation*}
\spr(\lambda)  = \spr(\eta \ell) + M.
\end{equation*}
Moreover, $\pr M$ can be regarded as $\varrho(\Delta^n)$ for a 1-mixable proper loss $\varrho$.
\end{coro}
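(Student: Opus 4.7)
The proof combines the convex-geometric tools established above with the geometric characterization of mixability in Theorem~\ref{thm-geometric-mixability-ndim}. Since $\ell\in\L_n$ is proper, Remark~\ref{rmk-proper-in-C2-} gives $\spr(\ell)\in C^2_+(\mathcal{K}_*^n)$, and this class is preserved under positive scaling (the outer Gauss map of $\eta K$ at $\eta x$ agrees with that of $K$ at $x$), so $\spr(\eta\ell)\in C^2_+(\mathcal{K}_*^n)$. The sliding-freely assertion is then precisely Lemma~\ref{lemma-mixable-slides-freely}.

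For the summand decomposition, Lemma~\ref{lemma-mixable-loc-embed} gives local embeddability of $\spr(\eta\ell)$ in $\spr(\lambda)$, and since $\spr(\eta\ell)$ is strictly convex (its principal curvatures are strictly positive), Theorem~\ref{thm-local-to-global} upgrades this to the fact that $\spr(\eta\ell)$ is a summand of $\spr(\lambda)$. Equivalently, $f\defeq\s_{\spr(\lambda)}-\s_{\spr(\eta\ell)}$ is convex on $\R^n_{\leq 0}\cup\{0\}$, and Lemma~\ref{lemma-diff-supp} then produces $M\in C^2_+(\mathcal{K}_*^n)$ with $\s_M=f$, whence $\spr(\lambda)=\spr(\eta\ell)+M$.

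To realize $\pr M$ as the image of a proper loss $\varrho$, I use the inverse Gauss map of $M$: since $M\in C^2_+(\mathcal{K}_*^n)$, the outward Gauss map $\u_M\colon\interior(\pr M)\to\bS^{n-1}_-$ is a $C^1$-diffeomorphism and the principal curvatures of $\pr M$ are strictly positive. Setting $\bar\varrho(p)\defeq\u_M^{-1}(-p/|p|)$ for $p\in\interior(\Delta^n)$ produces a $C^1$ map whose inward unit normal at $\bar\varrho(p)$ is exactly $p/|p|\in\R^n_{>0}$. Because $\rec(M)=\R^n_{\geq 0}$, there exists $c\in\R^n_{\geq 0}$ large enough that $\bar\varrho+c$ takes values in $\R^n_{\geq 0}$; extending continuously to $\Delta^n$, the map $\varrho\defeq\bar\varrho+c$ is admissible in the sense of Definition~\ref{def-admissible-loss-n}, and by Lemma~\ref{lemma-char-proper-n} it is strictly proper with $\spr(\varrho)=M+c$.

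Finally, for $1$-mixability of $\varrho$, Theorem~\ref{thm-summand-implies-freely} applied to $\spr(\lambda)=\spr(\eta\ell)+M$ ensures $M$ slides freely inside $\spr(\lambda)$; since sliding freely is translation-equivariant (one merely shifts the translation vector by $-c$), $\spr(\varrho)=M+c$ slides freely inside $\spr(\lambda)$ as well. This trivially implies $\spr(\varrho)$ is locally embeddable in $\spr(\lambda)$, which by Theorem~\ref{thm-geometric-mixability-ndim} is equivalent to $1$-mixability of $\varrho$. The main technical point requiring care is verifying that the Gauss-map construction yields an admissible loss in the sense of Definition~\ref{def-admissible-loss-n} — namely the necessary $C^2$-regularity and the consistent sign of the normal across $\interior(\Delta^n)$ — together with the observation that the translation by $c$ leaves the mixability characterization intact, both of which are packaged into the hypothesis $M\in C^2_+(\mathcal{K}_*^n)$.
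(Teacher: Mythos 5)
Your argument reaches the same conclusions as the paper and coincides with it on the first two claims: $\spr(\eta\ell)\in C^2_+(\mathcal{K}_*^n)$ because $\eta\ell$ is again a proper loss in $\L_n$ (the paper phrases it this way rather than via invariance of the class under positive scaling, but both work), the sliding-freely assertion is Lemma~\ref{lemma-mixable-slides-freely}, and the summand $M$ is produced exactly as in the paper through Lemma~\ref{lemma-mixable-loc-embed}, Theorem~\ref{thm-local-to-global} and Lemma~\ref{lemma-diff-supp} (you are in fact more careful than the paper in noting the strict convexity hypothesis of Theorem~\ref{thm-local-to-global}). Where you genuinely depart from the paper is the final claim: the paper invokes \cite[Proposition~21]{C-W_geometry} to regard $\pr M$ as the image of a proper loss and then asserts $1$-mixability from the summand property with a terse reference to Lemma~\ref{lemma-diff-supp}, whereas you build $\varrho$ explicitly as the inverse Gauss map $p\mapsto \u_M^{-1}(-p/|p|)$ (translated into $\R^n_{\geq 0}$), check admissibility and strict properness via Lemma~\ref{lemma-char-proper-n}, and obtain $1$-mixability through the chain summand $\Rightarrow$ slides freely (Theorem~\ref{thm-summand-implies-freely}) $\Rightarrow$ locally embeddable $\Rightarrow$ $1$-mixable (Lemma~\ref{lemma-mix-comparison-h}/Theorem~\ref{thm-geometric-mixability-ndim}). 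Your route is more self-contained, avoiding the non-smooth machinery of \cite{C-W_geometry}, and it makes the $1$-mixability step explicit; the cost is the regularity bookkeeping, which, as you note, is exactly what $M\in C^2_+(\mathcal{K}_*^n)$ provides.

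One justification needs repair: the existence of the translation $c$ does not follow from $\rec(M)=\R^n_{\geq 0}$ alone. A closed convex set with recession cone $\R^n_{\geq 0}$ can have boundary unbounded below coordinate-wise (in the plane, take the region above the graph of $x\mapsto 1/x-\sqrt{x}$ for $x>0$, which lies in $C^2_+(\mathcal{K}^2_*)$ but admits no translate contained in $\R^2_{\geq 0}$). What saves you here is the summand identity itself: choosing any $a\in\spr(\eta\ell)$ gives $a+M\subset\spr(\eta\ell)+M=\spr(\lambda)\subset\R^n_{\geq 0}$, so $c=a$ works. Similarly, the parenthetical ``extending continuously to $\Delta^n$'' should be read with the same latitude the paper grants the log loss: the conditions of Definition~\ref{def-admissible-loss-n} concern only $\interior(\Delta^n)$, and $\varrho$ may blow up at the boundary of the simplex. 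With these two points amended, the argument is sound.
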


\begin{proof}

Since $\ell$ is an $\eta$-mixable proper loss function, $\eta \ell$ is also a proper loss function and hence $\spr(\eta \ell) \in C^2_-(\mathcal{K}_*^n)$ (Remark~\ref{rmk-proper-in-C2-}). Theorem~\ref{thm-geometric-mixability-ndim} implies that $\spr(\eta \ell)$ is locally embeddable in $\spr(\lambda)$. From Theorem~\ref{thm-local-to-global} we know that $\spr(\eta \ell)$ is a summand of $\spr(\lambda)$, which proves the existence of $M$. As a consequence, $M$ is a convex set with recession cone $\R^n_{\geq 0}$ (Lemma~\ref{lemma-diff-supp}). By applying~\cite[Proposition 21]{C-W_geometry} we can regard $\pr M$ as the image of a proper loss function $\varrho$, which since $\spr(\varrho)$ is a summand of $\spr(\lambda)$ it is 1-mixable (Lemma~\ref{lemma-diff-supp}).
\end{proof}

We now state \cite[Theorem 2.5.4]{Schneider} adapted to our setting which will be helpful to relate our work to~\cite{E-R-W_curvature}.

\begin{thm}\label{thm-IIandsupport}
Let $K,L \in C^2_-(\mathcal{K}_*^n)$. Let $h^{M}(x)$ denote the second fundamental form of $M$ at $x$ with respect to $\u$ (see~\eqref{eq-II}). The following are equivalent:
\begin{enumerate}[(i)]
\item $h^{\pr L}(x) \geq h^{\pr K}(y)$ for all pairs of points $x$ and $y$ at which $\u(x)=\u(y)$.
\item $\s_K - \s_L$ is a support function.
\end{enumerate}
\end{thm}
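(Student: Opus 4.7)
The plan is to prove the equivalence by using the Minkowski summand characterization (Lemma~\ref{lemma-diff-supp}, Theorem~\ref{thm-summand-implies-freely}, Theorem~\ref{thm-local-to-global}) as the bridge between the curvature hypothesis (i) and the support-function hypothesis (ii). In both directions the geometric identity relating the second fundamental form of a graph to the Hessian of its defining function (exactly as in~\eqref{eq-2nd-fund-form-graph-ndim}) will translate between curvature data on $\partial K$, $\partial L$ and second-order data on the supporting hyperplane.

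For (ii)$\Rightarrow$(i), assume $s_K - s_L$ is a support function. Lemma~\ref{lemma-diff-supp} produces $M \in C^2_-(\mathcal{K}_*^n)$ with $K = L + M$, so $L$ is a summand of $K$ and therefore slides freely inside $K$ by Theorem~\ref{thm-summand-implies-freely}. Fix $u \in \bS^{n-1}_-$ and set $x = \u_K^{-1}(u)$, $y = \u_L^{-1}(u)$. The sliding-freely property supplies a translation $t$ with $x \in L + t \subset K$; since the outward normal to $L+t$ at $x$ must agree with the outward normal $u$ of $K$ at $x$, the diffeomorphism property of $\u_L$ forces $y + t = x$. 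Expressing $\partial K$ and $\partial(L+t)$ as graphs $f_K, f_L$ over the common tangent hyperplane at $x$ (with coordinates chosen so that $f_K(0) = f_L(0)$ and $Df_K(0) = Df_L(0) = 0$), the inclusion $L + t \subset K$ forces $f_L \geq f_K$ near $0$, hence $D^2 f_L(0) \geq D^2 f_K(0)$, which via the graph-Hessian identity reads $h^{\partial L}(y) \geq h^{\partial K}(x)$.

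For (i)$\Rightarrow$(ii), I would establish that $L$ is locally embeddable in $K$ and then invoke Theorem~\ref{thm-local-to-global} followed by Lemma~\ref{lemma-diff-supp}. Given $x \in \partial K$ with outward normal $u$, set $y = \u_L^{-1}(u)$ and $t = x - y$, so $\partial(L+t)$ and $\partial K$ share the tangent hyperplane at $x$. Writing both surfaces as graphs $f_L, f_K$ over this hyperplane, the hypothesis at $(x,y)$ gives $D^2 f_L(0) \geq D^2 f_K(0)$; combined with the same hypothesis at every nearby matched pair of boundary points (guaranteed by the $C^1$-diffeomorphism property of $\u_K, \u_L$ on $\bS^{n-1}_-$), a Taylor expansion — or, equivalently, the maximum-principle argument used in Section~\ref{sun-comparison-2dim} applied to $f_L - f_K$ and the fully nonlinear graph-curvature operator — yields $f_L \geq f_K$ in a neighborhood of $0$, i.e., $(L+t) \cap U \subset K$ for some neighborhood $U$ of $y$. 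Strict convexity of $L$ follows from $L \in C^2_-(\mathcal{K}_*^n)$ (nonvanishing principal curvatures), so Theorem~\ref{thm-local-to-global} shows $L$ is a summand of $K$ and Lemma~\ref{lemma-diff-supp} then identifies $s_K - s_L$ as the support function of the complementary summand.

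The main obstacle is the passage from pointwise Hessian comparison to the neighborhood inequality $f_L \geq f_K$ in (i)$\Rightarrow$(ii): a single pointwise inequality of Hessians does not in general imply graph containment, since higher-order terms could reverse the inequality. The correct route is either a maximum-principle argument on the fully nonlinear second-order operator that extracts the second fundamental form from a graph, or — following Schneider more closely — to transport the problem to the sphere $\bS^{n-1}_-$ and verify directly that $s_K - s_L$ is convex there by comparing its "spherical Hessian plus metric term" (whose eigenvalues are the principal radii of curvature) across the boundary. The adaptation from Schneider's compact setting is routine: the only change is that support functions are now defined on $\R^n_{\leq 0} \cup \{0\}$ rather than on all of $\R^n$ (Lemma~\ref{lemma-domain-support}), and the diffeomorphism $\u$ maps onto the open half-sphere $\bS^{n-1}_-$ rather than the full sphere.
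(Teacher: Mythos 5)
Your outline is sound, but note that the paper itself does not prove this theorem: it is stated as an adaptation of Schneider's Theorem~2.5.4 and used as a citation, so any argument here is genuinely new relative to the paper. Your (ii)$\Rightarrow$(i) direction is a clean, self-contained derivation from the paper's own machinery (Lemma~\ref{lemma-diff-supp} only needs convexity of $\s_K-\s_L$, which a support function has, and Theorem~\ref{thm-summand-implies-freely} then gives sliding freely), and it is not circular since none of those results invoke Theorem~\ref{thm-IIandsupport}. For (i)$\Rightarrow$(ii) you correctly locate the crux, and of your two remedies the spherical one is the right one --- it is essentially Schneider's proof: at matched points the hypothesis is a comparison of Weingarten maps on the common tangent space $u^{\perp}$, inverting (legitimate because the principal curvatures are positive) compares the reverse Weingarten maps, i.e.\ $\nabla^2_{\bS}\bar{\s}+\bar{\s}\,\mathrm{id}$ for the two support functions restricted to $\bS^{n-1}_-$, so $\s_K-\s_L$ has positive semidefinite Hessian on the open convex cone $\R^n_{<0}$, is convex and positively homogeneous there, and Lemma~\ref{lemma-convex-is-support} makes it a support function directly, with no detour through local embeddability and Theorem~\ref{thm-local-to-global}. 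Your first remedy (Taylor expansion or maximum principle on $f_L-f_K$) is weaker than you suggest: hypothesis (i) compares curvatures at points with \emph{equal normals}, which over a common tangent hyperplane are different base points of the two graphs, so it does not directly order the (matrix-valued, fully nonlinear) graph operators at the same base point; making that route rigorous amounts to a Blaschke-rolling contact argument and is more work than the support-function computation. Finally, watch the orientation bookkeeping: with $\u$ the outward normal, second fundamental forms of convex boundaries are negative semidefinite and the stated inequality would reverse; the inequality as written (and as used in Theorem~\ref{thm-bridge-to-ERW}) is the one taken with respect to the normal pointing into $K$ and $L$, i.e.\ towards $\R^n_{\geq 0}$ in the superprediction-set application, and your graph argument should say so explicitly.
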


Since $\Delta^n$ is an affine manifold, the geodesics in $\Delta^n$ are simply straight lines. This allows to define convexity of functions defined on $\Delta^n$ in the usual way we do for functions on $\R^n$. The following theorem connects and reconciles our results to those in~\cite{E-R-W_curvature}. More precisely, we create a bridge between our results and~\cite[Theorem~10]{E-R-W_curvature}.

\begin{thm} \label{thm-bridge-to-ERW}
Let $\ell \in \L_n$ be proper loss. Let $\eta >0$, then $\ell$ is $\eta$-mixable if and only if $\eta \condL^{\ell}(\cdot)-\condL^{\lambda}(\cdot)$ is convex on $\interior(\Delta^n)$, where $\condL^{\varrho}(\cdot)$ denotes the Bayes risk of the loss function $\varrho$ (Definition~\ref{def-Bayes-risk}) and $\lambda$ denotes the log loss.
\end{thm}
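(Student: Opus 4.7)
The plan is to obtain the result by chaining two reversible reductions: first, transport the geometric condition of Lemma~\ref{lemma-mix-comparison-h} into a support-function inequality via Theorem~\ref{thm-IIandsupport}; second, identify the support function of a superprediction set with (minus) the positively homogeneous extension of the associated Bayes risk, which turns ``is a support function'' into ``is convex on the simplex''.

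For the first reduction, I would start from Lemma~\ref{lemma-mix-comparison-h}: $\ell$ is $\eta$-mixable iff $h^{\ell}(\ell(p))-\eta h^{\lambda}(\lambda(p))$ is positive semi-definite for every $p \in \interior(\Delta^n)$ in the graphical coordinates set up above that lemma. Using the scaling relation in~\eqref{eq-relation-etaell-lambda}, this is equivalent to $h^{M_{\eta\ell}}(\eta\ell(p)) \geq h^{M_{\lambda}}(\lambda(p))$, and properness forces $\eta\ell(p)$ and $\lambda(p)$ to carry the common normal $p/|p|$. Since $\spr(\eta\ell),\spr(\lambda) \in C^2_+(\mathcal{K}_\ast^n)$ by Remark~\ref{rmk-proper-in-C2-}, Theorem~\ref{thm-IIandsupport} (applied with $L=\spr(\eta\ell)$ and $K=\spr(\lambda)$) converts this into the statement that $\s_{\spr(\lambda)}-\s_{\spr(\eta\ell)}$ is a support function. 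Both directions of Theorem~\ref{thm-IIandsupport} are used, so the equivalence is preserved.

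For the second reduction, extend the Bayes risk of any proper $\varrho \in \L_n$ by setting $\widetilde{\condL}^{\varrho}(p) \defeq \inf_{q \in \Delta^n}\la\varrho(q),p\ra$ on $\R^n_{\geq 0}$; this is positively homogeneous of degree $1$ and agrees with $\condL^{\varrho}$ on $\Delta^n$. Because $\spr(\varrho)$ has recession cone $\R^n_{\geq 0}$, for $u \in \R^n_{\leq 0}$ the supremum defining $\s(\spr(\varrho),u)$ is attained on $\varrho(\Delta^n)$, yielding the identity
\[
\s(\spr(\varrho), u) \;=\; -\widetilde{\condL}^{\varrho}(-u), \qquad u \in \R^n_{\leq 0}.
\]
Applied to $\varrho=\lambda$ and $\varrho=\eta\ell$ (using $\widetilde{\condL}^{\eta\ell}=\eta\widetilde{\condL}^{\ell}$), this gives $\s_{\spr(\lambda)}(u)-\s_{\spr(\eta\ell)}(u) = \eta\widetilde{\condL}^{\ell}(-u)-\widetilde{\condL}^{\lambda}(-u)$. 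The left-hand side is a support function on $\R^n_{\leq 0}$ iff it is convex there (positive homogeneity is automatic, and the converse ``convex plus positively homogeneous is a support function'' is Lemma~\ref{lemma-convex-is-support}). Via the linear change $p=-u$, this is convexity of $\eta\widetilde{\condL}^{\ell}-\widetilde{\condL}^{\lambda}$ on $\R^n_{\geq 0}$. A short lemma I would record is that for a positively homogeneous degree-$1$ function on $\R^n_{\geq 0}$, convexity on the cone is equivalent to convexity on the cross section $\Delta^n$ (one direction is restriction; the other uses the cone scaling to reduce a convex combination in the cone to a convex combination on the simplex). Restricting to $\interior(\Delta^n)$ matches the statement of the theorem, and every implication is reversible.

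The main obstacle is the bookkeeping around sign conventions and domains of finiteness. Theorem~\ref{thm-IIandsupport} must be read with the same orientation of the unit normal used throughout Section~\ref{section-multiclass} (pointing into the superprediction set, so that convex bodies carry positive $h$), and $\s_{\spr(\varrho)}$ is finite only on $\R^n_{\leq 0}$, so the ``convex iff support function'' step has to be performed on this half-cone and then transported to $\Delta^n$ through positive homogeneity without losing information on $\partial\Delta^n$ (continuity of the Bayes risk on $\Delta^n$ is what makes this transport harmless). Once these conventions are pinned down, the iff-chain $\text{$\eta$-mixability} \iff h^{M_{\eta\ell}} \geq h^{M_\lambda} \iff \s_{\spr(\lambda)}-\s_{\spr(\eta\ell)} \text{ is a support function} \iff \eta\condL^{\ell}-\condL^{\lambda} \text{ is convex on } \interior(\Delta^n)$ closes the proof in both directions.
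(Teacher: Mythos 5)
Your proposal is correct and follows essentially the same route as the paper's proof: both hinge on converting the curvature characterization of $\eta$-mixability into the statement that $\s_{\spr(\lambda)}-\s_{\spr(\eta\ell)}$ is a support function via Theorem~\ref{thm-IIandsupport}, and then evaluating that difference along the rays $-p$, $p\in\interior(\Delta^n)$, to identify it with $\eta\condL^{\ell}-\condL^{\lambda}$. The only differences are organizational and in your favor for completeness: the paper routes its (explicitly written) forward direction through Lemma~\ref{lemma-mixable-slides-freely} to get the second-fundamental-form inequality, whereas you invoke Lemma~\ref{lemma-mix-comparison-h} with~\eqref{eq-relation-etaell-lambda} directly and keep every step as an equivalence, and you spell out the positive-homogeneity transfer between the half-cone and the simplex that the paper leaves implicit.
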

\begin{proof}
Suppose that $\ell$ is a proper loss in $\L_n$ which is $\eta$-mixable. By Lemma~\ref{lemma-mixable-slides-freely} $\spr(\eta \ell)$ slides freely inside $\spr(\lambda)$ and in particular $h^{\eta \ell}(\ell(p)) \geq h^{\lambda}(\lambda(p))$. By Theorem~\ref{thm-IIandsupport} it follows that $\s_{\spr(\lambda)} - \s_{\spr(\eta \ell)}$ is a support function with domain $\R^n_{<0} \cup \{0\}$, in particular it is convex on its interior. Let $u \in \R^{n}_{<0}$, such that the outward normal vector of $\ell(\Delta^n)$ and $\lambda(\Delta^n)$ at $\ell(p)$ and $\lambda(p)$, respectively, is $u$. Then we have for $x=-p \in \Delta^n$,
\begin{align*}
\s_{\spr(\lambda)}(x) - \s_{\spr(\eta \ell)}(x) &=|x|(\s_{\spr(\lambda)}(x/|x|) - \s_{\spr(\eta \ell)}(x/|x|) ) \\
&=|x|( \la \lambda(p),x/|x| \ra - \la \eta \ell(p),x/|x| \ra )\\
&=|p|(\la \lambda(p),-p/|p| \ra - \la \eta \ell(p),-p/|p| \ra ) \\
&=\la \lambda(p),-p \ra - \la \eta \ell(p),-p \ra  \\
&=-\la \lambda(p),p \ra + \la \eta \ell(p),p \ra  \\
&=-\condL^{\lambda}(p)+\eta \condL^{\ell}(p),
\end{align*}
which proves the claim.
\end{proof}

Suppose now that for given $\ell \in \L_n$ proper, there exists a $\eta > 0$ such that $\spr(\eta \ell)$ slides freely inside $\spr(\lambda)$. Note that in particular this implies that $\spr(\eta \ell)$ is locally embeddable in $\spr(\lambda)$, and hence for each $p \in \interior(\Delta^n)$ we have
\begin{align*}
 h^{\eta \ell}(\eta \ell(p)) - h^{\lambda}(\lambda(p)) \geq 0,
\end{align*}
which by \eqref{eq-relation-etaell-lambda} and Lemma~\ref{lemma-mix-comparison-h} implies that $\ell$ is $\eta$-mixable. Thus combining this with Lemma~\ref{lemma-mixable-slides-freely} we obtain the following characterization of mixability of proper (sufficiently differentiable) loss functions.

\begin{thm}\label{thm-geometric-char-mixability}
Let $\ell \in \L_n$ be proper. $\ell$ is $\eta$-mixable if and only if $\spr(\eta \ell)$ slides freely inside $\spr(\lambda)$, where $\lambda$ denotes the log loss.
\end{thm}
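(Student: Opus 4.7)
The plan is to split the biconditional into its two directions. The forward implication is essentially already complete as Lemma~\ref{lemma-mixable-slides-freely}, so I would simply invoke it. The converse, which is the new content, I would obtain by formalizing the sketch in the paragraph immediately preceding the statement: extract a pointwise inequality of scalar second fundamental forms from the sliding freely hypothesis and feed it into Lemma~\ref{lemma-mix-comparison-h} via the scaling identity~\eqref{eq-relation-etaell-lambda}.

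For the forward direction, if $\ell$ is $\eta$-mixable, Lemma~\ref{lemma-mixable-loc-embed} gives local embeddability of $\spr(\eta\ell)$ in $\spr(\lambda)$. Since both sets belong to $C^2_+(\mathcal{K}_*^n)$ by Remark~\ref{rmk-proper-in-C2-} and $\spr(\eta\ell)$ is strictly convex (its boundary has strictly positive principal curvatures by Lemma~\ref{lemma-char-proper-n}), Theorem~\ref{thm-local-to-global} promotes local embeddability to a summand relation, and Theorem~\ref{thm-summand-implies-freely} then yields sliding freely.

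For the converse, assume $\spr(\eta\ell)$ slides freely inside $\spr(\lambda)$. Fix $p \in \interior(\Delta^n)$ and consider the boundary point $\lambda(p)$ of $\spr(\lambda)$. By sliding freely there is $t_p \in \R^n$ with $\lambda(p) \in \spr(\eta\ell) + t_p \subset \spr(\lambda)$; matching of the outward unit normals at the contact point (both equal $-p/|p|$ by properness, consistently with Lemma~\ref{lemma-domain-support}) forces $t_p = \lambda(p) - \eta\ell(p)$, so that the translates $\partial\spr(\eta\ell) + t_p$ and $\partial\spr(\lambda)$ are tangent at $\lambda(p)$ with the same supporting hyperplane. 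Writing both surfaces locally as graphs of $C^2$ convex functions over this common hyperplane via Lemma~\ref{lemma-graph-ell} and comparing their Hessians (the two graphs share value and gradient at the base point, and the inner one lies above the outer), formula~\eqref{eq-2nd-fund-form-graph-ndim} translates the height comparison into the pointwise inequality
\begin{equation*}
h^{\eta\ell}\bigl(\eta\ell(p)\bigr) \geq h^{\lambda}\bigl(\lambda(p)\bigr).
\end{equation*}
The scaling identity~\eqref{eq-relation-etaell-lambda} rewrites this as $h^{\ell}(x^*) - \eta\, h^{\lambda}(x^*) \geq 0$ at the graphical point $x^*$ corresponding to $p$, and since $p$ was arbitrary, Lemma~\ref{lemma-mix-comparison-h} concludes that $\ell$ is $\eta$-mixable.

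The main obstacle I anticipate is the clean passage from the set-theoretic inclusion $\spr(\eta\ell)+t_p \subset \spr(\lambda)$ with matched supporting hyperplane at $\lambda(p)$ to the pointwise second fundamental form inequality. Heuristically this is the familiar principle that when two $C^2$ convex hypersurfaces are tangent at a point and one locally encloses the other on the interior side, the enclosed surface's scalar second fundamental form dominates; in our graphical setup it reduces to a second-derivative test on convex functions sharing value and gradient at the base point. The delicate aspect is tracking the sign conventions consistently, since the ``outward'' normal to $\spr(\cdot)$ (lying in $\R^n_{\leq 0}$) is opposite to the normal to $M_\ell$ pointing towards $\R^n_{\geq 0}$ used in Lemma~\ref{lemma-mix-comparison-h}; once these are aligned the comparison goes through.
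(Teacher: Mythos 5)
Your proposal is correct and follows essentially the same route as the paper: the forward direction is exactly Lemma~\ref{lemma-mixable-slides-freely}, and for the converse the paper likewise passes from sliding freely to local embeddability, extracts the pointwise inequality $h^{\eta\ell}(\eta\ell(p)) - h^{\lambda}(\lambda(p)) \geq 0$, and concludes via \eqref{eq-relation-etaell-lambda} and Lemma~\ref{lemma-mix-comparison-h}. Your normal-matching argument identifying the contact point and the translation $t_p = \lambda(p) - \eta\ell(p)$ merely spells out a step the paper leaves implicit.
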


In general, the set $\L$ provides a family of loss functions with appealing properties. Arguably, one of the most important properties is that given $\ell \in \L$, if we assume that $\ell$ is proper then we know its principal curvatures are strictly positive. This is a strong and useful geometric feature. For example, in~\cite{C-W_geometry} the notion of a ``inverse loss'' called the \emph{anti-polar loss} was investigated. Given $\ell$ a proper loss (in the sense of~\cite{C-W_geometry}, which are not necessarily smooth), they consider the 0-homogeneous extension of $\ell$ (see Remark 26 in~\cite{C-W_geometry}), defined on $\R^n_{>0}$ and given by
\begin{equation*}
\ell^{\textnormal{ext}}(p) \defeq \ell\(\frac{p}{\|p\|_1}   \), 
\end{equation*}
where $\|p\|_1=p_1+...+p_n$. For the following we simply denote $\ell^{\textnormal{ext}}$ by $\ell$. In~\cite[Proposition~29]{C-W_geometry} it is shown that there exists a map $\ell^{\diamond} \colon \R_{>0} \To \R^n_{\geq 0}$ such that
\begin{align*}
\ell(p) &= (\ell \circ \ell^{\diamond} \circ \ell)(p) \\
\ell^{\diamond}(x) &= (\ell^{\diamond} \circ \ell \circ \ell^{\diamond})(x),
\end{align*}
for all $x,p \in \R^n_{>0}$. The map $\ell^{\diamond}$ is called the \emph{anti-polar} loss of $\ell$. For the family of admissible loss function $\L$ considered in this work, we exploit the differentiability conditions to obtain in a straightforward way an inverse loss defined on $\ell(\interior(\Delta^n))$. To see this, suppose that $\ell \in \L$ is proper. Since this is equivalent to saying that $\spr(\ell)$ is in $C^2_+(\mathcal{K}_*^n)$, meaning that the map $\u_{\spr(\ell)}$ is  $C^1$ diffeomorphism. Then we can define the map $\ell^{-1} \colon \ell(\interior(\Delta^n)) \To \interior(\Delta^n)$ by
\begin{equation*}
\ell^{-1}(x) \defeq \frac{\u_{\pr \spr(\ell)}(x)}{\| \u_{\spr(\ell)}(x) \|_1},
\end{equation*}
which is the inverse of the map $\ell \colon \interior(\Delta^n) \To \ell( \interior(\Delta^n))$. Recall that $\u_{\pr \spr(\ell)}(x)$ is nothing else than the unit normal vector (pointing towards $\R^n_{\geq 0}$) at $x \in \ell(\interior(\Delta^n))$.

It is of interest of finding parametrizations (or links) that simplify the expression of a given proper loss $\ell$. At a theoretical level there are potentially many ways to to this. Notably we have at hand the notion of \emph{canonical link} in~\cite{W-V-R_composite} (or see Section~\ref{section-links} above for $n=2$). As an example of other ways to obtain nice links we have Lemma~\ref{lemma-graph-ell} above, which gives a nice expression in coordinates (as the form of a graph) of $\ell$. Unfortunately, to obtain that results one makes uses of the inverse function theorem which does not provide an explicit inverse but rather its existence.

\section{Conclusions}

We summarize the main messages of this work.

\begin{itemize}
\item Since mixable loss functions are of great importance in prediction games, it is desirable to understand them from different perspectives. Inspired by the work of Vovk~\cite{Vovk-logFund}, in Section~\ref{section-binary} we studied binary loss functions from the point of view of differential geometry, hence restricting to loss functions in $\L$ (Definition~\ref{def-admissible-loss-2}). To do this, we re-interpret properness as a geometric property, namely, a loss function $\ell \in \L$ is proper if and only if
\begin{itemize}
\item the normal vector (belonging to $\R^2_{\geq 0}$) to $M_{\ell}=\ell(\interior(\Delta^2))$ at $\ell(p)$ is $\frac{p}{|p|}$, for any $p \in \interior(\Delta^2)$, and
\item the loss curve $\ell(\interior(\Delta^2))$ has positive curvature (with respect to $\frac{p}{|p|}$).
\end{itemize} 

Having this framework at hand, we characterized mixability and fundamentality of a proper loss $\ell \in \L$, as a curvature comparison to the log loss $\ell_{\log}$ (cf. \cite{Vovk-logFund}). 
\item In Section~\ref{section-multiclass}, we extended the geometric characterization of proper loss functions to higher dimensions, and obtained the corresponding interpretation of mixability as a geometric comparison (now in terms of the principal curvatures of the ``loss surface''). This comparison is done by using the second fundamental forms of the ``loss surfaces''.
\item The main goal of Section~\ref{section-convexG} is to re-interpret the geometric results in Section~\ref{section-multiclass} from the point of view of convex geometry. The main result in this part is a new characterization of $\eta$-mixability of a proper loss function $\ell \in \L$, as $\spr(\eta \ell)$ sliding freely inside $\spr(\ell_{\log})$ (in general dimension). This provides an intuitive and geometric way to interpret mixability.
\item Since the results obtained in this work are in terms of curvature, it was necessary to re-interpret well known properties of loss functions in the language of differential geometry. Although this task might seem tedious at first, it is well worth it since it reconciles the results obtain by Vovk~\cite{Vovk-logFund} for $n=2$ and by van Erven, Reid and Williamson~\cite{E-R-W_curvature} for $n \geq 2$.
\item It is worth to point out the relation of this work with~\cite{E-R-W_curvature}. Specifically, the bridge between these to works established by Theorem~\ref{thm-bridge-to-ERW} connects our results to Theorem~10 in \cite{E-R-W_curvature} in the following way. In \cite[Theorem~10]{E-R-W_curvature} the following statements are proven to be equivalent:
\begin{enumerate}[(i)]
\item a proper loss $\ell \in L$ is $\eta$-mixable,
\item $\eta H\tcondL(t)-H\tcondL_{\log}(t)$ is positive semi-definite for all $t \in \Phi^{-1}_{\std}(\interior(\Delta^n))$, where $HF(t)$ denotes the Hessian of $F$ at $t$, 
\item $\eta \condL(p) - \condL_{\log}(p)$ is convex on $\interior(\Delta^n)$, and
\item  $\eta \tcondL(p) - \tcondL_{\log}(p)$ is convex on $\Phi_{\std}^{-1}(\interior(\Delta^n))$.
\end{enumerate}

There, they first proved the equivalence of (i) and (ii), which is the result of a long direct computation done very carefully. The equivalence between (iii) and (iv) is straightforward. To connect these two sets of equivalences, standard convex geometry is used to prove the equivalence of (ii) and (iii). Note that the statements (ii) and (iv) make reference to a precise choice of parametrization of $\Delta^n$ (i.e., the standard parametrization $\Phi_{\std}$), therefore, the work presented here is naturally not related to these statements but rather to (i) and (iii), whose equivalence can be considered to be the content of Sections~\ref{section-multiclass} and~\ref{section-convexG}. Determining whether this new approach provides a simplification of the computations in~\cite{E-R-W_curvature} or not, strongly depends on the differential geometry and convex geometry background of the reader. This work should be considered as complementing the understanding of mixable loss functions and providing a new geometric insight into them.
\end{itemize}

\bibliography{refs-mix}

\nopagebreak
\bibliographystyle{alpha}

\appendix

\section{Differential Geometry}\label{appendix-DG}

In this part we provide a brief summary of the concepts of differential geometry that are used in this work (we assume the reader has some familiarity with the topic although we try to put emphasis on the intuition). We do not intend to give a comprehensive introduction to the topic. Most of the material can be found in almost any differential geometry book, however, we recommend (and when possible use the notation of)~\cite{doCarmo-curves} and~\cite{Lee-RG}.

\subsection{Curvature of Curves}

A \emph{parametrized curve} is a differentiable map $\a \colon (a,b) \to \R^n$, ($a<b$). We are interested in studying the \emph{geometry} of parametrized curves. For this it would be useful to restrict our discussions to curves with a well defined tangent line at every point $\a(t)$ for $t \in (a,b)$ (i.e., with non-vanishing $\a'(t)$). These curves are called \emph{regular}. Let $\varphi \colon (a,b) \to (c,d)$ be a diffeomorphism, the curve $\b = \a(\varphi(s))$ is a \emph{reparametrization} of $\a$. Note that in this case $\a((a,b))=\b((c,d))$. The image $M=\a((a,b))$ is a 1-dimensional differentiable manifold in $\R^n$ (for this it is essential to restrict to regular curves). The study of curves is of particular importance since some aspects are carried to the study of the geometry of general hypersurfaces in $\R^n$.

Typically, curvature is defined for curves \emph{parametrized by arc-length} meaning that $|\b'(s)|=1$ for all $s \in (c,d)$ (and a regular curve can always be parametrized this way). For these types of curves, the \emph{curvature of $\b$ at $\b(s)$} is defined as the length of $\b''(s)$, which measures ``how much'' a curve ``curves''. However, this notion does not give information about the direction on which a curve is ``curving''. We start looking at the case $n=2$. We define the \emph{signed curvature} of a general curve $\a(t)=(x_1(t),x_2(t))$ by (cf. \eqref{eq-signed-curvature})
\begin{align*}
\k_{\a}(t) \defeq \frac{x_1'(t)x_2''(t)-x_1''(t)x_2(t)}{\( x_1'(t)^2 + x_2'(t)^2 \)^{3/2}}.
\end{align*}
It can be checked that $|\k(t)|$ coincides with the curvature of $\a$ when parametrized by arc-length (at the corresponding point), the signed curvature is well defined up to a sign (the sign will change if we consider a reparametrization that reverses the order of $(a,b)$, for example a curve defined on $(-b,-a)$ given by $\b(s)=\a(-s)$), which motivates the discussion in Section~\ref{subsection-curvature-curves}.

For example, suppose that a planar curver is defined by a function $f \colon (a,b) \To \R$ is the following way:
\begin{align*}
\a(t) = (t,f(t)),
\end{align*}
for $t \in (a,b)$. A quick computation gives
\begin{align} \label{eq-curv-graph}
\k_{\a}(t) = \frac{f''(t)}{\(1 + f'(t)^2 \)^{3/2}}.
\end{align}

Given a regular curve $\a \colon (a,b) \To \R^3$ as above and a real number $\eta \neq 0$, it is straightforward to see that the curve $\b(t)=\eta \a(t)$ is also a regular curve and its signed curvature is given by
\begin{align*}
\k_{\b}(t) &= \frac{\eta^2 x_1'(t)x_2''(t)- \eta^2 x_1''(t)x_2(t)}{\( \eta^2x_1'(t)^2 + \eta^2x_2'(t)^2 \)^{3/2}}=\frac{1}{\eta} \k_{\a}(t).
\end{align*}

The notion of signed curvature can be extended to curves in manifolds sitting inside $\R^n$ (see for example~\cite[Chapter~8]{Lee-RG}). For $\a (-\veps,\veps) \To \R^n$ parametrized by arc-length, the \emph{signed curvature} (with respect to $\bn$) $\k^+_{\a}$ of $\a$ at $p=\a(0)$ is given by $\k^+_{\a} (0)= \la \bn,\a''(0) \ra$. It can be shown that this definition agrees with the one we gave for $n=2$.

\subsection{Geometry of hypersurfaces in $\R^n$}

Let $M$ be a differentiable hypersurface inside $\R^n$ of class $C^k$ (i.e., a $n-1$-dimensional $C^k$ manifold). By this we mean that for each $p \in M$ there is an open set $U \subset \R^{n-1}$ and a $C^k$ injective map $\Phi \colon U \To M$ (called a \emph{parametrization of $M$ around $p$}). For each $x \in U$, $\{ \pr_1 \Phi(x),...,\pr_{n-1}\Phi(x)\}$ forms a basis for the tangent space $T_qM$ ($q=\Phi(x)$) to $M$ at $q$. Since $\Phi(U) \subset \R^n$ we can consider the induced metric on $M$ by the Euclidean metric in $\R^n$ (denoted by $\la \cdot, \cdot \ra$). This is a Riemannian metric on $M$ given on the coordinates given by $\Phi$ by the matrix
\begin{align*}
g_{ij}(x) = \la \pr_i \Phi(x) , \pr_j \Phi(x) \ra,
\end{align*}
for $x \in U$. The metric $g$ allows us to define the length of a curves in $M$.

In general, if a manifold $M$ of dimension $n-1$ is sitting inside an $n$-dimensional Riemannian manifold $\ovr{M}$ (and $M$ is endowed with the induced metric from $M$) the second fundamental form carries the information on how $M$ is ``curved'' inside $\ovr{M}$. Let $\ovr{g}$ be the metric on $\ovr{M}$ and $g$ the induced metric on $M$ by $\ovr{g}$. Let $\ovr{\nabla}$ denote the Levi--Civita connection of $\ovr{g}$. Let $\bn$ be a smooth unit normal vector field to $M$ (that is $\bn(p)$ is perpendicular to $T_pM$ for each $p \in M$). The \emph{scalar second fundamental form of $M$ with respect to $\bn$} is the covariant 2-tensor $h$ on $M$ defined as
\begin{align}\label{eq-II}
h(X,Y) = \la \bn, \ovr{\nabla}_X Y \ra = -\la \ovr{\nabla}_X \bn, Y \ra.
\end{align}
for $X,Y$ tangent vectors to $M$. Note that for a hypersurface, at each point we have exactly to unit normal vectors to $M$ at $p$, thus the scalar second fundamental form is well-defined up to a sign. Fixing a point $p \in M$ and an orthonormal basis $\{ E_1,...,E_{n-1}\}$ for the tangent space at $p$ $T_pM$, the eigenvalues of the matrix given by $h_{ij} = h(E_i,E_j)$ for $i,j=1,...,n-1$ are called the \emph{principal curvatures} of $M$ at $p$ and the corresponding eigenspaces are called the \emph{principal directions}. For details of the above see Chapter~8 in~\cite{Lee-RG}.

When $\ovr{M}=\R^n$ and $M$ is parametrized by $\Phi \colon U \subset \R^{n-1} \To M \subset \R^n$, with respect to the local frame $\{ \pr_1 \Phi,...,\pr_{n-1} \Phi \}$ of $\Phi(U)$, the scalar second fundamental form with respect to a normal unit vector field $\bn$ is given by (\cite[Proposition 8.23]{Lee-RG})
\begin{align}\label{eq-sff-graph}
h_{ij}=h(\pr_i \Phi,\pr_j \Phi) = \la \pr_{ij}\Phi, \bn \ra,
\end{align}
for $i,j=1,...,n-1$.

Given any $p \in M$ and $v \in T_pM$, there a geodesic $\g_V \colon (a,b) \To M$ of $M$ passing through $p$ with velocity $v$ at $p$.
Let $M_1$ and $M_2$ be two hypersurfaces in $\R^{n+1}$ tangent at a point $p \in M_1 \cap M_2$. Choose a normal vector $\bn$ and suppose that $M_1$ lies above $M_2$ (with respect to $\bn$). We have the following lemma from~\cite{Lee-RG}.

With the previous lemma we can obtain a comparison result for manifolds with positive principal curvatures.

\begin{lemma}
Suppose that $M_1$ and $M_2$ are tangent at $p \in M_1 \cap M_2$ and fix a normal vector $\bn$ at $p$. Suppose that $M_1$ and $M_2$ have positive principal curvatures at $p$. Then $h_1(v,v) \geq h_2(v,v) $ for all $v  \in T_pM$ if and only if $M_1$ lies above $M_2$ (with respect to $\bn$) locally around $p$.
\end{lemma}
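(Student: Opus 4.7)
The plan is to reduce the statement to a Hessian comparison for convex graphs over the common tangent space. Since both $M_1$ and $M_2$ have positive principal curvatures at $p$, each is locally strictly convex and the common tangent hyperplane $H\defeq T_pM_1=T_pM_2$ meets each $M_i$ only at $p$. I would fix Euclidean coordinates $(x,z)\in\R^{n}\times\R$ on $\R^{n+1}$ placing $p$ at the origin with $H=\{z=0\}$ and $\bn=\partial_z$, so that the implicit function theorem writes each $M_i$ locally as $\graph(f_i)$ on a common neighborhood $U$ of $0\in H$, for $C^2$ functions $f_i\colon U\to\R$ with $f_i(0)=0$, $\nabla f_i(0)=0$, and $D^2 f_i(0)$ positive definite. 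Evaluating \eqref{eq-sff-graph} at $x=0$ (where the normalization $\sqrt{1+|Df_i|^2}=1$) identifies the scalar second fundamental form with the Hessian,
\begin{equation*}
h_i(v,v) \;=\; v^{T} D^2 f_i(0)\, v \qquad \text{for all } v\in H\cong T_pM,
\end{equation*}
so the curvature hypothesis becomes the symmetric matrix inequality $D^2 f_1(0)\geq D^2 f_2(0)$, and ``$M_1$ lies above $M_2$ locally around $p$'' becomes $f_1\geq f_2$ on some neighborhood of $0$.

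The direction ``locally above $\Rightarrow$ $h_1\geq h_2$'' is then immediate, and is precisely the ambient comparison lemma from \cite{Lee-RG} recalled just before the statement: the difference $g\defeq f_1-f_2$ has a local minimum at $0$ with $g(0)=0$ and $\nabla g(0)=0$, which forces $D^2 g(0)\geq 0$. For the converse, the starting point is the second-order Taylor expansion
\begin{equation*}
g(x) \;=\; \tfrac12\, x^{T}\!\bigl(D^2 f_1(0)-D^2 f_2(0)\bigr)\, x + o(|x|^{2}).
\end{equation*}
When $D^2 f_1(0)-D^2 f_2(0)$ is strictly positive definite, its least eigenvalue $\mu>0$ dominates the error and yields $g(x)\geq \tfrac{\mu}{2}|x|^{2}+o(|x|^{2})\geq 0$ for $|x|$ sufficiently small, which is the desired local domination.

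The expected main obstacle is the degenerate case in which $D^2 f_1(0)-D^2 f_2(0)$ is only positive semi-definite with a nontrivial kernel, since along a null direction both graphs share their 2-jet and the bare Taylor estimate no longer controls the sign of $g$. The natural workaround I would try is a perturbation: for $\varepsilon>0$ replace $f_2$ by $f_2^{\varepsilon}(x)\defeq f_2(x)-\tfrac{\varepsilon}{2}|x|^{2}$, so that $D^2 f_1(0)-D^2 f_2^{\varepsilon}(0)$ is strictly positive definite; the nondegenerate case then gives $f_1\geq f_2^{\varepsilon}$ on a neighborhood $U_{\varepsilon}$, and one would pass to the limit $\varepsilon\to 0^{+}$, using uniform $C^{2}$ bounds on $f_1$ and $f_2$ on a fixed closed neighborhood of $0$ to prevent $U_{\varepsilon}$ from shrinking. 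The delicate point, and the one I would verify most carefully, is exactly the uniformity of $U_{\varepsilon}$ in $\varepsilon$; the strict positivity of the principal curvatures of both $M_1$ and $M_2$ is what makes this uniformity plausible, since it produces a fixed quadratic lower bound on $f_1$ and on $f_2$ on a fixed neighborhood that is insensitive to small perturbations of the Hessians.
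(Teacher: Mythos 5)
Your reduction to graphs over the common tangent hyperplane, the identification $h_i(v,v)=v^{T}D^2f_i(0)\,v$, the direction ``lies above $\Rightarrow h_1\geq h_2$'', and the strictly definite case of the converse are all correct (the paper instead slices $M_1,M_2$ by the $2$-plane spanned by $v$ and $\bn$ and compares signed curvatures of the resulting planar curves, but that is essentially the same computation). The genuine gap is in your treatment of the degenerate case, and the perturbation you propose cannot repair it: replacing $f_2$ by $f_2^{\varepsilon}=f_2-\tfrac{\varepsilon}{2}|x|^2$, the neighborhood $U_{\varepsilon}$ on which $f_1\geq f_2^{\varepsilon}$ is exactly the region where the uncontrolled $o(|x|^2)$ remainder of $f_1-f_2$ is dominated by $\tfrac{\varepsilon}{2}|x|^2$, and this region genuinely shrinks to $\{0\}$ as $\varepsilon\to 0^{+}$. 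Uniform $C^2$ bounds cannot prevent this, because a $C^2$ bound only controls the $2$-jet, which is precisely the data that $f_1$ and $f_2$ share along a null direction of $D^2f_1(0)-D^2f_2(0)$.

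In fact the implication you are trying to prove is false in the degenerate case, so no workaround exists: in $\R^3$ take the graphs of $f_1(x,y)=x^2+y^2$ and $f_2(x,y)=x^2+y^2+x^3$ near the origin. They are tangent at $0$, both have positive principal curvatures there, and $h_1=h_2$ at $0$ (so $h_1\geq h_2$), yet $f_1-f_2=-x^3$ changes sign in every neighborhood of $0$; with your perturbation one gets $f_1\geq f_2^{\varepsilon}$ only for $|x|\lesssim\varepsilon$, confirming the collapse of $U_{\varepsilon}$. So the converse direction requires either the strengthened pointwise hypothesis $h_1(v,v)>h_2(v,v)$ for all $v\neq 0$ (your nondegenerate case), or the inequality on a whole neighborhood together with a maximum-principle argument as in the two-dimensional comparison of Section 2.5 of the paper. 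You have in fact located a soft spot of the paper's own proof, which passes from ``$f_1''(0)\geq f_2''(0)$ in every direction'' to ``$M_1$ lies above $M_2$'' without further argument; your diagnosis of the obstacle is right, but the $\varepsilon$-limit does not close it.
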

\begin{proof}
First we make the following observation. Suppose that $M$ is a smooth hypersurface in $\R^n$ and we have a regular curve $\a \colon (-\veps,\veps) \To M$ such that $\a(0)=p$ and $\a'(0)=v$ for some $p \in M$ and $v \in T_pM$. Then, letting $h$ denote the second fundamental form of $M$ from~\eqref{eq-II} we have
\begin{align*}
h(v,v) &= - \la \ovr{\nabla}_v n,v \ra \\
&=- \la \ddt{ (\bn \circ \a)}{t}(t), \a'(t) \ra \bigg\vert_{t=0} \\
&=\la (\bn \circ \a)(t),\a''(t) \ra  \bigg\vert_{t=0} \\
&=\la \bn,\a''(0) \ra.
\end{align*}
Thus, if $\a$ is parametrized by arc-length, $h(v,v) = \la \bn, \a''(0) \ra = \k_{\a}^+(0)$.

Suppose $M_1$ lies above $M_2$ are tangent at $p$ and let $v \in T_pM_1=T_pM_2$ with $|v|=1$. Then we can intersect $M_1$ and $M_2$ with the plane generated by $v$ and $\bn$. Then we obtain two curves $\a_1$ and $\a_2$ on $M_1$ and $M_2$, respectively, such that $\a_i(0)=p$ and $\a'(0)=v$ for $i=1,2$. Moreover, we can assume that these curves are parametrized by arc-length so its Euclidean curvature is given by $\la \a''_i(0),\bn \ra$. Since we can regard these curves as planar curves, there are functions $f_1$ and $f_2$ such that the curves $\a_1$ and $\a_2$ are represented in the plane $\la v,\bn \ra$ by the curves
\begin{align*}
\g_1(x)&=(x,f_1(x)) \\
\g_2(x) &=(x,f_2(x)),
\end{align*}
with $f_i=(0)$, $f_i'(0)=v$, $f_i''(0)>0$ (since $M_1$ and $M_2$ have positive principal curvatures at $p$) for $i=1,2$.  By construction $\k_{\g_1}^+(0)=f_i''(0)$ and by definition $\k_{\g_i}^+(0) = \la \a_i''(0),\bn \ra$, for $i=1,2$.

If $M_1$ lies above $M_2$ at $p$, then $f_1''(0)>f_2''(0)$ and hence $\k_{\g_1}^+(0) \geq \k_{\g_2}^+(0)$, which is equivalent to $h_1(v,v) \geq h_2(v,v)$ for any $v \in T_pM$ with $|v|=1$. Let $w \neq 0 \in T_pM$ be arbitrary, then 
\begin{equation}\label{eq-comp-general}
h_1(w,w) = |w|^2 h_1\(\frac{w}{|w|},\frac{w}{|w|} \) \geq |w|^2 h_2\(\frac{w}{|w|},\frac{w}{|w|} \) =h_2(w,w),
\end{equation}
as claimed.

Conversely if~\eqref{eq-comp-general} holds, then we see that in particular holds for unitary $v$, which ultimately means that $f_1''(0) \geq f_2''(0)$ for all unitary $v \in T_pM$. This implies that $M_1$ lies above $M_2$.
\end{proof}

We present the following instructive example.
\begin{example} \label{ex-comparison}
Consider the differentiable function $f_{\k}(x,y)=\k(x^2+y^2)$ with $\k > 0$, and let $M_{\k}=\{ (x,y,f_{\k}(x,y)) \, | \, (x,y) \in B_{1}(0) \}$. We choose the parametrization $\Phi_{\k}(x)=(x,f_{\k}(x))$ of $M_{\k}$ and compute the scalar second fundamental form of $M_{\k}$ at $p=(0,0,0)$ in these coordinates. We have
\begin{align*}
\pr_x \Phi(x,y)&= (1,0,2\k x), \\
\pr_y \Phi(x,y)&= (0,1,2\k y), \\
\pr_{xx} \Phi(x,y) &=(0,0,2\k), \\
\pr_{xy} \Phi(x,y) &=(0,0,0), \\ 
\pr_{yy} \Phi(x,y) &=(0,0,2), \\
\end{align*}
thus from~\eqref{eq-sff-graph} at the point $\Phi_{\k}(0,0)=(0,0,0)$, the scalar second fundamental form of $M_{\k}$ with respect to $\bn=(0,0,1)$ is given by

\begin{align*}
[h_{f_\k}](p) = \left(
\begin{matrix}
2\k & 0 \\
0 & 2\k
\end{matrix}
\right),
\end{align*}
and in particular for $\k=1$ we have
\begin{align*}
[h_{f_1}](p) = \left(
\begin{matrix}
2 & 0 \\
0 & 2 
\end{matrix}
\right).
\end{align*}
Thus, clearly we have
\begin{align}\label{eq-2ndFF-comparison}
 [h_{f_\k}](0) - [h_{f_1}](0)  = \left(
\begin{matrix}
2\k-2 & 0 \\
0 & 2\k-2
\end{matrix}
\right)
\end{align}
which is positive definite if and only if $\k>1$ (when $M_{\k}$ lies inside $M_{1}$ and are tangent at $p$).
\end{example}

\begin{remark}\label{rmk-metric-coincides}
We stress a technical observation. The comparison~\eqref{eq-2ndFF-comparison} in Example~\ref{ex-comparison} is valid since regardless of the value of $\k$, $\pr_x \Phi_{\k}(0,0)$ and $\pr_y \Phi_{\k}(0,0)$ are the same, meaning that we can identify the tangent spaces to $M_{\k}$ and $M_{1}$ at $p$ for all $\k$, and the basis for them is given by $\{ \pr_x \Phi_{1}(0,0),\pr_y \Phi_{\k}(0,0) \}$. In general this is not necessarily the case so one should perform a change of basis before comparing the second fundamental forms.
\end{remark}

\end{document}